\newcommand{\expect}[1]{\mathbb{E}[\kern-0.15em[ #1 ]\kern-0.14em]}
\newtheorem{algorithm}{Algorithm}
\newenvironment{proof2}
{\par\noindent\textbf{Proof of Theorem \ref{analysis_approach}.}\ \enspace\ignorespaces\begin{allowdisplaybreaks}}
{\end{allowdisplaybreaks}\hspace{\stretch{1}}$\square$\\}
\begin{document}

\begin{frontmatter}

\title{Analyzing Evolutionary Optimization in Noisy Environments}
\author{Chao Qian}
\ead{qianc@lamda.nju.edu.cn}
\author{Yang Yu\corref{cor1}}
\ead{yuy@nju.edu.cn}
\author{Zhi-Hua Zhou}
\ead{zhouzh@nju.edu.cn}
\cortext[cor1]{Corresponding author}
\address{National Key Laboratory for Novel Software Technology\\
Nanjing University, Nanjing 210023, China}

\begin{abstract}
\begin{quote}
Many optimization tasks have to be handled in noisy environments, where we cannot obtain the exact evaluation of a solution but only a noisy one. For noisy optimization tasks, evolutionary algorithms (EAs), a kind of stochastic metaheuristic search algorithm, have been widely and successfully applied. Previous work mainly focuses on empirical studying and designing EAs for noisy optimization, while, the theoretical counterpart has been little investigated. In this paper, we investigate a largely ignored question, i.e., whether an optimization problem will always become harder for EAs in a noisy environment. We prove that the answer is negative, with respect to the measurement of the expected running time. The result implies that, for optimization tasks that have already been quite hard to solve, the noise may not have a negative effect, and the easier a task the more negatively affected by the noise. On a representative problem where the noise has a strong negative effect, we examine two commonly employed mechanisms in EAs dealing with noise, the \emph{re-evaluation} and the \emph{threshold selection} strategies. The analysis discloses that the two strategies, however, both are not effective, i.e., they do not make the EA more noise tolerant. We then find that a small modification of the threshold selection allows it to be proven as an effective strategy for dealing with the noise in the problem.
\end{quote}
\end{abstract}

\begin{keyword}
Noisy optimization \sep evolutionary algorithms \sep re-evaluation \sep threshold selection \sep running time \sep computational complexity
\end{keyword}

\end{frontmatter}

\newpage

\section{Introduction}

Optimization tasks often encounter noisy environments. For example, in airplane design, every prototype is evaluated by simulations so that the evaluation result may not be perfect due to the simulation error; and in machine learning, a prediction model is evaluated only on a limited amount of data so that the estimated performance is shifted from the true performance. Noisy environments could change the property of an optimization problem, thus traditional optimization techniques may have low efficacy. While, evolutionary algorithms (EAs) \cite{back:96} have been widely and successfully adopted for noisy optimization tasks \cite{freitas2003survey,ma2006evolutionary,chang2006new,chang2006automated}.

EAs are a kind of randomized metaheuristic optimization algorithms, inspired by natural phenomena including evolution of species, swarm cooperation, immune system, etc. EAs typically involve a cycle of three stages: reproduction stage produces new solutions based on the currently maintained solutions; evaluation stage evaluates the newly generated solutions; selection stage wipes out bad solutions. An inspiration of using EAs for noisy optimization is that the corresponding natural phenomena have been processed successfully in noisy environments, and hence the algorithmic simulations are also likely to be able to handle noise. Besides, improved mechanisms have been invented for better handling noise. Two representative strategies are \emph{re-evaluation} and \emph{threshold selection}: by the re-evaluation strategy \cite{jin2005evolutionary,goh2007investigation}, whenever the fitness (also called cost or objective value) of a solution is required, EAs make an independent evaluation of the solution despite of whether the solution has been evaluated before, such that the fitness is smoothed; by the threshold selection strategy \cite{markon2001thresholding,beielstein2002threshold,bartz2005new}, in the selection stage EAs accept a newly generated solution only if its fitness is larger than the fitness of the old solution by at least a threshold, such that the risk of accepting a bad solution due to noise is reduced.


An assumption implied by using a noise handling mechanism in EAs is that the noise makes the optimization harder, so that a better handling mechanism can reduce the negative effect by the noise \cite{fitzpatrick1988genetic,beyer2000evolutionary,rudolph2001partial,arnold2003comparison}. This paper firstly investigates if this assumption is true. We start by presenting an experimental evidence using (1+1)-EA optimizing the hardest case in the pseudo-Boolean function class \cite{qian2012algorithm}. Experiment results indicate that the noise, however, makes the optimization easier rather than harder, under the measurement of expected running time.

Following the experiment evidence, we then derive sufficient theoretical conditions, under which the noise will make the optimization easier or harder. By filling the conditions, we present proofs that, for the (1+$\lambda$)-EA (a class of EAs employing offspring population size $\lambda$), the noise will make the optimization easier on the hardest case in the pseudo-Boolean function class, while harder on the easiest case. The proofs imply that we need to take care of the noise only when the optimization is moderately or less complex, and ignore this issue when the optimization task itself is quite hard.

For the situations where the noise needs to be cared, this paper examines the re-evaluation and the threshold selection strategies for their \emph{polynomial noise tolerance} (PNT). For a kind of noise, the PNT of an EA is the maximum noise level such that the expected running time of the algorithm is polynomial. The closer the PNT is to 1, the better the noise tolerance is. Taking the easiest pseudo-Boolean function case as the representative problem, we analyze the PNT for different configurations of the (1+1)-EA with respect to the one-bit noise, whose level is characterized by the noise probability. For the (1+1)-EA (without any noise handling strategy), we prove that the PNT has a lower bound $1-\frac{1}{\Omega(poly(n))}$ and an upper bound $1-\frac{1}{O(2^npoly(n))}$. Since the (1+1)-EA with re-evaluation has the PNT $\Theta(\frac{\log n}{n})$ \cite{droste2004analysis}, it is surprisingly that the re-evaluation makes the PNT much worse. We further prove that for the (1+1)-EA with re-evaluation using threshold selection, when the threshold is 1, the PNT is not less than $\frac{1}{2e}$, and when the threshold is 2, the PNT has a lower bound $1-\frac{1}{\Omega(poly(n))}$ and an upper bound $1-\frac{1}{O(2^npoly(n))}$. The PNT bounds indicate that threshold selection improves the re-evaluation strategy, however, no improvements from the (1+1)-EA are found. We then introduce a small modification into the threshold selection strategy to turn the original hard threshold to be a smooth threshold. We prove that with the smooth threshold selection strategy the PNT is $1$, i.e., the (1+1)-EA is always a polynomial algorithm disregard the probability of one-bit noise on the problem.

The rest of this paper is organized as follows. Section 2 introduces some background. Section 3 shows that the noise may not always be bad, and presents a sufficient condition for that. Section 4 analyzes noise handling strategies. Section 5 concludes.

\section{Background}

\subsection{Noisy Optimization}

A general optimization problem can be represented as $
\arg\max\nolimits_{x} f(x)$, where the objective $f$ is also called fitness in the context of evolutionary computation. In real-world optimization tasks, the fitness evaluation for a solution is usually disturbed by noise, and consequently we can not obtain the exact fitness value but only a noisy one. In this paper, we will involve the following kinds of noise, and we will always denote $f^N(x)$ and $f(x)$ as the noisy and true fitness of a solution $x$, respectively.
\begin{description}
    \item[additive noise] $f^N(x)=f(x)+ \delta$, where $\delta$ is uniformly selected from $[\delta_1,\delta_2]$ at random.
    \item[multiplicative noise] $f^N(x)=f(x)\cdot \delta$, where $\delta$ is uniformly selected from $[\delta_1,\delta_2]$ at random.
    \item[one-bit noise] $f^N(x)=f(x)$ with probability $(1-p_n)$ $(0\leq p_n \leq 1)$; otherwise, $f^N(x)=f(x')$, where $x'$ is generated by flipping a uniformly randomly chosen bit of $x \in \{0,1\}^n$. This noise is for problems where solutions are represented in binary strings.
\end{description}
Additive and multiplicative noise has been often used for analyzing the effect of noise \cite{beyer2000evolutionary,jin2005evolutionary}. One-bit noise is specifically for optimizing pseudo-Boolean problems over $\{0,1\}^n$, and also the investigated noise in the only previous work for analyzing running time of EAs in noisy optimization \cite{droste2004analysis}. For one-bit noise, $p_n$ controls the noise level. In this paper we assume that the parameters of the environment (i.e., $p_n$, $\delta_1$ and $\delta_2$) do not change over time. 

It is possible that a large noise could make an optimization problem extremely hard for particular algorithms. We are interested in the noise level, under which an algorithm could be ``tolerant'' to have polynomial running time. We define the polynomial noise tolerance (PNT) as Definition \ref{PNT}, which characterizes the maximum noise level for allowing a polynomial expected running time. Note that, the noise level can be measured by the adjusting parameter, e.g., $\delta_1, \delta_2$ for the additive and multiplicative noise, and $p_n$ for the one-bit noise. We will study the PNT of EAs for analyzing the effectiveness of noise handling strategies.

\begin{Def}[Polynomial Noise Tolerance (PNT)]\label{PNT}
The polynomial noise tolerance of an algorithm on a problem, with respect to a kind of noise, is the maximum noise level such that the algorithm has expected running time polynomial to the problem size.
\end{Def}

\subsection{Evolutionary Algorithms}

Evolutionary algorithms (EAs) \cite{back:96} are a kind of population-based metaheuristic optimization algorithms. Although there exist many variants, the common procedure of EAs can be described as follows:\vspace{0.5em}\\
1. \; Generate an initial set of solutions (called population);\\
2. \; Reproduce new solutions from the current population;\\
3. \; Evaluate the newly generated solutions;\\
4. \; Update the population by removing bad solutions;\\
5. \; Repeat steps 2-5 until some criterion is met.\vspace{0.5em}

The (1+1)-EA, as in Algorithm \ref{(1+1)-EA}, is a simple EA for maximizing pseudo-Boolean problems over $\{0,1\}^n$, which reflects the common structure of EAs. It maintains only one solution, and repeatedly improves the current solution by using bit-wise mutation (i.e., the 3rd step of Algorithm \ref{(1+1)-EA}). It has been widely used for the running time analysis of EAs, e.g., \cite{YaoAI01,droste2002analysis}.

\begin{algorithm}[(1+1)-EA]\label{(1+1)-EA} Given pseudo-Boolean function $f$ with solution length $n$, it consists of the following steps:\\
    \begin{tabular}{ll}
    1. & $x:=$ randomly selected from $\{0,1\}^{n}$.\\
    2. & Repeat until the termination condition is met\\
    3. & \quad $x':=$ flip each bit of $x$ with probability $p$. \\
    4. &\quad if {$f(x') \geq f(x)$} \\
    5. &\quad \quad $x:=x'$.
    \end{tabular}\\
where $p \in (0,0.5)$ is the mutation probability.
\end{algorithm}

The (1+$\lambda$)-EA, as in Algorithm \ref{(1+lambda)-EA}, applies an offspring population size $\lambda$. In each iteration, it first generates $\lambda$ offspring solutions by independently mutating the current solution $\lambda$ times, and then selects the best solution from the current solution and the offspring solutions as the next solution. It has been used to disclose the effect of offspring population size by running time analysis \cite{jansen2005choice,neumann2007randomized}. Note that, (1+1)-EA is a special case of (1+$\lambda$)-EA with $\lambda=1$.

\begin{algorithm}[(1+$\lambda$)-EA]\label{(1+lambda)-EA} Given pseudo-Boolean function $f$ with solution length $n$, it consists of the following steps:\\
    \begin{tabular}{ll}
    1. & $x:=$ randomly selected from $\{0,1\}^{n}$.\\
    2. & Repeat until the termination condition is met\\
    3. & \quad $i:=1$.\\
    4. & \quad Repeat until $i>\lambda$. \\
    5. & \quad \quad $x_i:=$ flip each bit of $x$ with probability $p$. \\
    6. & \quad \quad $i:=i+1$. \\
    7. &\quad $x=\arg\max_{x'\in\{x,x_1,\ldots,x_{\lambda}\}} f(x').$
    \end{tabular}\\
where $p \in (0,0.5)$ is the mutation probability.
\end{algorithm}

The running time of EAs is usually defined as the number of fitness evaluations (i.e., computing $f(\cdot)$) until an optimal solution is found for the first time, since the fitness evaluation is the computational process with the highest cost of the algorithm \cite{YaoAI01,Yu:Zhou:08}.

\subsection{Markov Chain Modeling}

We will analyze EAs by modeling them as Markov chains in this paper. Here, we first give some preliminaries.

EAs generate solutions only based on their currently maintained solutions, thus, they can be modeled and analyzed as Markov chains, e.g., \cite{YaoAI01,Yu:Zhou:08}. A Markov chain $\{\xi_t\}^{+\infty}_{t=0}$ modeling an EA is constructed by taking the EA's population space $\mathcal{X}$ as the chain's state space, i.e. $\xi_t \in \mathcal{X}$. Let $\mathcal{X}^* \subset \mathcal{X}$ denote the set of all optimal populations, which contains at least one optimal solution. The goal of the EA is to reach $\mathcal{X}^*$ from an initial population. Thus, the process of an EA seeking $\mathcal{X}^*$ can be analyzed by studying the corresponding Markov chain.

A Markov chain $\{\xi_t\}_{t=0}^{+\infty}$ $(\xi_t \in \mathcal{X})$ is a random process, where $\forall t \geq 0$, $\xi_{t+1}$ depends only on $\xi_t$. A Markov chain $\{\xi_t\}^{+\infty}_{t=0}$ is said to be homogeneous, if $\forall t \geq 0,\forall x,y \in \mathcal{X}$:
\begin{equation}
\begin{aligned}\label{homogeneous}
&P(\xi_{t+1}=y|\xi_t=x)=P(\xi_1=y|\xi_0=x).
\end{aligned}
\end{equation}
In this paper, we always denote $\mathcal{X}$ and $\mathcal{X}^*$ as the state space and the optimal state space of a Markov chain, respectively.

Given a Markov chain $\{\xi_t\}^{+\infty}_{t=0}$ and $\xi_{\hat{t}}=x$, we define the first hitting time (FHT) of the chain as a random variable $\tau$ such that $\tau=\min\{t|\xi_{\hat{t}+t} \in \mathcal{X}^*,t\geq0\}$. That is, $\tau$ is the number of steps needed to reach the optimal state space for the first time starting from $\xi_{\hat{t}}=x$. The mathematical expectation of
$\tau$, $\expect{\tau | \xi_{\hat{t}}=x}=\sum\nolimits^{\infty}_{i=0} iP(\tau=i)$, is called the expected first hitting time (EFHT) of this chain starting from $\xi_{\hat{t}}=x$. If $\xi_{0}$ is drawn from a distribution $\pi_{0}$, $\expect{\tau | \xi_{0}\sim \pi_0} = \sum\nolimits_{x\in \mathcal{X}} \pi_{0}(x)\expect{\tau | \xi_{0}=x}$ is called the expected first hitting time of the Markov chain over the initial distribution $\pi_0$.

For the corresponding EA, the running time is the numbers of calls to the fitness function until meeting an optimal solution for the first time. Thus, the \emph{expected running time} starting from $\xi_0$ and that starting from $\xi_0 \sim \pi_0$ are respectively equal to
\begin{equation}
\begin{aligned}\label{runtime}
N_1+N_2\cdot \expect{\tau | \xi_{0}} && \text{and} && N_1+N_2\cdot \expect{\tau | \xi_{0} \sim \pi_0},
\end{aligned}
\end{equation}
where $N_1$ and $N_2$ are the number of fitness evaluations for the initial population and each iteration, respectively. For example, for (1+1)-EA, $N_1=1$ and $N_2=1$; for (1+$\lambda$)-EA, $N_1=1$ and $N_2=\lambda$. Note that, when involving the expected running time of an EA on a problem in this paper, if the initial population is not specified, it is the expected running time starting from a uniform initial distribution $\pi_u$, i.e., $N_1+N_2 \cdot \expect{\tau | \xi_{0} \sim \pi_u}=N_1+N_2 \cdot\sum\nolimits_{x\in \mathcal{X}} \frac{1}{|\mathcal{X}|}\expect{\tau | \xi_{0}=x}$.

The following two lemmas on the EFHT of Markov chains \cite{Freidlin:97} will be used in this paper.

\begin{lemma}\label{lem_onestep}
        Given a Markov chain $\{\xi_t\}^{+\infty}_{t=0}$, we have
        \begin{equation}
        \begin{aligned}
        &\forall x \in \mathcal{X}^*: \expect{\tau | \xi_t=x}=0; \\
        &\forall x\notin \mathcal{X}^*: \expect{\tau | \xi_t=x}=1+\sum\nolimits_{y\in \mathcal{X}} P(\xi_{t+1}=y | \xi_t=x)\expect{\tau | \xi_{t+1}=y}.
        \end{aligned}
        \end{equation}
\end{lemma}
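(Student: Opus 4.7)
The plan is to split the lemma into its two claims and handle them in order. For $x \in \mathcal{X}^*$, the identity $\expect{\tau \mid \xi_t = x} = 0$ falls out immediately from the definition of the first hitting time, since $\tau = \min\{t' \geq 0 : \xi_{t+t'} \in \mathcal{X}^*\}$ attains the value $0$ as soon as $\xi_t$ is already in $\mathcal{X}^*$, so $\tau = 0$ almost surely on the conditioning event and the expectation vanishes.

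For $x \notin \mathcal{X}^*$, the main tool will be the law of total expectation, applied by conditioning on the next state $\xi_{t+1}$. Because $\xi_t = x$ lies outside $\mathcal{X}^*$, the hitting time measured from time $t$ is at least one step, so one can write $\tau = 1 + \tau'$, where $\tau'$ is the first hitting time measured from time $t+1$. The Markov property then gives $\expect{\tau' \mid \xi_t = x, \xi_{t+1} = y} = \expect{\tau \mid \xi_{t+1} = y}$, because the future trajectory of the chain after $\xi_{t+1}$ is conditionally independent of $\xi_t$. Summing over $y \in \mathcal{X}$ weighted by the one-step transition probabilities $P(\xi_{t+1} = y \mid \xi_t = x)$, and pulling the constant $1$ out of the sum using the fact that transition probabilities sum to one, yields the stated recurrence.

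The argument is a standard one-step analysis, and I do not expect any serious obstacle; it is essentially a rewriting of the tower property combined with the Markov property. The only point that requires a touch of care is that the symbol $\tau$ on the left- and right-hand sides of the recurrence denotes first hitting times measured from different time origins ($t$ on the left and $t+1$ on the right), so this bookkeeping of the reference time must be kept consistent throughout. Note also that homogeneity of the chain is not needed for the argument; only the Markov property is invoked.
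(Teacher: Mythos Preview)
Your proposal is correct and follows the standard one-step analysis for expected first hitting times. Note, however, that the paper does not supply its own proof of this lemma: it is stated as a known result with a citation to \cite{Freidlin:97}, so there is no original argument in the paper to compare against. Your write-up is exactly the usual derivation one would expect, and your remark that homogeneity is unnecessary (only the Markov property is used) is accurate and worth keeping.
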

\begin{lemma}\label{lem_homo}
        Given a homogeneous Markov chain $\{\xi_t\}^{+\infty}_{t=0}$, it holds
        $$\forall t_1, t_2 \geq 0, x \in \mathcal{X}: \expect{\tau |\xi_{t_1}=x} = \expect{\tau|  \xi_{t_2}=x}.$$
\end{lemma}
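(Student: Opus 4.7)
The plan is to establish a multi-step generalization of the one-step homogeneity identity \eqref{homogeneous}, and then read the desired equality off from the path decomposition of the first hitting time. First I would prove, by induction on $k \geq 1$, that for every $t \geq 0$, every $x \in \mathcal{X}$, and every $y_1, \ldots, y_k \in \mathcal{X}$,
\[P(\xi_{t+1}=y_1, \ldots, \xi_{t+k}=y_k | \xi_t = x) = P(\xi_1=y_1, \ldots, \xi_k=y_k | \xi_0 = x).\]
The base case $k=1$ is exactly \eqref{homogeneous}. For the inductive step, I would factor the joint probability via the Markov property as $P(\xi_{t+k+1}=y_{k+1} | \xi_{t+k}=y_k)$ times the length-$k$ joint probability, and then apply the inductive hypothesis together with \eqref{homogeneous} to the two factors.

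With this multi-step homogeneity in hand, I would write the event $\{\tau = i\}$ starting from $\xi_{t_1}=x$ as the disjoint union, over sequences $y_1, \ldots, y_{i-1} \notin \mathcal{X}^*$ and $y_i \in \mathcal{X}^*$, of the path events $\{\xi_{t_1+1}=y_1, \ldots, \xi_{t_1+i}=y_i\}$. Each summand's conditional probability given $\xi_{t_1}=x$ equals the corresponding probability given $\xi_{t_2}=x$ by the multi-step identity, so $P(\tau=i | \xi_{t_1}=x) = P(\tau=i | \xi_{t_2}=x)$ for every $i \geq 0$; multiplying by $i$ and summing over $i$ then yields $\expect{\tau | \xi_{t_1}=x} = \expect{\tau | \xi_{t_2}=x}$.

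There is no substantive obstacle here: the lemma is essentially a bookkeeping consequence of the definition of homogeneity. The one nuance to handle carefully is confirming that the hitting-time event $\{\tau=i\}$ depends only on the finite path $(\xi_{t_1+1}, \ldots, \xi_{t_1+i})$, which is immediate from the definition of $\tau$, so that the multi-step homogeneity can be invoked path by path.
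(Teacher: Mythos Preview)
Your argument is correct and is the standard way to prove this fact. Note, however, that the paper does not give its own proof of this lemma: it is stated together with Lemma~\ref{lem_onestep} as a known result on the EFHT of Markov chains, with a citation to the literature, and is used without proof. So there is nothing in the paper to compare your route against; your write-up would simply supply a proof where the paper chose to cite one.
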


For analyzing the EFHT of Markov chains, drift analysis \cite{YaoAI01,he2004study} is a commonly used tool, which will also be used in this paper. To use drift analysis, it needs to construct a function $V(x)\;(x \in \mathcal{X})$ to measure the distance of a state $x$ to the optimal state space $\mathcal{X}^*$. The distance function $V(x)$ satisfies that $V(x \in \mathcal{X}^*)=0$ and $V(x \notin \mathcal{X}^*)>0$. Then, by investigating the progress on the distance to $\mathcal{X}^*$ in each step, i.e., $\expect{V(\xi_t)-V(\xi_{t+1}) | \xi_t}$, an upper (lower) bound of the EFHT can be derived through dividing the initial distance by a lower (upper) bound of the progress.

\begin{lemma}[Drift Analysis \cite{YaoAI01,he2004study}]\label{drift}
Given a Markov chain $\{\xi_t\}^{+\infty}_{t=0}$ and a distance function $V(x)$, if it satisfies that for any $t \geq 0$ and any $\xi_t$ with $V(\xi_t) > 0$,
$$
0<c_l \leq \expect{V(\xi_t)-V(\xi_{t+1}) | \xi_t} \leq c_u,
$$
then the EFHT of this chain satisfies that
$$
V(\xi_0)/c_u \leq \expect{\tau | \xi_0} \leq V(\xi_0)/c_l,
$$
where $c_l,c_u$ are constants.
\end{lemma}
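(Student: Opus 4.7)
The plan is to combine Lemma~\ref{lem_onestep} with a telescoping argument along the trajectory of the chain. Intuitively, if $V$ decreases by at least $c_l$ per step in expectation whenever the optimum has not yet been reached, then after roughly $V(\xi_0)/c_l$ steps the distance is exhausted; conversely, since $V$ can decrease by at most $c_u$ per step in expectation, at least $V(\xi_0)/c_u$ steps are required.

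Concretely, I would fix a starting state $\xi_0$ and look at the stopped process $\xi_{t\wedge\tau}$. Setting $D_t = V(\xi_{t\wedge\tau}) - V(\xi_{(t+1)\wedge\tau})$, the drift hypothesis combined with the tower property gives
\[
c_l \cdot P(\tau > t) \;\leq\; \expect{D_t} \;\leq\; c_u \cdot P(\tau > t),
\]
because $D_t = 0$ on $\{\tau \leq t\}$, while on $\{\tau > t\}$ one has $V(\xi_t) > 0$ and the hypothesis bounds the conditional mean of $V(\xi_t) - V(\xi_{t+1})$ between $c_l$ and $c_u$. Summing from $t=0$ to $T-1$ telescopes the middle into $V(\xi_0) - \expect{V(\xi_{T\wedge\tau})}$, while the two bounds become $c_l \cdot \expect{\min(T,\tau)}$ and $c_u \cdot \expect{\min(T,\tau)}$.

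Next I would let $T \to \infty$. Since $V$ is non-negative and $V(\xi_\tau) = 0$ on $\{\tau < \infty\}$ (because $V$ vanishes on $\mathcal{X}^*$), monotone convergence of $\min(T,\tau) \uparrow \tau$ pushes the limit through, yielding
\[
c_l \cdot \expect{\tau \mid \xi_0} \;\leq\; V(\xi_0) \;\leq\; c_u \cdot \expect{\tau \mid \xi_0},
\]
which rearranges to the claim. The upper inequality $c_l \cdot \expect{\tau \mid \xi_0} \leq V(\xi_0)$ incidentally guarantees $\expect{\tau \mid \xi_0} < \infty$ and hence $P(\tau < \infty \mid \xi_0) = 1$, which legitimises the same passage to the limit on the other side.

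The main obstacle is not conceptual but technical: justifying the interchange of limit and expectation as $T \to \infty$, and ruling out mass on $\{\tau = \infty\}$. Both are handled by the non-negativity of $V$ and by the monotone convergence of $\min(T,\tau)$. A cleaner alternative route would be to unroll the recursion for $h(x) := \expect{\tau \mid \xi_0 = x}$ supplied by Lemma~\ref{lem_onestep} and compare $h$ against $V(x)/c_l$ and $V(x)/c_u$ by induction on the depth of unrolling, bypassing stopping-time technicalities at the cost of slightly messier bookkeeping.
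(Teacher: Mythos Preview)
The paper does not supply a proof of this lemma at all; it is quoted as a known tool from \cite{YaoAI01,he2004study} and then applied (e.g., in the proof of Theorem~\ref{analysis_approach}). So there is nothing to compare your argument against.

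Your telescoping/optional-stopping argument is the standard proof of additive drift and is fine for the upper bound $\expect{\tau\mid\xi_0}\leq V(\xi_0)/c_l$: from $c_l\,\expect{\min(T,\tau)}\leq V(\xi_0)-\expect{V(\xi_{T\wedge\tau})}\leq V(\xi_0)$ and monotone convergence you get the bound and, as you note, $\tau<\infty$ a.s. For the lower bound, however, your stated justification is not quite enough. You need $\expect{V(\xi_{T\wedge\tau})}\to 0$ as $T\to\infty$, and non-negativity of $V$ plus monotone convergence of $\min(T,\tau)$ does not deliver this: Fatou only gives $\liminf_T \expect{V(\xi_{T\wedge\tau})}\geq \expect{V(\xi_\tau)}=0$, which is the wrong direction. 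In the setting of this paper the state space $\mathcal{X}$ is finite, so $V$ is bounded and dominated convergence immediately closes the gap; just make that assumption explicit (or, equivalently, assume $V$ is bounded), and the argument is complete.
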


\subsection{Pseudo-Boolean Functions}

The pseudo-Boolean function class in Definition \ref{def_Boolean} is a large function class which only requires the solution space to be $\{0,1\}^n$ and the objective space to be $\mathbb{R}$. Many well-known NP-hard problems (e.g., the vertex cover problem and the 0-1 knapsack problem) belong to this class. Diverse pseudo-Boolean problems with different structures and difficulties have been used for analyzing the running time of EAs, and then to disclose properties of EAs, e.g., \cite{droste:jansen:wegener:98,YaoAI01,droste2002analysis}. Note that, we consider only maximization problems in this paper since minimizing $f$ is equivalent to maximizing $-f$.

\begin{definition}[Pseudo-Boolean Function]\label{def_Boolean}
    A function in the pseudo-Boolean function class has the form:
    $
        f:\{0,1\}^n \rightarrow \mathbb{R}.
    $
\end{definition}

I$_{hardest}$ (or called Trap) problem in Definition \ref{def_trap} is a special instance in this class, which is to maximize the number of 0 bits of a solution except the global optimum $11\ldots1$ (briefly denoted as $1^n$). Its optimal function value is $2n$, and the function value for any non-optimal solution is not larger than 0. It has been widely used in the theoretical analysis of EAs, and the expected running time of (1+1)-EA with mutation probability $\frac{1}{n}$ has been proved to be $\Theta(n^n)$ \cite{droste2002analysis}. It has also been recognized as the hardest instance in the pseudo-Boolean function class with a unique global optimum for the (1+1)-EA \cite{qian2012algorithm}.

\begin{definition}[I$_{hardest}$ Problem]\label{def_trap}
    I$_{hardest}$ Problem of size $n$ is to find an $n$ bits binary string $x^*$ such that
    $$
        x^*=\mathop{\arg\max}\nolimits_{x \in \{0,1\}^n} \big( f(x)=3n\prod\nolimits^n_{i=1}x_i -\sum\nolimits^{n}_{i=1} x_i\big),
    $$
    where $x_i$ is the $i$-th bit of a solution $x \in \{0,1\}^n$.
\end{definition}

I$_{easiest}$ (or called OneMax) problem in Definition \ref{def_onemax} is to maximize the number of 1 bits of a solution. The optimal solution is $1^n$, which has the maximal function value $n$. The running time of EAs has been well studied on this problem \cite{YaoAI01,droste2002analysis,sudholt2011new}. Particularly, the expected running time of (1+1)-EA with mutation probability $\frac{1}{n}$ on it has been proved to be $\Theta(n \log n)$ \cite{droste2002analysis}. It has also been recognized as the easiest instance in the pseudo-Boolean function class with a unique global optimum for the (1+1)-EA \cite{qian2012algorithm}.

\begin{definition}[I$_{easiest}$ Problem]\label{def_onemax}
    I$_{easiest}$ Problem of size $n$ is to find an $n$ bits binary string $x^*$ such that
    $$
        x^*=\mathop{\arg\max}\nolimits_{x \in \{0,1\}^n} \big( f(x)=\sum\nolimits^{n}_{i=1} x_i\big),
    $$
    where $x_i$ is the $i$-th bit of a solution $x \in \{0,1\}^n$.
\end{definition}

\section{Noise is Not Always Bad}

\subsection{Empirical Evidence}

It has been observed that noisy fitness evaluation can make an optimization harder for EAs, since it may make a bad solution have a ``better" fitness, and then mislead the search direction of EAs. Droste \cite{droste2004analysis} proved that the running time of (1+1)-EA can increase from polynomial to exponential due to the presence of noise. However, when studying the running time of (1+1)-EA solving the hardest case I$_{hardest}$ in the pseudo-Boolean function class, we have observed oppositely that noise can also make an optimization easier for EAs, which means that the presence of the noise decreases the running time of EAs for finding the optimal solution.

For I$_{hardest}$ problem over $\{0,1\}^n$, there are $2^n$ possible solutions, which are denoted by their corresponding integer values $0,1,\ldots,2^n-1$, respectively. Then, we estimate the expected running time of (1+1)-EA maximizing I$_{hardest}$ when starting from every solution. For each initial solution, we repeat independent runs for 1000 times, and then the average running time is recorded as an estimation of the expected running time (briefly called as ERT). We run (1+1)-EA without noise, with additive noise and with multiplicative noise, respectively. For the mutation probability of (1+1)-EA, we use the common setting $p=\frac{1}{n}$. For additive noise, $\delta_1=-n$ and $\delta_2=n$, and for multiplicative noise, $\delta_1=0.1$ and $\delta_2=10$. The results for $n=3,4,5$ are plotted in Figure \ref{fig_ERT_helpful1}. We can observe that the curves by these two kinds of noise are always under the curve without noise, which shows that I$_{hardest}$ problem becomes easier for (1+1)-EA in a noisy environment. Note that, the three curves meet at the last point, since the initial solution $2^n-1$ is the optimal solution and then ERT $=1$.

\begin{figure*}[t!]\centering
\begin{minipage}[c]{0.33\linewidth}\centering
        \includegraphics[width=0.8\linewidth,height=0.65\linewidth]{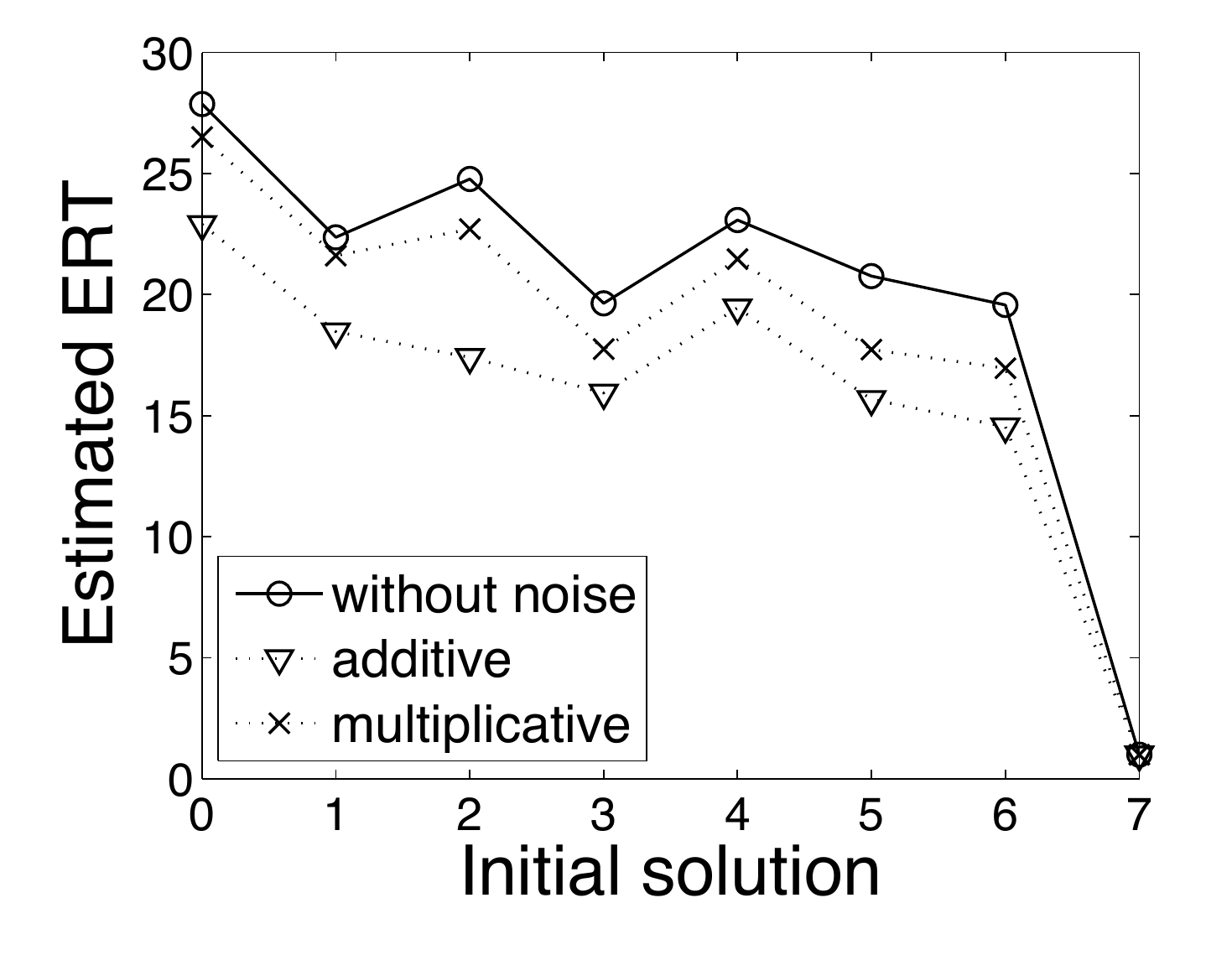}
\end{minipage}
\begin{minipage}[c]{0.33\linewidth}\centering
        \includegraphics[width=0.8\linewidth,height=0.65\linewidth]{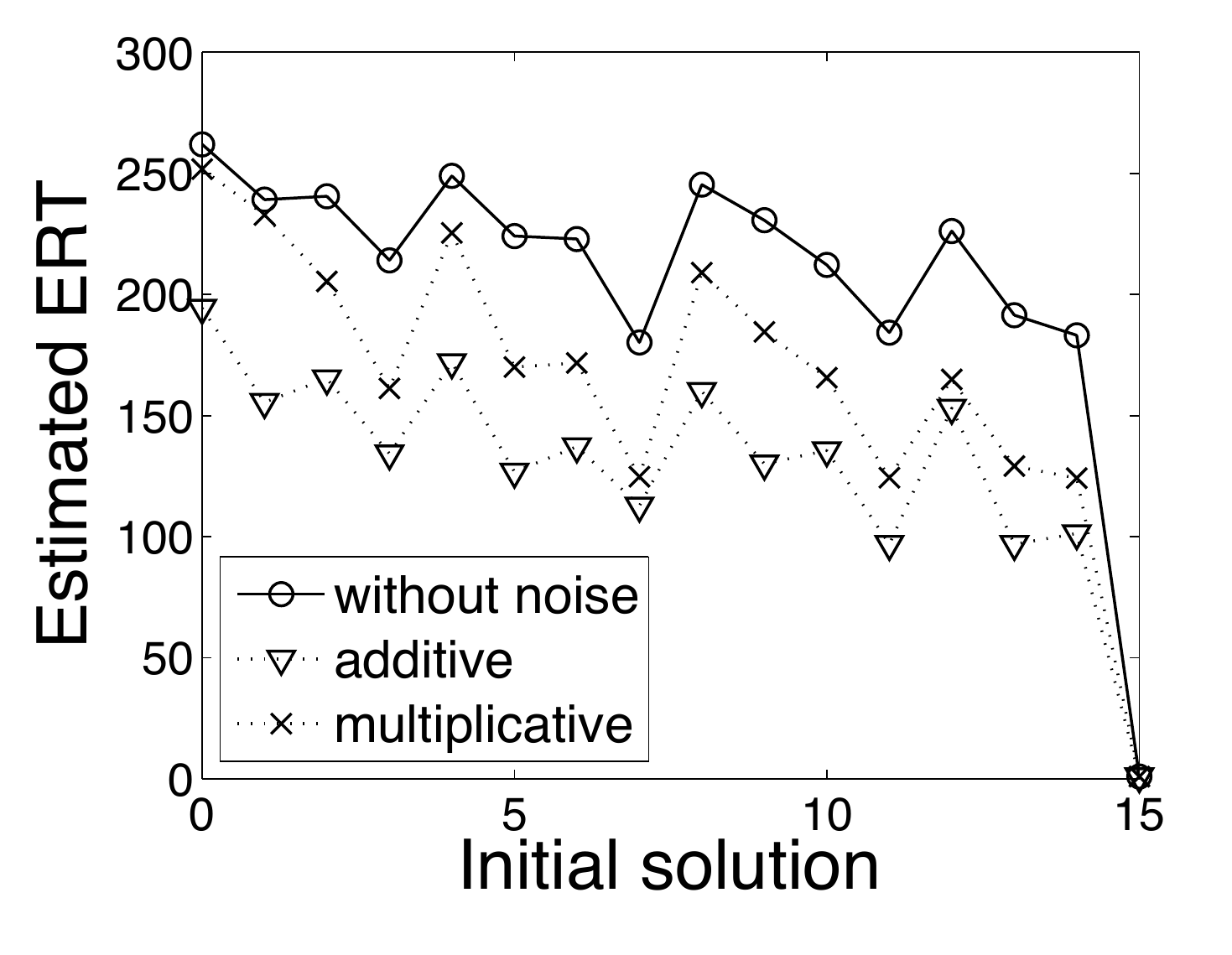}
\end{minipage}
\begin{minipage}[c]{0.33\linewidth}\centering
        \includegraphics[width=0.8\linewidth,height=0.65\linewidth]{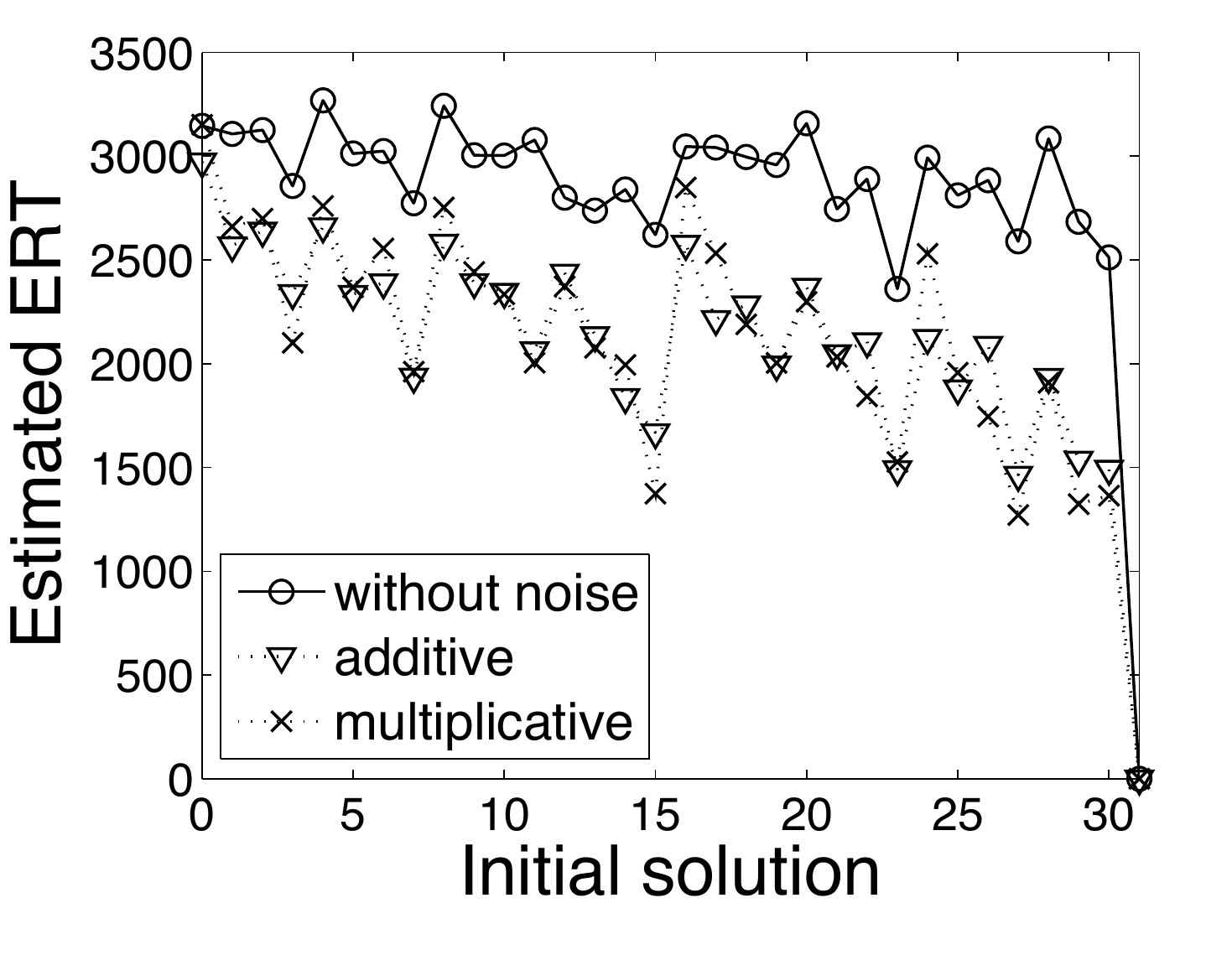}
\end{minipage}\\
\begin{minipage}[c]{0.33\linewidth}\centering
    \small(a) $n=3$
\end{minipage}
\begin{minipage}[c]{0.33\linewidth}\centering
    \small(b) $n=4$
\end{minipage}
\begin{minipage}[c]{0.33\linewidth}\centering
    \small(c) $n=5$
\end{minipage}\\\vspace{-0.6em}
\caption{Estimated ERT comparison for (1+1)-EA solving I$_{hardest}$ problem with or without noise.}\label{fig_ERT_helpful1}
\end{figure*}

\subsection{A Sufficient Condition}

In this section, by comparing the expected running time of EAs with and without noise, we derive a sufficient condition under which the noise will make an optimization easier for EAs.

Most practical EAs employ time-invariant operators, thus we can model an EA without noise by a homogeneous Markov chain. While for an EA with noise, since noise may change over time, we can just model it by a Markov chain. Note that, the two EAs with and without noise are different only on whether the fitness evaluation is disturbed by noise, thus, they must have the same values on $N_1$ and $N_2$ for their running time Eq.\refeq{runtime}. Then, comparing their expected running time is equivalent to comparing the EFHT of their corresponding Markov chains.

We first define a partition of the state space of a homogeneous Markov chain based on the EFHT, and then define a jumping probability of a Markov chain from one state to one state space in one step. It is easy to see that $\mathcal{X}_0$ in Definition \ref{def_partition} is just $\mathcal{X}^*$, since $\expect{\tau|\xi_0 \in \mathcal{X}^*}=0$.

\begin{definition}[EFHT-Partition]\label{def_partition}
For a homogeneous Markov chain $\{\xi_t\}^{+\infty}_{t=0}$, the EFHT-Partition is a partition of $\mathcal{X}$ into non-empty subspaces $\{\mathcal{X}_0,\mathcal{X}_1,\ldots,\mathcal{X}_m\}$ such that
\begin{equation}
\begin{aligned}
&(1) \quad \forall x,y \in \mathcal{X}_i, \expect{\tau|\xi_0=x}=\expect{\tau|\xi_0=y};\\
&(2) \quad \expect{\tau|\xi_0 \in \mathcal{X}_0}<\expect{\tau|\xi_0 \in \mathcal{X}_1}< \ldots < \expect{\tau|\xi_0 \in \mathcal{X}_m}.
\end{aligned}
\end{equation}
\end{definition}

\begin{definition}\label{def_jump}
For a Markov chain $\{\xi_t\}^{+\infty}_{t=0}$, $P^t_{\xi}(x,\mathcal{X}')=\sum_{y \in \mathcal{X}'} P(\xi_{t+1}=y|\xi_{t}=x)$ is the probability of jumping from state $x$ to state space $\mathcal{X}'\subseteq \mathcal{X}$ in one step at time $t$.
\end{definition}

\begin{theorem}\label{analysis_approach}
Given an EA $\mathcal{A}$ and a problem $f$, let a Markov chain $\{\xi_t\}^{+\infty}_{t=0}$ and a homogeneous Markov chain $\{\xi'_t\}^{+\infty}_{t=0}$ model $\mathcal{A}$ running on $f$ with noise and without noise respectively, and denote $\{\mathcal{X}_0,\mathcal{X}_1,\ldots,\mathcal{X}_m\}$ as the EFHT-Partition of $\{\xi'_t\}^{+\infty}_{t=0}$, if for all $t\geq 0$, $x \in \mathcal{X}-\mathcal{X}_0$, and for all integers $i\in [0,m-1]$,
\begin{equation}
\begin{aligned}\label{analysis_condition}
&\sum\nolimits^i_{j=0}P^t_{\xi}(x,\mathcal{X}_j) \geq \sum\nolimits^{i}_{j=0} P^t_{\xi'}(x,\mathcal{X}_j),
\end{aligned}
\end{equation}
then noise makes $f$ easier for $\mathcal{A}$, i.e., for all $x \in \mathcal{X}$, $$\expect{\tau | \xi_{0}=x} \leq \expect{\tau' | \xi'_{0}=x}.$$
\end{theorem}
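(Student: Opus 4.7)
The plan is to promote the noise-free EFHT into a potential (drift) function for the noisy chain and close the argument with Lemma \ref{drift}. Set $h(x):=\expect{\tau' | \xi'_0=x}$; by Lemma \ref{lem_homo} this quantity is well-defined (independent of the starting time), and by the defining property of the EFHT-Partition it is constant on each cell $\mathcal{X}_i$, equal to a value $h_i$, with $0=h_0<h_1<\cdots<h_m$. If I can show that for every $t\geq 0$ and every $x\notin\mathcal{X}_0=\mathcal{X}^*$,
\begin{equation*}
\expect{h(\xi_t)-h(\xi_{t+1}) | \xi_t=x}\;\geq\;1,
\end{equation*}
then Lemma \ref{drift} with distance $V:=h$ and $c_l=1$ (a legitimate constant upper bound $c_u=h_m$ is also available since $V\leq h_m$) immediately gives $\expect{\tau | \xi_0=x}\leq h(x)=\expect{\tau' | \xi'_0=x}$. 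For $x\in\mathcal{X}^*$ both sides vanish, so this drift bound is all that is required.

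The algebraic core is a summation-by-parts rewrite. For any probability distribution $Q$ on $\mathcal{X}$, setting $S_i(Q):=\sum_{j=0}^{i}Q(\mathcal{X}_j)$ (so $S_m(Q)=1$), I get
\begin{equation*}
\sum_{y\in\mathcal{X}} Q(\{y\})\, h(y)\;=\;\sum_{i=0}^{m} Q(\mathcal{X}_i)\, h_i\;=\;h_m-\sum_{i=0}^{m-1} S_i(Q)\,(h_{i+1}-h_i).
\end{equation*}
Applying this identity with $Q=P^t_\xi(x,\cdot)$ and with $Q=P^t_{\xi'}(x,\cdot)$, the hypothesis (\ref{analysis_condition}) — which says exactly $S_i(P^t_\xi(x,\cdot))\geq S_i(P^t_{\xi'}(x,\cdot))$ for every $i\in[0,m-1]$ — together with the strict positivity $h_{i+1}-h_i>0$ forces, term-by-term,
\begin{equation*}
\sum_{y} P^t_\xi(x,\{y\})\, h(y)\;\leq\;\sum_{y} P^t_{\xi'}(x,\{y\})\, h(y).
\end{equation*}
By homogeneity the right-hand side equals $\sum_{y} P^0_{\xi'}(x,\{y\})\, h(y)$, and Lemma \ref{lem_onestep} applied to the noise-free chain identifies this last expression with $h(x)-1$. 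Rearranging yields the required drift bound, and Lemma \ref{drift} finishes the argument.

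The main obstacle is recognizing \emph{why} the hypothesis (\ref{analysis_condition}) is phrased in partial-sum form at all. The Abel summation step is the bridge: it converts $\sum_i Q(\mathcal{X}_i)h_i$, which is linear in the individual cell probabilities, into a quantity that is linear in the partial sums $S_i(Q)$ with coefficients $-(h_{i+1}-h_i)$ of a definite sign. Without this rewrite, the cell-wise inequalities do not obviously compare the expected potentials $\expect{h(\xi_{t+1}) | \xi_t=x}$ and $\expect{h(\xi'_{t+1}) | \xi'_t=x}$. Minor technical points — finiteness of $h$ (automatic on the finite pseudo-Boolean state spaces in play) and the fact that Lemma \ref{drift} only requires a pointwise per-step drift bound, not homogeneity of the chain — are straightforward.
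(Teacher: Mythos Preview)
Your proof is correct and follows the same architecture as the paper: take $V(x)=\expect{\tau'|\xi'_0=x}$ as a potential, show a one-step drift of at least $1$ for the noisy chain, and conclude via Lemma~\ref{drift}. The only difference is local: where the paper invokes its auxiliary Lemma~\ref{lemma_analysis_condition} (proved separately by induction in the Appendix) to pass from the partial-sum hypothesis to $\sum_j P^t_\xi(x,\mathcal{X}_j)h_j\le\sum_j P^t_{\xi'}(x,\mathcal{X}_j)h_j$, you obtain the same inequality inline via the Abel summation identity $\sum_i Q(\mathcal{X}_i)h_i=h_m-\sum_{i=0}^{m-1}S_i(Q)(h_{i+1}-h_i)$. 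Your route is arguably cleaner --- it makes transparent why the condition is stated in partial-sum form and sidesteps the inductive case analysis --- but both arguments are doing the same job of exploiting first-order stochastic dominance against an increasing sequence.
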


The condition of this theorem (i.e., Eq.\ref{analysis_condition}) intuitively means that the presence of noise leads to a larger probability of jumping into good states (i.e., $\mathcal{X}_j$ with small $j$ values), starting from which the EA needs less time for finding the optimal solution. For the proof, we need the following lemma, which is proved in the appendix.

\begin{lemma}\label{lemma_analysis_condition}
Let $m\;(m \geq 1)$ be an integer. If it satisfies that
\begin{equation}
\begin{aligned}
& (1)\quad \forall 0 \leq  i \leq m, P_i,Q_i\geq 0,\; \text{and} \; \sum\nolimits^{m}_{i=0}P_i=\sum\nolimits^{m}_{i=0}Q_i=1;\\
& (2)\quad 0\leq E_0<E_1<\ldots<E_m;\\
& (3)\quad \forall 0 \leq  k \leq m-1, \sum\nolimits^k_{i=0} P_i \leq \sum\nolimits^k_{i=0} Q_i,
\end{aligned}
\end{equation}
then it holds that
$$
\sum\nolimits^{m}_{i=0}P_i\cdot E_i \geq \sum\nolimits^{m}_{i=0}Q_i\cdot E_i.
$$
\end{lemma}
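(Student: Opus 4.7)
The plan is to recognize that the hypotheses are precisely a statement of first-order stochastic dominance: condition (3) says that $P$ places less cumulative mass than $Q$ on the low-index states, which, by condition (2), are the states carrying the smaller ``rewards'' $E_i$. The natural tool to exploit this is summation by parts (Abel summation), which trades a weighted sum over $P_i$'s for a sum over the cumulative distribution function against the increments of the weights. Once the expression is in that form, the two pieces of the hypothesis, $E_{i+1}-E_i>0$ from (2) and $S^Q_k-S^P_k\geq 0$ from (3), combine into a single termwise nonnegativity check.

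Concretely, I would set $S^P_k=\sum_{i=0}^k P_i$ and $S^Q_k=\sum_{i=0}^k Q_i$ (with $S^P_{-1}=S^Q_{-1}=0$), noting that condition (1) gives $S^P_m=S^Q_m=1$. Applying Abel summation on both sides yields
\[
\sum_{i=0}^m P_i E_i \;=\; E_m - \sum_{i=0}^{m-1} S^P_i\,(E_{i+1}-E_i),
\]
and the analogous identity for $Q$. Subtracting, the leading $E_m$ terms cancel and I obtain
\[
\sum_{i=0}^m P_i E_i \;-\; \sum_{i=0}^m Q_i E_i \;=\; \sum_{i=0}^{m-1} \bigl(S^Q_i - S^P_i\bigr)\bigl(E_{i+1}-E_i\bigr),
\]
where every factor on the right is nonnegative by (3) and (2) respectively. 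The desired inequality follows at once.

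I do not anticipate a real obstacle here; the argument is essentially a one-line identity after Abel summation, and an equivalent proof by induction on $m$ would work just as well. The only bookkeeping to be careful about is the index range: condition (3) is assumed only for $k\leq m-1$, which matches exactly the indices appearing after the summation by parts (the $k=m$ case would collapse to the trivial $1\leq 1$ and is never invoked). It is also worth observing that strict monotonicity of the $E_i$ in (2) is not needed for the conclusion, $E_i$ nondecreasing would suffice, but since the lemma is stated with strict inequalities there is no harm in using them as given.
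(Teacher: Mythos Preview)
Your argument is correct. The Abel summation identity you wrote down is exactly right: with $S^P_m=S^Q_m=1$ from condition~(1), the $E_m$ terms cancel and the difference $\sum_i P_iE_i-\sum_i Q_iE_i$ reduces to $\sum_{i=0}^{m-1}(S^Q_i-S^P_i)(E_{i+1}-E_i)$, which is a sum of nonnegative terms by~(3) and~(2). Your observation that only $k\le m-1$ is needed, matching precisely the range in~(3), is also accurate.

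The paper, however, takes a noticeably different route: it proves the lemma by induction on $m$, with a direct verification of the base case $m=1$ and an inductive step that merges the first two terms $P_0E_0+P_1E_1$ into a single weighted term and then splits into several cases ($P_0=P_1=0$; $P_1>Q_1$; $P_1\le Q_1$), each requiring a separate inequality check before invoking the hypothesis at $m=k$. Your summation-by-parts proof is considerably shorter and more transparent, and it makes immediately visible that only monotonicity (not strict monotonicity) of the $E_i$ is needed. The paper's inductive argument, by contrast, is self-contained in the sense that it does not appeal to a named identity, but at the cost of a longer case analysis. Either approach is valid; yours is the standard and cleaner one for this stochastic-dominance statement.
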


\begin{proof2}
We use Lemma \ref{drift} to derive a bound on $\expect{\tau|\xi_0}$, based on which this theorem holds.

For using Lemma \ref{drift} to analyze $\expect{\tau|\xi_0}$, we first construct a distance function $V(x)$ as \begin{equation}
\begin{aligned}\label{distance}
& \forall x \in \mathcal{X}, V(x)=\expect{\tau'|\xi'_0=x},
\end{aligned}
\end{equation}
which satisfies that $V(x \in \mathcal{X}^*)=0$ and $V(x \notin \mathcal{X}^*)>0$ by Lemma \ref{lem_onestep}.

Then, we investigate $\expect{V(\xi_t)-V(\xi_{t+1}) | \xi_t=x}$ for any $x$ with $V(x)>0$ (i.e., $x \notin \mathcal{X^*}$).
\begin{equation}
\begin{aligned}
&\expect{V(\xi_t)-V(\xi_{t+1}) | \xi_t=x}=V(x)-\expect{V(\xi_{t+1})|\xi_t=x}\\
&=V(x)-\sum\nolimits_{y \in \mathcal{X}} P(\xi_{t+1}=y|\xi_t=x) V(y)\\
&= \expect{\tau'|\xi'_0=x}-\sum\nolimits_{y \in \mathcal{X}} P(\xi_{t+1}=y|\xi_t=x) \expect{\tau'|\xi'_0=y} \quad (\text{by Eq.\refeq{distance}})\\
&=1+\sum\nolimits_{y \in \mathcal{X}} P(\xi'_{1}=y|\xi'_0=x) \expect{\tau'|\xi'_1=y}-\sum\nolimits_{y \in \mathcal{X}} P(\xi_{t+1}=y|\xi_t=x) \expect{\tau'|\xi'_0=y}\quad (\text{by Lemma \refeq{lem_onestep}})\\
&=1+\sum\nolimits_{y \in \mathcal{X}} P(\xi'_{t+1}=y|\xi'_t=x) \expect{\tau'|\xi'_0=y}-\sum\nolimits_{y \in \mathcal{X}} P(\xi_{t+1}=y|\xi_t=x) \expect{\tau'|\xi'_0=y}\\
&\quad (\text{by Eq.\refeq{homogeneous} and Lemma \ref{lem_homo}, since $\{\xi'_t\}^{+\infty}_{t=0}$ is homogeneous.})\\
&=1+\sum\nolimits^m_{j=0} (P^t_{\xi'}(x,\mathcal{X}_j)-P^t_{\xi}(x,\mathcal{X}_j)) \expect{\tau'|\xi'_0\in \mathcal{X}_j}.\quad (\text{by Definitions \ref{def_partition} and \ref{def_jump}})
\end{aligned}
\end{equation}
Since $\sum^m_{j=0} P^t_{\xi}(x,\mathcal{X}_j)=\sum^m_{j=0} P^t_{\xi'}(x,\mathcal{X}_j)=1$, $\expect{\tau'|\xi'_0 \in \mathcal{X}_j}$ increases with $j$ and Eq.\refeq{analysis_condition} holds, by Lemma \ref{lemma_analysis_condition}, we have
$$
\sum\nolimits^{m}_{j=0} P^t_{\xi'}(x,\mathcal{X}_j) \expect{\tau'|\xi'_0 \in \mathcal{X}_j} \geq \sum\nolimits^{m}_{j=0} P^t_{\xi}(x,\mathcal{X}_j) \expect{\tau'|\xi'_0 \in \mathcal{X}_j}.
$$
Thus, we have, for all $t \geq 0$, all $x \notin \mathcal{X}^*$,
$$
\expect{V(\xi_t)-V(\xi_{t+1}) | \xi_t=x}\geq 1.
$$

Thus, by Lemma \ref{drift}, we get for all $x \in \mathcal{X}$,
$$
\expect{\tau|\xi_0=x} \leq V(x)=\expect{\tau'|\xi'_0=x}, \quad \text{(the `$=$' is by Eq.\ref{distance})}
$$
which implies that noise leads to less time for finding the optimal solution, i.e., noise makes optimization easier.
\end{proof2}


We prove below that the experimental example satisfies this sufficient condition. We consider (1+$\lambda$)-EA, which covers (1+1)-EA and is much more general. Let $\{\xi_t\}^{+\infty}_{t=0}$ and $\{\xi'_t\}^{+\infty}_{t=0}$ model (1+$\lambda$)-EA with and without noise for maximizing I$_{hardest}$ problem, respectively. For I$_{hardest}$ problem, it is to maximize the number of 0 bits except the optimal solution $1^n$. It is not hard to see that the EFHT $\expect{\tau'|\xi'_0=x}$ only depends on $|x|_0$ (i.e., the number of 0 bits). We denote $\mathbb{E}_1(j)$ as $\expect{\tau'|\xi'_0=x}$ with $|x|_0=j$. The order of $\mathbb{E}_1(j)$ is showed in Lemma \ref{CFHT_Trap}, the proof of which is in the Appendix.

\begin{lemma}\label{CFHT_Trap}
For any mutation probability $0<p<0.5$, it holds that $\mathbb{E}_1(0)< \mathbb{E}_1(1)< \mathbb{E}_1(2)< \ldots < \mathbb{E}_1(n).$
\end{lemma}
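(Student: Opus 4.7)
The plan is to prove the strict monotonicity of $\mathbb{E}_1$ by a coupling argument. First, by symmetry of bit-wise mutation, the EFHT of the (1+$\lambda$)-EA without noise on I$_{hardest}$ depends on $x$ only through $|x|_0$, so the process reduces to a Markov chain on $\{0, 1, \ldots, n\}$ whose state is the number of zero bits; since $\mathbb{E}_1(0) = 0$, the base case $\mathbb{E}_1(0) < \mathbb{E}_1(1)$ is immediate, and it suffices to prove $\mathbb{E}_1(j) < \mathbb{E}_1(j+1)$ for $1 \leq j \leq n-1$. The key structural fact is that from any non-optimal state $j$, one step of the chain either (i) jumps to $0$ with probability $r_j := 1 - (1 - p^{j}(1-p)^{n-j})^{\lambda}$, exactly when some offspring equals $1^n$, or (ii) stays at $j$ or moves to some $k > j$, because any non-$1^n$ offspring is accepted only if it has at least as many zeros as the parent; and $p < 1/2$ makes $r_j$ strictly decreasing in $j$.

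Given this structure, I would couple two runs, $X$ starting at $j$ and $Y$ starting at $j+1$. At each step draw a common $U \sim U[0,1]$ and declare each chain to jump iff $U$ is less than its current $r$-value; monotonicity of $r$ in the state ensures that $\{Y \text{ jumps}\} \subseteq \{X \text{ jumps}\}$. For the non-jump transitions, I would couple them so that $Y_{t+1} \geq X_{t+1}$ (comparing number of zeros), preserving the invariant $X_t \leq Y_t$ or $X_t = 0$ throughout. Consequently $X$ reaches $0$ no later than $Y$. Strict inequality follows because at the very first step the event $\{r_{j+1} \leq U_0 < r_j\}$ has probability $r_j - r_{j+1} > 0$ and on it $X_1 = 0$ while $Y_1 \geq j+1$, so $T_X = 1 < T_Y$; combined with $T_X \leq T_Y$ almost surely, this yields $\mathbb{E}_1(j) = \mathbb{E}[T_X] < \mathbb{E}[T_Y] = \mathbb{E}_1(j+1)$.

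The main obstacle is constructing the non-jump coupling so that $Y_{t+1} \geq X_{t+1}$. This reduces to showing that a single offspring's zero-count $Z$, conditional on $Z \geq 1$, is stochastically larger under a parent with more zeros. I would establish the stronger likelihood-ratio order on the unconditional $Z$: from a parent with $j$ zeros, $Z$ is the sum of $j$ independent $\mathrm{Bern}(1-p)$ variables and $n-j$ independent $\mathrm{Bern}(p)$ variables, and incrementing $j$ by one replaces a $\mathrm{Bern}(p)$ term by a $\mathrm{Bern}(1-p)$; a short computation then shows that the pmf ratio $P(Z=m\mid j+1)/P(Z=m\mid j)$ is non-decreasing in $m$ for $p<1/2$, using log-concavity of sums of independent Bernoullis. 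Since likelihood-ratio order is preserved under conditioning on any event, the desired conditional stochastic dominance follows, supplying the coupling and completing the argument.
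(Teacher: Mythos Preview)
Your coupling argument is correct and takes a genuinely different route from the paper's. The paper proceeds by \emph{reverse} induction (base case $\mathbb{E}_1(n-1)<\mathbb{E}_1(n)$, then down to $K=1$): for a parent with $K+1$ zeros it singles out one zero bit and views the mutation as ``mutation on this bit'' $\times$ ``mutation on the remaining $n-1$ bits''; the same $(n-1)$-bit background describes a parent with $K$ zeros if one instead singles out a one bit. Pairing, for each $0\le j<\lambda/2$, the event that the distinguished bit flips $j$ times with the event that it flips $\lambda-j$ times, the paper expands $\mathbb{E}_1(K+1)-\mathbb{E}_1(K)$, combines these paired terms, and applies Lemma~\ref{lemma_analysis_condition} (the elementary ``stochastic dominance $\Rightarrow$ expectation comparison'' inequality) together with the inductive hypothesis $\mathbb{E}_1(K+1)<\cdots<\mathbb{E}_1(n)$. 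Your approach replaces all of this algebra by a probabilistic picture: the monotonicity of $r_j=1-(1-p^j(1-p)^{n-j})^\lambda$ handles the jump-to-$1^n$ part directly, and the non-jump coupling rests on the likelihood-ratio ordering of a single offspring's zero-count, which you obtain from log-concavity of the Poisson binomial. What the paper buys is a fully self-contained argument (no external log-concavity fact) at the price of substantial bookkeeping; what your approach buys is conceptual clarity about \emph{why} the monotonicity holds, though you should make explicit the step from single-offspring stochastic dominance to dominance of $\max(\text{parent},Z_1,\ldots,Z_\lambda)$ conditional on all $Z_i\ge 1$ (it follows since the offspring are i.i.d.\ and the conditioning factors), and note that the LR order must be invoked for arbitrary $x\le y$, not just consecutive states, which you get by transitivity.
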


\begin{theorem}\label{theo_helpful_case1}
Either additive noise with $\delta_2-\delta_1 \leq 2n$ or multiplicative noise with $\delta_2> \delta_1 >0$ makes I$_{hardest}$ problem easier for (1+$\lambda$)-EA with mutation probability less than 0.5.
\end{theorem}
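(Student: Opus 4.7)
The plan is to apply Theorem \ref{analysis_approach} to the Markov chains $\{\xi_t\}_{t=0}^{+\infty}$ and $\{\xi'_t\}_{t=0}^{+\infty}$ modelling the $(1+\lambda)$-EA on I$_{hardest}$ with and without noise. Since the noiseless EFHT $\expect{\tau'|\xi'_0=x}$ depends only on $|x|_0$ and, by Lemma \ref{CFHT_Trap}, is strictly increasing in $|x|_0$, the EFHT-partition of $\{\xi'_t\}$ is $\mathcal{X}_j=\{y:|y|_0=j\}$ for $j=0,1,\ldots,n$, with $\mathcal{X}_0=\{1^n\}$. Condition~(\ref{analysis_condition}) then reduces to the stochastic-dominance statement: for every $x\neq 1^n$, every $t\geq 0$ and every $i\in[0,n-1]$,
\[
P\big(|\xi_{t+1}|_0\leq i\mid\xi_t=x\big)\;\geq\;P\big(|\xi'_{t+1}|_0\leq i\mid\xi'_t=x\big).
\]

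I would prove this via a one-step coupling: starting from $\xi_t=\xi'_t=x$, let both chains generate the same $\lambda$ mutants $X_1,\ldots,X_\lambda$ (legitimate because mutation is independent of fitness noise), and show the pathwise bound $|\xi_{t+1}|_0\leq|\xi'_{t+1}|_0$. If some $X_i=1^n$, then in the noiseless chain $\xi'_{t+1}=1^n$ because $f(1^n)=2n$ strictly dominates every other $f$-value, and I would argue that the noisy chain also selects $1^n$ almost surely: for any candidate $z\neq 1^n$ the true gap $f(1^n)-f(z)=3n-|z|_0\geq 2n$, so under additive noise $f^N(1^n)-f^N(z)\geq 2n-(\delta_2-\delta_1)\geq 0$ a.s., while under multiplicative noise $f^N(1^n)=2n\delta>0\geq(|z|_0-n)\delta=f^N(z)$ since $\delta_1>0$. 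Hence in this case $|\xi_{t+1}|_0=|\xi'_{t+1}|_0=0$.

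If instead no $X_i=1^n$, every candidate is non-optimal and $f(z)=|z|_0-n$, so the noiseless selector is exactly the $|\cdot|_0$-maximizer and $|\xi'_{t+1}|_0=\max(|x|_0,|X_1|_0,\ldots,|X_\lambda|_0)$. The noisy chain picks \emph{some} element of the same finite candidate set, so $|\xi_{t+1}|_0$ is one of those $\lambda+1$ numbers and is therefore at most that maximum. Combining the two cases gives $|\xi_{t+1}|_0\leq|\xi'_{t+1}|_0$ pathwise, hence marginal stochastic dominance, and Theorem \ref{analysis_approach} concludes the proof.

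The step I expect to demand the most care is the ``$1^n$ always wins'' argument, because it is precisely where both hypotheses are used: $\delta_2-\delta_1\leq 2n$ for additive noise and $\delta_1>0$ for multiplicative noise. Dropping either would let a non-$1^n$ candidate noise-outscore $1^n$, sending $\xi_{t+1}$ into some $\mathcal{X}_j$ with $j\geq 1$ while $\xi'_{t+1}\in\mathcal{X}_0$, and thereby break the pathwise dominance. The no-$1^n$ case, by contrast, is almost tautological once the noiseless selector is identified with the $|\cdot|_0$-maximizer on non-optima.
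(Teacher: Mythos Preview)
Your proposal is correct and follows essentially the same approach as the paper: both verify the stochastic-dominance condition of Theorem~\ref{analysis_approach} by using Lemma~\ref{CFHT_Trap} to identify the EFHT-partition with the level sets of $|\cdot|_0$, and then arguing that (i) under either noise hypothesis the optimum $1^n$ still beats every non-optimum so the noisy chain reaches $\mathcal{X}_0$ whenever the noiseless chain does, and (ii) when no mutant equals $1^n$ the noiseless selector is the $|\cdot|_0$-maximizer so the noisy selection cannot land in a higher-indexed $\mathcal{X}_j$. The only cosmetic difference is that the paper writes out the transition probabilities $P^t_{\xi'}(x,\mathcal{X}_j)$ and $P^t_{\xi}(x,\mathcal{X}_j)$ explicitly and compares their tail sums, whereas you obtain the same inequality via a one-step coupling and pathwise dominance $|\xi_{t+1}|_0\le|\xi'_{t+1}|_0$; the two packagings are equivalent here.
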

\begin{proof}
The proof is by showing that the condition of Theorem \ref{analysis_approach} (i.e., Eq.\ref{analysis_condition}) holds here. By Lemma~\ref{CFHT_Trap}, the EFHT-Partition of $\{\xi'_t\}^{+\infty}_{t=0}$ is $\mathcal{X}_i=\{x \in \{0,1\}^n | |x|_0=i\} \;(0\leq i\leq n)$ and $m$ in Theorem \ref{analysis_approach} equals to $n$ here. Let $f^N(x)$ and $f(x)$ denote the noisy and true fitness, respectively.

For any $x \in \mathcal{X}_{k}\;(k \geq 1)$, we denote $P(0)$ and $P(j)\;(1 \leq j \leq n)$ as the probability that for the $\lambda$ offspring solutions $x_1,\ldots,x_{\lambda}$ generated by bit-wise mutation on $x$, $\min\{|x_1|_0,\ldots,|x_{\lambda}|_0\}=0$ (i.e., the least number of 0 bits is 0), and $\min\{|x_1|_0,\ldots,|x_{\lambda}|_0\}>0 \wedge \max\{|x_1|_0,\ldots,|x_{\lambda}|_0\}=j$ (i.e., the largest number of 0 bits is $j$ while the least number of 0 bits is larger than 0), respectively. Then, we analyze one-step transition probabilities from $x$ for both $\{\xi'_t\}^{+\infty}_{t=0}$ (i.e., without noise) and $\{\xi_t\}^{+\infty}_{t=0}$ (i.e., with noise).

For $\{\xi'_t\}^{+\infty}_{t=0}$, because only the optimal solution or the solution with the largest number of 0 bit among the parent solution and $\lambda$ offspring solutions will be accepted, we have
\begin{equation}
\begin{aligned}\label{onestep1}
&P^t_{\xi'}(x,\mathcal{X}_0)=P(0);&& \forall\; 1\leq j \leq k-1: P^t_{\xi'}(x,\mathcal{X}_j)=0;\\
&P^t_{\xi'}(x,\mathcal{X}_k)=\sum\nolimits^{k}_{j=1}P(j); &&
\forall\; k+1 \leq j \leq n: P^t_{\xi'}(x,\mathcal{X}_j)=P(j).
\end{aligned}
\end{equation}

For $\{\xi_t\}^{+\infty}_{t=0}$ with additive noise, since $\delta_2-\delta_1 \leq 2n$, we have
\begin{equation}
\begin{aligned}
&f^N(1^n) \geq f(1^n)+\delta_1 \geq 2n+\delta_2-2n=\delta_2;\\
&\forall y\neq 1^n, f^N(y)\leq f(y)+\delta_2 \leq \delta_2.
\end{aligned}
\end{equation}
For multiplicative noise, since $\delta_2>\delta_1 >0$, then
\begin{equation}
\begin{aligned}
&f^N(1^n) >0; && \forall y\neq 1^n, f^N(y) \leq 0.
\end{aligned}
\end{equation}
Thus, for these two noises, we have $\forall y \neq 1^n, f^N(1^n) \geq f^N(y)$, which implies that if the optimal solution $1^n$ is generated, it will always be accepted. Thus, we have, note that $\mathcal{X}_0=\{1^n\}$,
\begin{equation}
\begin{aligned}\label{onestep2}
&P^t_{\xi}(x,\mathcal{X}_0)=P(0).
\end{aligned}
\end{equation}
Due to the fitness evaluation disturbed by noise, the solution with the largest number of 0 bit among the parent solution and $\lambda$ offspring solutions may be rejected. Thus, we have
\begin{equation}
\begin{aligned}\label{onestep3}
&\forall \; k+1 \leq i \leq n: \sum^{n}_{j=i}P^t_{\xi}(x,\mathcal{X}_j) \leq \sum^{n}_{j=i} P(j).
\end{aligned}
\end{equation}

By combining Eq.\refeq{onestep1}, Eq.\refeq{onestep2} and Eq.\refeq{onestep3}, we have
\begin{equation}
\begin{aligned}
&\forall \; 1\leq i \leq n: \sum^{n}_{j=i}P^t_{\xi}(x,\mathcal{X}_j)\leq \sum^{n}_{j=i}P^t_{\xi'}(x,\mathcal{X}_j).
\end{aligned}
\end{equation}
Since $\sum^{n}_{j=0}P^t_{\xi}(x,\mathcal{X}_j)= \sum^{n}_{j=0}P^t_{\xi'}(x,\mathcal{X}_j)=1$, the above inequality is equivalent to
\begin{equation}
\begin{aligned}
&\forall \; 0\leq i \leq n-1: \sum^{i}_{j=0}P^t_{\xi}(x,\mathcal{X}_j)\geq \sum^{i}_{j=0}P^t_{\xi'}(x,\mathcal{X}_j),
\end{aligned}
\end{equation}
which implies that the condition Eq.\ref{analysis_condition} of Theorem \ref{analysis_approach} holds. Thus, we can get that I$_{hardest}$ problem becomes easier for (1+$\lambda$)-EA under these two kinds of noise.
\end{proof}


Theorem \ref{analysis_approach} gives a sufficient condition for that noise makes optimization easier. If its condition Eq.\ref{analysis_condition} changes the inequality direction, which implies that noise leads to a smaller probability of jumping to good states, it obviously becomes a sufficient condition for that noise makes optimization harder. We show it in Theorem \ref{analysis_approach_harmful}, the proof of which is as similar as that of Theorem \ref{analysis_approach}, except that the inequality direction needs to be changed.

\begin{theorem}\label{analysis_approach_harmful}
Given an EA $\mathcal{A}$ and a problem $f$, let a Markov chain $\{\xi_t\}^{+\infty}_{t=0}$ and a homogeneous Markov chain $\{\xi'_t\}^{+\infty}_{t=0}$ model $\mathcal{A}$ running on $f$ with noise and without noise respectively, and denote $\{\mathcal{X}_0,\mathcal{X}_1,\ldots,\mathcal{X}_m\}$ as the EFHT-Partition of $\{\xi'_t\}^{+\infty}_{t=0}$, if for all $t\geq 0$, $x \in \mathcal{X}-\mathcal{X}_0$, and for all integers $i\in [0,m-1]$,
\begin{equation}
\begin{aligned}\label{analysis_condition_harmful}
&\sum\nolimits^i_{j=0}P^t_{\xi}(x,\mathcal{X}_j) \leq \sum\nolimits^{i}_{j=0} P^t_{\xi'}(x,\mathcal{X}_j),
\end{aligned}
\end{equation}
then noise makes $f$ harder for $\mathcal{A}$, i.e., for all $x \in \mathcal{X}$, $$\expect{\tau | \xi_{0}=x} \geq \expect{\tau' | \xi'_{0}=x}.$$
\end{theorem}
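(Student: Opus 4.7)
The plan is to follow the proof of Theorem \ref{analysis_approach} almost verbatim, flipping the direction of every inequality that depends on the hypothesis. Since the only direction-sensitive step is the application of Lemma \ref{lemma_analysis_condition}, and its hypothesis in Eq.\ref{analysis_condition_harmful} is just the reverse of Eq.\ref{analysis_condition}, the same drift-analysis skeleton should yield the reversed conclusion about the EFHT.

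First I would reuse the distance function $V(x) = \expect{\tau'|\xi'_0=x}$, which is $0$ on $\mathcal{X}^*$ and strictly positive on its complement by Lemma \ref{lem_onestep}. Then for any $t\geq 0$ and any $x \notin \mathcal{X}^*$, the identical chain of equalities used in the proof of Theorem \ref{analysis_approach} -- expanding $V(x)$ with Lemma \ref{lem_onestep}, using homogeneity of $\{\xi'_t\}^{+\infty}_{t=0}$ together with Lemma \ref{lem_homo} to align time indices $t$ and $0$, and grouping the resulting sum over $y\in\mathcal{X}$ by the EFHT-Partition via Definitions \ref{def_partition} and \ref{def_jump} -- yields
\begin{equation*}
\expect{V(\xi_t)-V(\xi_{t+1})|\xi_t=x} = 1 + \sum\nolimits_{j=0}^m \big(P^t_{\xi'}(x,\mathcal{X}_j) - P^t_{\xi}(x,\mathcal{X}_j)\big) \expect{\tau'|\xi'_0 \in \mathcal{X}_j}.
\end{equation*}

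Next I would apply Lemma \ref{lemma_analysis_condition} with the roles of $P$ and $Q$ swapped relative to the original proof: set $P_j = P^t_{\xi}(x,\mathcal{X}_j)$, $Q_j = P^t_{\xi'}(x,\mathcal{X}_j)$, and $E_j = \expect{\tau'|\xi'_0 \in \mathcal{X}_j}$. Condition (1) of the lemma holds because these are probabilities summing to $1$, condition (2) is the strict ordering guaranteed by the EFHT-Partition, and condition (3) is exactly the hypothesis Eq.\ref{analysis_condition_harmful}. The lemma's conclusion $\sum_j P_j E_j \geq \sum_j Q_j E_j$ then forces the bracketed difference above to contribute a non-positive amount, so the drift is at most $1$ on all of $\mathcal{X}\setminus\mathcal{X}^*$.

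Finally, invoking Lemma \ref{drift} with $c_u=1$ gives $\expect{\tau|\xi_0=x} \geq V(x) = \expect{\tau'|\xi'_0=x}$ for every $x \in \mathcal{X}$, which is the theorem. There is no genuinely hard step; the only care points are book-keeping the reversed inequality when quoting Lemma \ref{lemma_analysis_condition}, and remembering that we now read Lemma \ref{drift} from the $c_u$ side (upper bound on drift, lower bound on EFHT) rather than the $c_l$ side used in Theorem \ref{analysis_approach}.
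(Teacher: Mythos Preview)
Your proposal is correct and mirrors the paper's own treatment exactly: the paper does not write out a separate proof of Theorem~\ref{analysis_approach_harmful} but simply states that it is ``as similar as that of Theorem~\ref{analysis_approach}, except that the inequality direction needs to be changed.'' Your identification of the two direction-sensitive steps---swapping the roles of $P$ and $Q$ in Lemma~\ref{lemma_analysis_condition}, and reading Lemma~\ref{drift} from the $c_u$ side rather than the $c_l$ side---is precisely what that remark amounts to.
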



Then we apply this condition to the case that (1+$\lambda$)-EA is used for optimizing the easiest case I$_{easiest}$ in the pseudo-Boolean function class. Let $\{\xi_t\}^{+\infty}_{t=0}$ and $\{\xi'_t\}^{+\infty}_{t=0}$ model (1+$\lambda$)-EA with and without noise for maximizing I$_{easiest}$ problem, respectively. It is not hard to see that the EFHT $\expect{\tau'|\xi'_0=x}$ only depends on $|x|_0$. We denote $\mathbb{E}_2(j)$ as $\expect{\tau'|\xi'_0=x}$ with $|x|_0=j$. The order of $\mathbb{E}_2(j)$ is showed in Lemma \ref{CFHT_OneMax}, the proof of which is in the Appendix.

\begin{lemma}\label{CFHT_OneMax}
For any mutation probability $0<p<0.5$, it holds that $\mathbb{E}_2(0)<\mathbb{E}_2(1)<\mathbb{E}_2(2)<\ldots<\mathbb{E}_2(n).$
\end{lemma}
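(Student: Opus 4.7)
The plan is a stochastic-dominance coupling argument, paralleling the proof of Lemma~\ref{CFHT_Trap} but with the selection direction reversed. Since I$_{easiest}$ depends on $x$ only through $|x|_1$ and bit-wise mutation is symmetric in the bit positions, the statistic $|x|_0$ is a sufficient summary of the state, so $\{|\xi'_t|_0\}$ is itself a Markov chain on $\{0, 1, \ldots, n\}$; moreover it is non-increasing in $t$, because (1+$\lambda$)-EA accepts an offspring only if it has at least as many $1$-bits. It therefore suffices to show that the random hitting time $T_j$ of state $0$ from state $j$ satisfies $\mathbb{E}[T_0] < \mathbb{E}[T_1] < \cdots < \mathbb{E}[T_n]$, which is exactly the claim $\mathbb{E}_2(0) < \mathbb{E}_2(1) < \cdots < \mathbb{E}_2(n)$.

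The key step is one-step stochastic monotonicity: for every $k$, the one-step transition from state $j$ assigns at least as much mass to $\{\ell : \ell \geq k\}$ as does the one-step transition from state $j-1$. For a single offspring, take parents $x$ and $x'$ that agree at every bit except one position $p^*$ where $x_{p^*}=0$ and $x'_{p^*}=1$, and couple the two bit-wise mutations using the same flip indicator at every bit. Let $U$ denote the number of zeros among the $n-1$ coinciding positions after mutation, and $B\sim\mathrm{Ber}(p)$ the flip indicator at $p^*$. Direct bookkeeping gives $|y|_0 = U + \mathbf{1}[B=0]$ and $|y'|_0 = U + \mathbf{1}[B=1]$, from which
\begin{equation*}
P(|y|_0 \geq k) - P(|y'|_0 \geq k) \;=\; (1-2p)\,P(U = k-1) \;\geq\; 0,
\end{equation*}
using $p<1/2$. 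Coupling $\lambda$ offspring pairs independently preserves the dominance under $\min_i |y_i|_0$, and a short case analysis on $k$ transfers it to $\min(j, \cdot)$ versus $\min(j-1, \cdot)$, yielding full one-step kernel dominance.

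By the standard construction for stochastically monotone chains, there is then a Markovian coupling $(X_t, Y_t)$ with $X_0=j$, $Y_0=j-1$, correct marginals, and $X_t\geq Y_t$ almost surely. Since $0$ is absorbing, $T_X \geq T_Y$ a.s., hence $\mathbb{E}_2(j)\geq \mathbb{E}_2(j-1)$. To upgrade to strict inequality, observe that the one-step probability of reaching $0$ from state $k$ equals $1-(1-p^k(1-p)^{n-k})^\lambda$, which is strictly decreasing in $k$ since $p<1/2$. Thus $P(Y_1=0) > P(X_1=0)$, and the coupling forces $\{X_1=0\}\subseteq\{Y_1=0\}$, so $P(Y_1=0,\,X_1>0)>0$; on this event $T_Y=1$ while $T_X\geq 2$, which gives $\mathbb{E}[T_X]>\mathbb{E}[T_Y]$, i.e.\ $\mathbb{E}_2(j)>\mathbb{E}_2(j-1)$ for every $j\geq 1$. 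The main technical obstacle is the one-step stochastic monotonicity itself: a naive same-randomness coupling yields $|y|_0-|y'|_0\in\{-1,+1\}$ and is not pointwise monotone, so the $U,\mathbf{1}[B=\cdot]$ decomposition is crucial in reducing the inequality to the elementary fact $1-2p>0$.
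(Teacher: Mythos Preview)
Your proof is correct and takes a genuinely different route from the paper's. The paper proceeds by direct induction on $j$: it writes $\mathbb{E}_2(K+1)$ and $\mathbb{E}_2(K)$ as one-step recursions using the same bit-decomposition trick you describe (isolating the single bit at which two adjacent parents differ), subtracts, groups terms according to how many of the $\lambda$ mutation trials flip that distinguished bit, applies Lemma~\ref{lemma_analysis_condition} under the inductive hypothesis, and finally reduces to an inequality of the form $\mathbb{E}_2(K+1)-\mathbb{E}_2(K) > c\,(\mathbb{E}_2(K+1)-\mathbb{E}_2(K))$ with $c<1$, forcing the difference to be positive. Your approach bypasses this algebra entirely: you establish one-step stochastic monotonicity of the $|x|_0$-chain via the $(U,B)$ decomposition, lift it to the minimum over $\lambda$ offspring and then through selection by the short case split on $k$, and invoke the standard monotone-chain coupling (Strassen/quantile) to compare hitting times pathwise. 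Both arguments hinge on the same distributional fact---the transition from more zeros stochastically dominates the transition from fewer, with $p<1/2$ entering precisely as $1-2p>0$---but the paper encodes this through Lemma~\ref{lemma_analysis_condition} inside an inductive expansion, while you use it once to set up the coupling and then read off both the weak inequality and the strictness (from $P(X_1=0)<P(Y_1=0)$) directly. Your route is conceptually cleaner and makes the source of strict inequality more transparent; the paper's computational route is self-contained (no appeal to monotone-coupling machinery) and dovetails with the parallel argument for Lemma~\ref{CFHT_Trap}.
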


\begin{theorem}\label{theo_harmful_case}
Any noise makes I$_{easiest}$ problem harder for (1+$\lambda$)-EA with mutation probability less than 0.5.
\end{theorem}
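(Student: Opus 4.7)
The plan is to mirror the proof of Theorem \ref{theo_helpful_case1} but invoke Theorem \ref{analysis_approach_harmful} in place of Theorem \ref{analysis_approach}. Lemma \ref{CFHT_OneMax} tells us that the EFHT-Partition of the noiseless chain $\{\xi'_t\}^{+\infty}_{t=0}$ is $\mathcal{X}_i=\{x\in\{0,1\}^n:|x|_0=i\}$ for $0\le i\le n$, with $\mathcal{X}_0=\{1^n\}$ and $m=n$. So I only have to verify the reverse-direction jumping-probability inequality Eq.\ref{analysis_condition_harmful} for every $x\in\mathcal{X}_k$ with $1\le k\le n$ and every $0\le i\le n-1$.

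For any such $x$, I will let $P(j)$ denote the probability that the $\lambda$ offspring $x_1,\ldots,x_\lambda$ produced from $x$ by bit-wise mutation satisfy $\min\{|x_1|_0,\ldots,|x_\lambda|_0\}=j$. On I$_{easiest}$, $f(y)=n-|y|_0$ is strictly decreasing in $|y|_0$, so without noise the selection step always keeps the solution with the fewest $0$-bits among $\{x,x_1,\ldots,x_\lambda\}$. This yields $P^t_{\xi'}(x,\mathcal{X}_j)=P(j)$ for $0\le j\le k-1$, $P^t_{\xi'}(x,\mathcal{X}_k)=\sum\nolimits_{j=k}^{n}P(j)$, and $P^t_{\xi'}(x,\mathcal{X}_j)=0$ for $j>k$. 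The partial sums I care about are therefore $\sum\nolimits_{j=0}^i P^t_{\xi'}(x,\mathcal{X}_j)=\sum\nolimits_{j=0}^i P(j)$ for $i<k$ and $1$ for $i\ge k$.

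The key step is bounding the corresponding noisy sum. For $0\le i<k$, the parent $x$ itself has $|x|_0=k>i$, so the noisy chain can land in $\bigcup_{j\le i}\mathcal{X}_j$ only if at least one offspring $x_l$ already satisfies $|x_l|_0\le i$; whether that good offspring actually survives the noisy selection can only decrease the probability further. Hence $\sum\nolimits_{j=0}^i P^t_{\xi}(x,\mathcal{X}_j)\le P(\min_l|x_l|_0\le i)=\sum\nolimits_{j=0}^i P(j)=\sum\nolimits_{j=0}^i P^t_{\xi'}(x,\mathcal{X}_j)$, and for $i\ge k$ the bound $\sum\nolimits_{j=0}^i P^t_{\xi}(x,\mathcal{X}_j)\le 1$ is trivial. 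This establishes Eq.\ref{analysis_condition_harmful}, and Theorem \ref{analysis_approach_harmful} closes the proof.

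The main obstacle I anticipate is precisely the bound above: the theorem asserts hardness for \emph{any} noise, so I cannot exploit the structure of a particular noise model when controlling $\sum\nolimits_{j=0}^i P^t_\xi(x,\mathcal{X}_j)$. The escape is to reduce the target event ``new state lies in $\mathcal{X}_{\le i}$'' (for $i<k$) to the noise-independent event ``some offspring has $|x_l|_0\le i$''; the latter is governed purely by mutation and upper-bounds the former regardless of how noise perturbs selection. This decoupling is what lets a single inequality work uniformly across all noise distributions, and once it is in hand the rest of the argument is bookkeeping plus one invocation of Theorem \ref{analysis_approach_harmful}.
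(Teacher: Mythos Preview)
Your proposal is correct and follows essentially the same argument as the paper: both invoke Lemma~\ref{CFHT_OneMax} to identify the EFHT-Partition, compute the noiseless transition probabilities via the distribution $P(j)$ of $\min_l|x_l|_0$, and bound the noisy partial sums by observing that landing in $\bigcup_{j\le i}\mathcal X_j$ (for $i<k$) requires some offspring with at most $i$ zeros regardless of the noise, then apply Theorem~\ref{analysis_approach_harmful}. Your write-up is in fact slightly more explicit than the paper's about why the key inequality holds for arbitrary noise.
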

\begin{proof}
We use Theorem \ref{analysis_approach_harmful} to prove it. By Lemma \ref{CFHT_OneMax}, the EFHT-Partition of $\{\xi'_t\}^{+\infty}_{t=0}$ is $\mathcal{X}_i=\{x \in \{0,1\}^n | |x|_0=i\} \;(0\leq i\leq n)$.

For any non-optimal solution $x \in \mathcal{X}_k \;(k>0)$, we denote $P(j)\;(0 \leq j \leq n)$ as the probability that the least number of 0 bits for the $\lambda$ offspring solutions generated by bit-wise mutation on $x$ is $j$. For $\{\xi'_t\}^{+\infty}_{t=0}$, because the solution with the least number of 0 bits among the parent solution and $\lambda$ offspring solutions will be accepted, we have
\begin{equation}
\begin{aligned}
&\forall\; 0\leq j \leq k-1: P^t_{\xi'}(x,\mathcal{X}_j)=P(j); && P^t_{\xi'}(x,\mathcal{X}_k)=\sum\nolimits^{n}_{j=k}P(j); && \forall\; k+1 \leq j \leq n: P^t_{\xi'}(x,\mathcal{X}_j)=0.
\end{aligned}
\end{equation}
For $\{\xi_t\}^{+\infty}_{t=0}$, due to the fitness evaluation disturbed by noise, the solution with the least number of 0 bits among the parent solution and $\lambda$ offspring solutions may be rejected. Thus, we have
\begin{equation}
\begin{aligned}
& 0\leq i \leq k-1: \sum^{i}_{j=0}P^t_{\xi}(x,\mathcal{X}_j)\leq \sum^{i}_{j=0}P(j).
\end{aligned}
\end{equation}

Then, we can get
\begin{equation}
\begin{aligned}
&\forall \; 0\leq i \leq n-1: \sum^{i}_{j=0}P^t_{\xi}(x,\mathcal{X}_j)\leq \sum^{i}_{j=0}P^t_{\xi'}(x,\mathcal{X}_j).
\end{aligned}
\end{equation}
This implies that the condition Eq.\refeq{analysis_condition_harmful} of Theorem \ref{analysis_approach_harmful} holds. Thus, by Theorem \ref{analysis_approach_harmful}, we can get that noise makes I$_{easiest}$ problem harder for (1+$\lambda$)-EA.
\end{proof}

\subsection{Discussion}

We have shown that noise makes I$_{hardest}$ and I$_{easiest}$ problems easier and harder, respectively, for (1+$\lambda$)-EA. These two problems are known to be the hardest and the easiest instance respectively in the pseudo-Boolean function class with a unique global optimum for the (1+1)-EA \cite{qian2012algorithm}. We can intuitively interpret the discovered effect of noise for EAs on these two problems. For I$_{hardest}$ problem, the EA searches along the deceptive direction while noise can add some randomness to make the EA have some possibility to run along the right direction; for I$_{easiest}$ problem, the EA searches along the right direction while noise can only harm the optimization process. We thus hypothesize that we need to take care of the noise only when the optimization problem is moderately or less complex.

To further verify our hypothesis, we employ the Jump$_{m,n}$ problem, which is a problem with adjustable difficulty and can be configured as I$_{eaisest}$ when $m=1$ and I$_{hardest}$ when $m=n$.
\begin{definition}[Jump$_{m,n}$ Problem]\label{def_jump_mn}
    Jump$_{m,n}$ Problem of size $n$ with $1 \leq m \leq n$ is to find an $n$ bits binary string $x^*$ such that
    \begin{aligna}
    &x^* =\arg\max\nolimits_{x \in \{0,1\}^n}\bigg(\text{Jump$_{m,n}(x)$}=
    \begin{cases}
    m+\sum^n_{i=1} x_i & \text{if $\sum^n_{i=1} x_i \leq n-m$ or $\sum^n_{i=1} x_i=n$}\\
    n-\sum^n_{i=1} x_i & \text{otherwise}
    \end{cases}\bigg),
    \end{aligna}
    where $x_i$ is the $i$-th bit of a solution $x \in \{0,1\}^n$.
\end{definition}
We test (1+1)-EA with mutation probability $\frac{1}{n}$ on Jump$_{m,n}$. It is known that the expected running time of the (1+1)-EA on Jump$_{m,n}$ is $\Theta(n^m+n \log n)$ \cite{droste2002analysis}, which implies that Jump$_{m,n}$ with larger value of $m$ is harder.
In the experiment, we set $n=5$, and for noise, we use the additive noise with $\delta_1=-0.5n \wedge \delta_2=0.5n$, the multiplicative noise with $\delta_1=1 \wedge \delta_2=2$, and the one-bit noise with $p_n=0.5$, respectively. We record the expected running time gap starting from each initial solution
$$
gap=(\expect{\tau}-\expect{\tau'})/\expect{\tau'},
$$
where $\expect{\tau}$ and $\expect{\tau'}$ denote the expected running time of the EA optimizing the problem with and without noise, respectively. The larger the gap means that the noise has a more negative effect, while the smaller the gap means that the noise has a less negative effect. For each initial solution and each configuration of noise, we repeat the running of the (1+1)-EA 1000 times, and estimate the expected running time by the average running time, and thus estimate the gap. The results are plotted in Figure \ref{fig_ratio}.

\begin{figure*}[h!]\centering
\begin{minipage}[c]{0.33\linewidth}\centering
        \includegraphics[width=0.8\linewidth,height=0.65\linewidth]{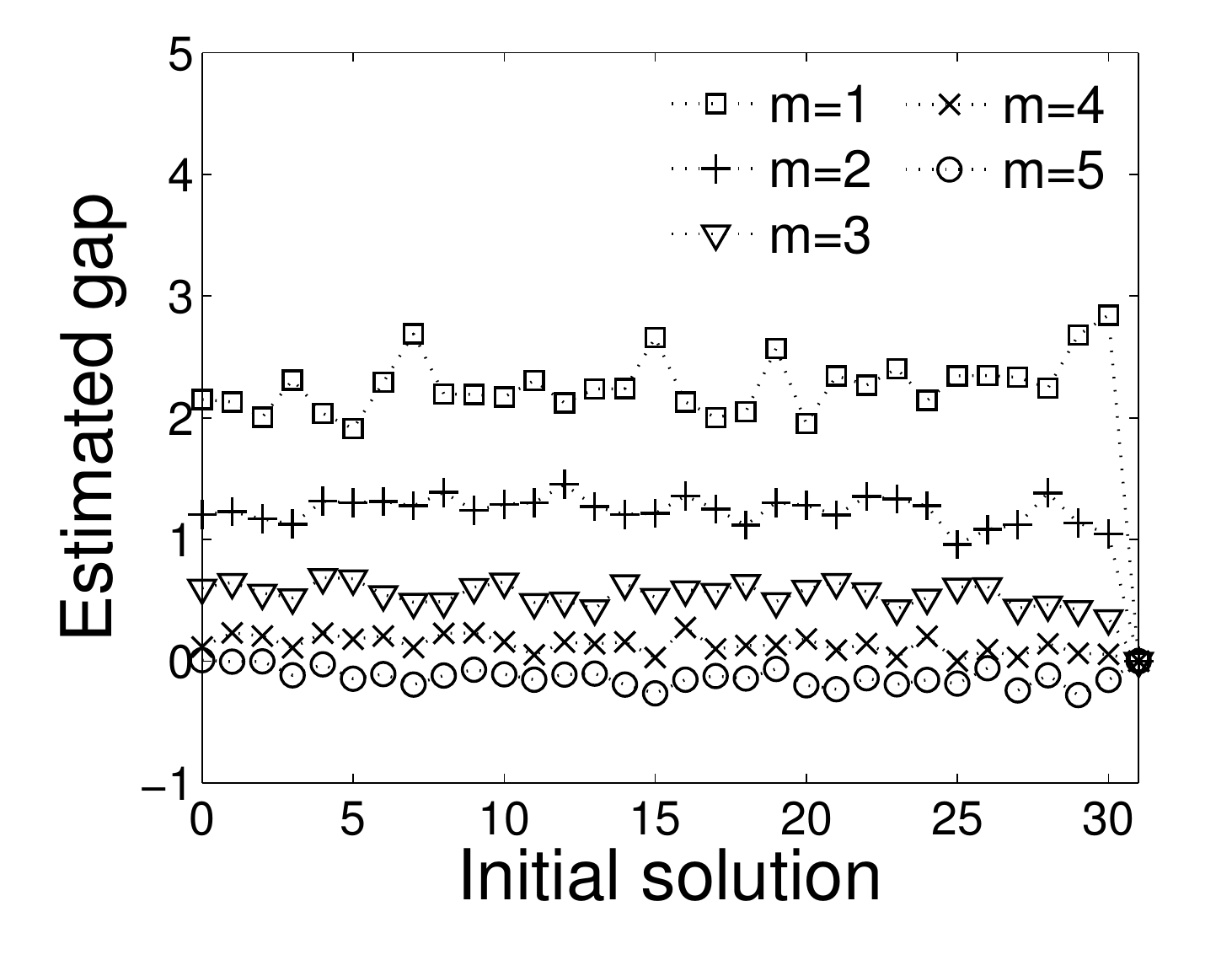}
\end{minipage}
\begin{minipage}[c]{0.33\linewidth}\centering
        \includegraphics[width=0.8\linewidth,height=0.65\linewidth]{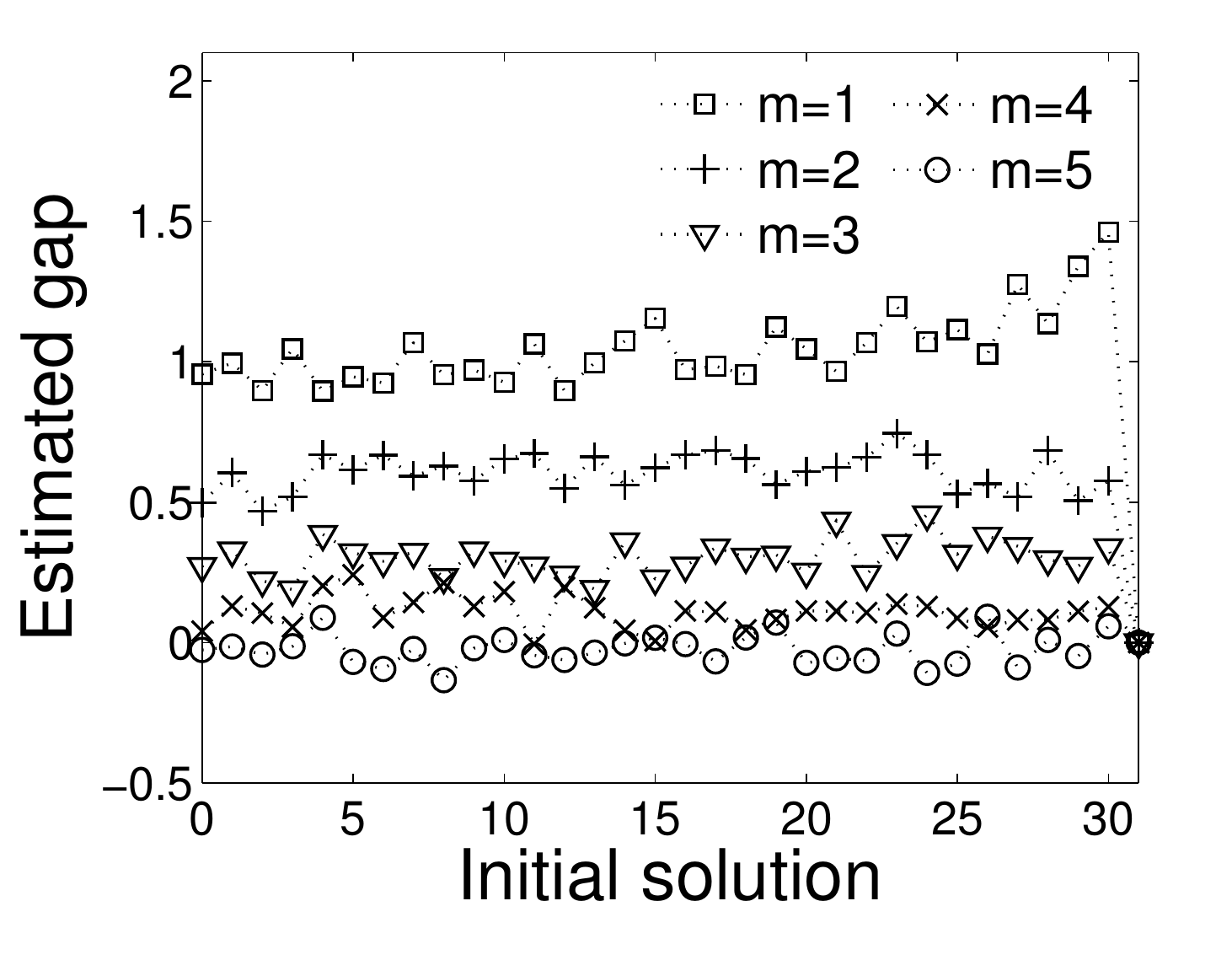}
\end{minipage}
\begin{minipage}[c]{0.33\linewidth}\centering
        \includegraphics[width=0.8\linewidth,height=0.65\linewidth]{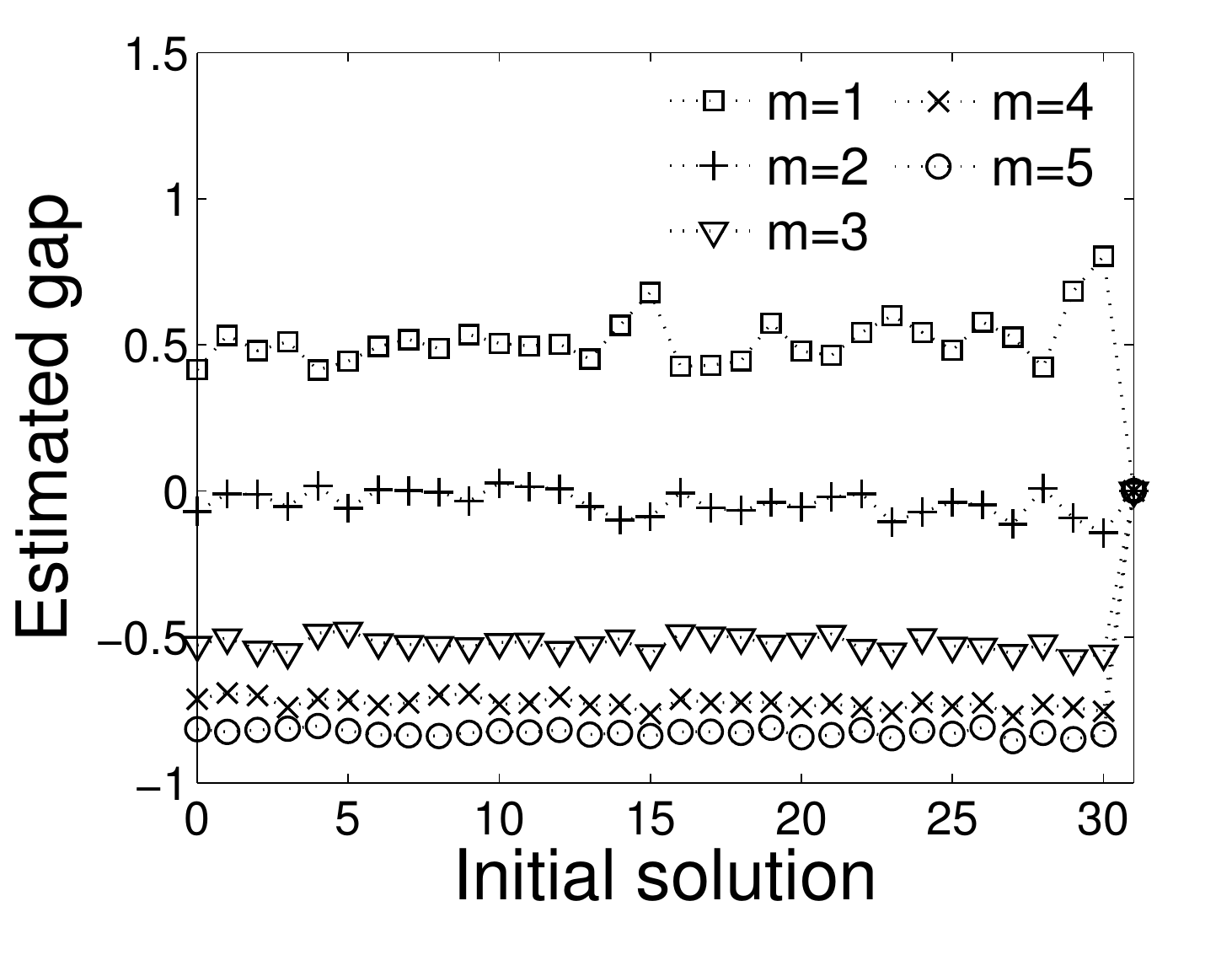}
\end{minipage}\\
\begin{minipage}[c]{0.33\linewidth}\centering
    \small(a) additive noise
\end{minipage}
\begin{minipage}[c]{0.33\linewidth}\centering
    \small(b) multiplicative noise
\end{minipage}
\begin{minipage}[c]{0.33\linewidth}\centering
    \small(c) one-bit noise
\end{minipage}\\\vspace{-0.6em}
\caption{Estimated ERT gap for (1+1)-EA solving Jump$_{m,5}$ problem with or without noise.}\label{fig_ratio}
\end{figure*}

We can observe that the gaps for larger $m$ are lower (i.e., the negative effect by noise decreases as the problem hardness increases), and the gaps for large $m$ tend to be 0 or negative values (i.e., noise can have no or positive effect when the optimization is quite hard). These empirical observations give support to our hypothesis that the noise should be handled carefully only when the optimization is moderately or less complex.

\section{On the Usefulness of Noise Handling Strategies}

\subsection{Re-evaluation}

There are naturally two fitness evaluation options for EAs \cite{arnold2002local,jin2005evolutionary,goh2007investigation,heidrich2009hoeffding}:
\begin{itemize}
\item \textbf{single-evaluation} we evaluate a solution once, and use the evaluated fitness for this solution in the future.
\item \textbf{re-evaluation} every time we access the fitness of a solution by evaluation.
\end{itemize}
For example, for (1+1)-EA in Algorithm \ref{(1+1)-EA}, if using re-evaluation, both $f(x')$ and $f(x)$ will be calculated and recalculated in each iteration; if using single-evaluation, only $f(x')$ will be calculated and the previous obtained fitness $f(x)$ will be reused. Intuitively, re-evaluation can smooth noise and thus could be better for noisy optimizations, but it also increases the fitness evaluation cost and thus increases the running time. Its usefulness was not yet clear. Note that, the analysis in the previous section assumes single-evaluation.

In this section, we take the I$_{easiest}$ problem, where noise has been proved to have a strong negative effect in the previous section, as the representative problem, and compare these two options for (1+1)-EA with mutation probability $\frac{1}{n}$ solving this problem under one-bit noise to show whether re-evaluation is useful. Note that for one-bit noise, $p_n$ controls the noise level, that is, noise becomes stronger as $p_n$ gets larger, and it is also the variable of the PNT.

\begin{theorem}\label{runtime_without}
The PNT of (1+1)-EA using single-evaluation with mutation probability $\frac{1}{n}$ on I$_{easiest}$ problem is lower bounded by $1-1/\Omega(poly(n))$ and upper bounded by $1-1/O(2^npoly(n))$, where $poly(n)$ indicates any polynomial of $n$, with respect to one-bit noise.
\end{theorem}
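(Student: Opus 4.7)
The plan is to prove the lower and upper bounds on the PNT separately. Because single-evaluation retains the stored noisy fitness $v$ of the current solution $x$ until it is replaced, I would model the run by a Markov chain on the enlarged state space of pairs $(x, v)$. Two structural facts drive the whole proof: one-bit noise shifts the true fitness by at most one, so $v \in \{|x|_1 - 1,\, |x|_1,\, |x|_1 + 1\}$ whenever $x \neq 1^n$; and because acceptance requires $f^N(x') \geq v$, the value $v$ is monotonically non-decreasing along the trajectory.

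For the lower bound PNT $\geq 1 - 1/\Omega(poly(n))$, I would apply drift analysis (Lemma \ref{drift}) with distance function $V(x, v) = n - |x|_1$. The hard case at level $|x|_1 = k < n$ is when $v$ has been inflated to $k+1$; monotonicity of $v$ then rules out any $|x'|_1 < k$ being accepted, so no regress can occur from this state. Progress to level $k+1$ requires flipping exactly a single 0-bit (probability $\geq (n-k)/(en)$) together with the resulting noisy evaluation being at least $k+1$, which fails only when noise picks one of the $k+1$ one-bits of $x'$ (probability $\leq p_n(k+1)/n$). This yields a per-step progress probability of order $(n-k)(1-p_n)/n$, and a standard fitness-level summation gives $\expect{\tau} = O(n \log n / (1 - p_n))$, which is polynomial whenever $1 - p_n \geq 1/poly(n)$.

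For the upper bound PNT $\leq 1 - 1/O(2^n poly(n))$, I would single out the trap set $\mathcal{T} = \{(x, v) : |x|_1 = n-1,\ v = n\}$ and combine two ingredients. First, the expected time to leave $\mathcal{T}$ for $1^n$ is $\Omega(n/(1-p_n))$: any $x'$ with $|x'|_1 \leq n-2$ has $f^N(x') \leq n-1 < v$ and is rejected, so the only acceptance escaping $\mathcal{T}$ is to generate $x' = 1^n$ (probability $(1/n)(1-1/n)^{n-1}$) and obtain a noise-free evaluation (probability $1-p_n$); every other acceptance merely moves the chain to another element of $\mathcal{T}$ and therefore does not help. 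Second, the chain enters $\mathcal{T}$ with probability $\Omega(1/n)$: direct jumps from $|x|_1 \leq n-2$ to $|x|_1 = n$ require flipping at least two specific bits simultaneously and are dominated by jumps to level $n-1$, so with probability $1 - O(1/n)$ the chain first reaches $|x|_1 = n-1$ via an acceptance, and at that step the freshly sampled $v$ equals $n$ with probability $p_n/n$. Multiplying gives $\expect{\tau} = \Omega(1/(1-p_n))$, which is super-polynomial whenever $1-p_n \leq 1/(c \cdot 2^n \cdot poly(n))$.

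The main obstacle I anticipate is, on the lower-bound side, controlling possible regress from non-inflated states (where $v \leq |x|_1$) so that the drift on $V$ stays positive on the enlarged state space $(x, v)$; monotonicity of $v$ suggests the argument can be reduced to the worst-case inflated regime, but it may instead require a weighted potential that attaches a small penalty to large $v$. On the upper-bound side, the delicate step is rigorously lower bounding the trap-entry probability — specifically, showing that the bypass paths $|x|_1 \leq n-2 \to |x|_1 = n$ are sufficiently rare, even conditionally on the run, that the contribution from trap paths survives at the $2^n \cdot poly(n)$ scale needed for the conclusion.
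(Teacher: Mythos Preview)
Your lower-bound-on-PNT plan has a real gap that you yourself flag but do not close: the potential $V(x,v)=n-|x|_1$ can regress whenever $v\le |x|_1$, and at $(|x|_1=n-1,\,v=n-1)$ the one-step drift works out to roughly $(1-2p_n)/(en)$, which is \emph{negative} for every $p_n>1/2$; so Lemma~\ref{drift} simply does not apply on the enlarged state space, and neither of your suggested patches (restrict to the inflated regime, or attach a $v$-penalty) obviously rescues it. The paper's key move is to abandon $|x|_1$ and instead track the stored noisy value $L=v$ itself as the progress measure. Because acceptance requires $f^N(x')\ge v$, the quantity $L$ is genuinely monotone along the trajectory, so there is no regress term at all; one only has to bound, for each level $L=i$, the waiting time until $L$ strictly increases. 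The paper does this by a two-step argument conditioning on whether $|x|_1$ equals $L-1$, $L$, or $L+1$, obtaining $O(n^2)$ steps for $L$ to reach $n$ and then a further $O(n/(1-p_n))$ to escape the final trap, giving the additive bound $O(n^2+n/(1-p_n))$ rather than your multiplicative $O(n\log n/(1-p_n))$.

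For the upper bound on PNT, your dynamical trap-entry argument would, if made rigorous, yield more than the theorem asks for, but the conditioning is genuinely delicate (the probability that a freshly accepted $x'$ at level $n-1$ has $f^N(x')=n$ is not simply $p_n/n$ once you condition on acceptance, and the chain may pass through level $n-1$ several times with $v<n$ before either trapping or reaching $1^n$). The paper sidesteps all of this by exploiting the uniform initial distribution directly: with probability $n/2^n$ the initial solution already has $n-1$ ones, and with further probability $p_n/n$ its initial evaluation is $n$, placing the chain in $\mathcal{T}$ before any dynamics occur. Multiplying by the escape bound $\Omega(n/(1-p_n))$ (which you have) gives $\Omega(np_n/(2^n(1-p_n)))$, exactly enough for the stated PNT upper bound $1-1/O(2^n\,poly(n))$ and with no hitting-time analysis required.
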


The theorem is straightforwardly derived from the following lemma.

\begin{lemma}\label{runtime_single}
For (1+1)-EA using single-evaluation with mutation probability $\frac{1}{n}$ on I$_{easiest}$ problem under one-bit noise, the expected running time is $O(n^2+n/(1-p_n))$ and $\Omega(np_n/(2^n(1-p_n)))$.
\end{lemma}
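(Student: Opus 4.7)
My plan is to prove the upper and lower bounds separately by modeling the (1+1)-EA with single-evaluation as a Markov chain whose state is the pair $(x_t, F_{x_t})$, where $F_{x_t}$ denotes the stored (possibly noisy) fitness carried between iterations. A key structural observation is that under one-bit noise $F_x \in \{f(x)-1, f(x), f(x)+1\}$, since flipping a single bit changes the OneMax value by at most one. This bound on the discrepancy between stored and true fitness is what drives both sides of the argument.

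For the upper bound $O(n^2 + n/(1-p_n))$, I would apply a fitness-level argument to the process $|x_t|_0$. For each level $j$ with $|x_t|_0 = j > 0$, I lower bound the one-step probability $s_j$ of reaching a strictly smaller level by considering two disjoint classes of improving mutations, taken under the worst-case stored value $F_x = f(x)+1$. A single-bit improving mutation (flip exactly one 0-bit to 1) occurs with probability $\Theta(j/n)$ and is accepted with probability at least $(1-p_n) + p_n(j-1)/n \geq 1-p_n$, contributing a term of order $(1-p_n)\, j/n$ to $s_j$. A two-bit improving mutation (flip two 0-bits to 1 and nothing else) occurs with probability $\Theta(j^2/n^2)$ and is automatically accepted since it raises the true fitness by at least $2$, contributing a term of order $j^2/n^2$ that is independent of $p_n$. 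Summing $1/s_j$ over $j=1,\ldots,n$ and splitting the sum by which term dominates (single-bit for small $j$, two-bit for large $j$) yields the target $O(n^2 + n/(1-p_n))$ bound.

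For the lower bound $\Omega(np_n/(2^n(1-p_n)))$, I would apply the law of total expectation $\expect{\tau} \geq \Pr(E)\cdot \expect{\tau \mid E}$ for a carefully constructed initial event $E$. The event would combine a rare outcome of $x_0$ under the uniform initial distribution (contributing a factor $\Theta(1/2^n)$) with a specific noise outcome during the initial fitness evaluation that produces an overestimate $F_{x_0} = f(x_0)+1$ (contributing a factor $\Theta(p_n)$ from noise flipping a specific 0-bit to 1), giving $\Pr(E) = \Theta(p_n/2^n)$. Under $E$, the overestimated stored fitness severely restricts which offspring can be accepted: a per-iteration transition that genuinely decreases $|x_t|_0$ essentially requires the offspring's noisy evaluation to meet the inflated threshold, which occurs only with probability of order $(1-p_n)/n$. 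Hence $\expect{\tau \mid E} = \Omega(n/(1-p_n))$, and multiplying yields the claimed lower bound.

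The main obstacle is the lower bound: isolating the correct event $E$ and rigorously extracting the $1/(1-p_n)$ factor in the conditional expected running time. Sideways transitions that preserve the overestimate state must be separated from genuine escape events, and the Markov chain restricted to the stuck regime has to be analyzed to handle the possibility of cycling through several near-optimum states before $1^n$ is generated as an offspring. The $n^2$ term in the upper bound, by contrast, should be straightforward once the fitness-level decomposition is in place.
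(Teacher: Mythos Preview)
Your lower-bound plan is essentially the paper's argument: take $E$ to be the event that $|x_0|_1=n-1$ (probability $n/2^n$) and the initial noisy evaluation overestimates to $F_{x_0}=n$ (probability $p_n/n$). Conditional on $E$, every accepted non-optimal offspring again has $n-1$ ones and stored fitness $n$, so escape happens only via the event ``mutate to $1^n$ and evaluate without noise'', whose per-step probability is at most $(1-p_n)/n$. This yields $\expect{\tau\mid E}\ge n/(1-p_n)$ and hence the stated $\Omega(np_n/(2^n(1-p_n)))$.

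Your upper-bound plan has a real gap. A fitness-level argument on $|x_t|_0$ requires that $|x_t|_0$ be non-increasing, but under single-evaluation it is not: if the parent is \emph{under}estimated, $F_x=f(x)-1$, then an offspring $x'$ with $|x'|_0=|x|_0+1$ can be accepted (namely when $f^N(x')\ge f(x)-1$, which happens with probability $1-p_n|x'|_1/n$). Your computation of $s_j$ is done under the worst case $F_x=f(x)+1$, and under that case backward moves are indeed impossible, but the actual process visits underestimated states with non-negligible probability (especially when $p_n$ is close to $1$), so $\sum_j 1/s_j$ is not a valid bound as stated. The paper avoids this by tracking the stored fitness $L:=F_x$ instead of $|x|_0$; $L$ is genuinely monotone because acceptance requires $f^N(x')\ge F_x$. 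At each level $L$ one has $|x|_1\in\{L-1,L,L+1\}$, and the paper lower-bounds the probability that $L$ increases within \emph{two} steps uniformly over these three cases, obtaining $O(n^2)$ steps to reach $L=n$. Once $L=n$ the true state has $|x|_1\in\{n-1,n\}$, and the final step costs $O(n/(1-p_n))$ --- which is exactly the role your $j=1$ term was meant to play. So the decomposition you want is recoverable, but you must run the level argument on $L$, not on $|x|_0$.
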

\begin{proof}
Let $L$ denote the noisy fitness value $f^N(x)$ of the current solution $x$. Because (1+1)-EA does not accept a solution with a smaller fitness (i.e., the 4th step of Algorithm \ref{(1+1)-EA}) and it doesn't re-evaluate the fitness of the current solution $x$, $L\;(0\leq L\leq n)$ will never decrease. We first analyze the expected steps until $L$ increases when starting from $L=i$ (denoted by $\expect{i}$), and then sum up them to get an upper bound $\sum\nolimits^{n-1}_{i=0} \expect{i}$ for the expected steps until $L$ reaches the maximum value $n$. For $\expect{i}$, we analyze the probability $P$ that $L$ increases in two steps when $L=i$, then $\expect{i}=2 \cdot \frac{1}{P}$. Note that, one-bit noise can make $L$ be $|x|_1-1$, $|x|_1$ or $|x|_1+1$, where $|x|_1=\sum^{n}_{i=1} x_i$ is the number of 1 bits. When analyzing the noisy fitness $f^N(x')$ of the offspring $x'$ in each step, we need to first consider bit-wise mutation on $x$ and then one random bit flip for noise.

When $0<L<n-1$, $|x|_1=L-1$, $L$ or $L+1$.\\
(1) For $|x|_1=L-1$, $P\geq\frac{n-L+1}{n}(1-\frac{1}{n})^{(n-1)}p_n\frac{n-L}{n}+\frac{n-L+1}{n} (1-\frac{1}{n})^{(n-1)}(1-p_n)\frac{n-L}{n}(1-\frac{1}{n})^{(n-1)}(1-p_n)$,
since it is sufficient to flip one 0 bit for mutation and one 0 bit for noise in the first step, or flip one 0 bit for mutation and no bit for noise in the first step and flip one 0 bit for mutation and no bit for noise in the second step.\\
(2) For $|x|_1=L$, $P\geq(1-\frac{1}{n})^np_n\frac{n-L}{n}+\frac{n-L}{n}(1-\frac{1}{n})^{n-1}(1-p_n),$ since it is sufficient to flip no bit for mutation and one 0 bit for noise, or flip one 0 bit for mutation and no bit for noise in the first step. \\
(3) For $|x|_1=L+1$, $P\geq(1-\frac{1}{n})^{n}(1-p_n+p_n\frac{n-L-1}{n}),$ since it is sufficient to flip no bit for mutation and no bit or one 0 bit for noise in the first step. \\
Thus, for these three cases, we have
\begin{equation}
\begin{aligned}\label{one-step-probability}
P&\geq p_n(1-\frac{1}{n})^{(n-1)}\frac{n-L}{n}\frac{n-L-1}{n}+(1-\frac{1}{n})^{2(n-1)} (1-p_n)^2 \frac{n-L}{n}\frac{n-L-1}{n}\\
&\geq^1 (p_n+(1-p_n)^2) \frac{(n-L)(n-L-1)}{e^2n^2} \geq^2 \frac{3(n-L)(n-L-1)}{4e^2n^2},
\end{aligned}
\end{equation}
where the `$\geq^1$' is by $(1-\frac{1}{n})^{n-1} \geq \frac{1}{e}$ and the `$\geq^2$' is by $0\leq p_n\leq 1$.

When $L=0$, $|x|_1=0$ or 1. By considering case (2) and (3), we can get the same lower bound for $P$.

When $L=n-1$ and the optimal solution $1^n$ has not been found, $|x|_1=n-2$ or $n-1$. By considering case (1) and (2), we can get $P \geq 3/(2e^2n^2)$.

Based on the above analysis, we can get that the expected steps until $L=n$ is at most
$$\sum\nolimits^{n-1}_{i=0} \expect{i}\leq 2 \cdot (\sum^{n-2}_{L=0}\frac{4e^2n^2}{3(n-L)(n-L-1)}+\frac{2e^2n^2}{3}),\; \text{i.e.}, O(n^2).$$

When $L=n$, $|x|_1=n-1$ or $n$ (i.e., the optimal solution has been found). If $|x|_1=n-1$, the optimal solution will be generated and accepted in one step with probability $\frac{1}{n}(1-\frac{1}{n})^{n-1}(1-p_n)\geq \frac{(1-p_n)}{en}$, because it needs to flip the unique 0 bit for mutation and no bit for noise. This implies that the expected steps for finding the optimal solution is at most $\frac{en}{(1-p_n)}$.

Thus, we can get the upper bound $O(n^2+\frac{n}{1-p_n})$ for the expected running time of the whole process.

Then, we are to analyze the lower bound. Assume that the initial solution $x_{init}$ has $n-1$ number of 1 bits, i.e., $|x_{init}|_1=n-1$. If the fitness of $x_{init}$ is evaluated as $n$, which happens with probability $p_n\frac{1}{n}$, before finding the optimal solution, the solution will always have $n-1$ number of 1 bits and its fitness will always be $n$. From the above analysis, we know that in such a situation, the probability of generating and accepting the optimal solution in one step is $\frac{1}{n}(1-\frac{1}{n})^{n-1}(1-p_n) \leq \frac{(1-p_n)}{n}$. Thus, the expected running time for finding the optimal solution when starting from $|x_{init}|_1=n-1$ is at least $p_n\frac{1}{n} \cdot \frac{n}{(1-p_n)}=\frac{p_n}{(1-p_n)}$. Because the initial solution is uniformly distributed over $\{0,1\}^n$, the probability that the algorithm starts from $|x_{init}|_1=n-1$ is $n/2^n$. Thus, we can get the lower bound $\Omega(\frac{np_n}{2^n(1-p_n)})$ for the expected running time of the whole process.
\end{proof}

\begin{theorem}\label{runtime_with}
The PNT of (1+1)-EA using re-evaluation with mutation probability $\frac{1}{n}$ on I$_{easiest}$ problem is $\Theta(\frac{\log(n)}{n})$, with respect to one-bit noise.
\end{theorem}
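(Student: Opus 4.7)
The plan is to split the $\Theta(\log n/n)$ claim into matching upper and lower bounds on the expected running time. For the upper direction (PNT $\geq \Omega(\log n/n)$), I show polynomial runtime whenever $p_n = O(\log n/n)$; for the lower direction (PNT $\leq O(\log n/n)$), I show super-polynomial runtime as soon as $p_n$ exceeds $c \log n / n$ for a sufficiently large constant $c$.

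For the upper bound, I would use drift analysis with $V(x) = n - |x|_1$ and carefully account for the re-evaluation of both parent and offspring in every iteration. A single noisy evaluation deviates from the true fitness by $\pm 1$ with total probability $p_n$, so the comparison $f^N(x') \geq f^N(x)$ is ``truthful'' with probability at least $1 - 2p_n$. I would split mutations into strictly improving ($|x'|_1 > |x|_1$), strictly worsening, and neutral outcomes as in the proof of Lemma~\ref{runtime_single}. At state $|x|_1 = k$, the probability of generating and accepting a one-bit improvement is at least $\frac{n-k}{en}(1 - O(p_n))$, while the probability of accepting a worsening offspring is $O(p_n \cdot k/n)$. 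Summing the contributions, the expected drift remains $\Omega(1/n)$ everywhere except in the boundary regime $|x|_1 = n-1$, where a separate ``level'' argument combined with the one-step improvement probability $\Theta(\frac{1 - p_n}{n})$ yields an $O(n \log n / (1 - O(np_n)))$ bound on the expected running time, which is polynomial whenever $np_n = O(\log n)$.

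For the lower bound, I would again focus on the bottleneck $|x|_1 = n - 1$. From this state, reaching $1^n$ in one step requires mutation flipping exactly the unique zero-bit (probability $\Theta(1/n)$) together with a truthful comparison, while a regression event, for example producing an $x'$ with $|x'|_1 = n-2$ and then having the two noisy evaluations collude so that $f^N(x') \geq f^N(x)$, occurs with probability $\Theta(p_n)$ per iteration. Hence the conditional probability that the next ``decisive'' step is progress rather than regression is $\Theta(1/(1+np_n))$. I would then argue that each regression costs $\Omega(n/\log n)$ steps in expectation to return to state $n-1$ via a symmetric drift argument, so the overall expected running time is at least $\Omega((1 + np_n) \cdot n/\log n)$, which is super-polynomial exactly when $np_n = \omega(\log n)$.

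The main obstacle will be the fine-grained analysis at the boundary $|x|_1 = n-1$: because one-bit noise couples the value of $f^N(x)$ to \emph{which} particular bit was flipped, and because $x$ and $1^n$ differ in only one bit, the acceptance probability must be computed by explicit enumeration of the joint noise patterns rather than a crude $1 - O(p_n)$ bound. Threading this enumeration through both bounds so that they meet cleanly at the $\log n / n$ threshold, rather than being separated by extra logarithmic factors, is the delicate step; it may require either a refined multiplicative drift theorem or a negative-drift tail argument on a tailored potential that encodes both the current state and the recent noise history.
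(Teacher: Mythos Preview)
The paper's own proof is a one-line citation: it invokes Droste's theorem \cite{droste2004analysis}, which already establishes polynomial expected running time for $p_n \in O(\log n/n)$ and super-polynomial running time for $p_n \in \omega(\log n/n)$. Your proposal is therefore an attempted re-derivation of Droste's result rather than something to compare against the paper's argument.

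There is a genuine gap in your lower-bound argument. You conclude with an expected running time of $\Omega\bigl((1+np_n)\cdot n/\log n\bigr)$ and assert this is super-polynomial when $np_n = \omega(\log n)$. It is not: since $p_n \le 1$, your bound never exceeds $O(n^2/\log n)$, which is polynomial for every value of $p_n$. A geometric-trials argument of the form ``regress with probability $\Theta(p_n)$, pay a recovery cost of $\Omega(n/\log n)$, repeat'' can never yield a super-polynomial lower bound, because both the expected number of trials and the cost per trial are polynomial. The mechanism that actually produces the super-polynomial bound is a \emph{negative-drift} argument over an interval of length $\Theta(\log n)$ near the optimum: for $p_n = \omega(\log n/n)$ the drift of $|x|_1$ is strictly negative throughout $[n-\Theta(\log n),\,n-1]$, and a negative-drift theorem then gives a hitting time exponential in the interval length, hence $n^{\omega(1)}$. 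Your recovery-cost picture misses that regressions compound rather than being paid off one at a time. The upper-bound sketch has a related problem: plain additive drift on $V(x)=n-|x|_1$ is already negative at $|x|_1=n-1$ once $p_n \gg 1/n$, so the expression $O(n\log n/(1-O(np_n)))$ is vacuous in the target regime $np_n=\Theta(\log n)$; reaching the $\log n/n$ threshold requires a multiplicative-drift or layered potential, not the linear potential you propose.
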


The theorem is straightforwardly derived from the following lemma.

\begin{lemma} [\cite{droste2004analysis}]
For (1+1)-EA using re-evaluation with mutation probability $\frac{1}{n}$ on I$_{easiest}$ problem under one-bit noise, the expected running time is polynomial when $p_n\in O(\log(n)/n)$, and the running time is polynomial with super-polynomially small probability when $p_n\in \omega(\log(n)/n)$.
\end{lemma}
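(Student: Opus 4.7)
The plan is to treat the two regimes of $p_n$ separately, following the approach of Droste. Throughout, let $X_t = |x_t|_1$ denote the number of $1$-bits of the current solution at iteration $t$; since I$_{easiest}(x) = |x|_1$, the goal is to reach $X_t = n$. A crucial preliminary observation is that one-bit noise shifts the fitness of any solution by at most $1$, so whenever the true fitness gap $|x'|_1 - |x|_1$ is at least $2$, the comparison $f^N(x') \geq f^N(x)$ always succeeds regardless of noise.

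For the polynomial upper bound when $p_n \in O(\log n/n)$, I would first bound, for each level $k<n$, the probability of an accepted improving step. Partitioning by the mutation outcome $\Delta = |x'|_1 - |x|_1$: (i) for $\Delta \geq 2$ the step is always accepted; (ii) for $\Delta = 1$ the comparison fails only when noise simultaneously lowers the offspring's evaluation and raises the parent's evaluation, an event of probability $O(p_n^2)$; (iii) for $\Delta \leq -1$ harmful acceptance requires a noise event of probability $O(p_n)$. The dominant gain term is the classical $\frac{n-k}{n}(1-\frac{1}{n})^{n-1} \geq \frac{n-k}{en}$ probability of flipping exactly one $0$-bit, multiplied by the acceptance probability $1-O(p_n^2)$. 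Subtracting the harmful-acceptance loss, of order $k p_n / n = O(k\log n/n^2)$, which remains dominated by the gain under the assumption on $p_n$, gives an expected per-step drift of $\Omega((n-k)/n)$ on the potential $V(x) = n - |x|_1$. Applying the fitness-level method (or additive drift with a reweighted potential of the form $\sum_{j > k} \Theta(n/(n-j))$) then yields a $\operatorname{poly}(n)$ upper bound on the expected number of iterations, and hence on the expected running time since re-evaluation only multiplies the per-iteration cost by a constant.

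For the super-polynomial lower-bound statement when $p_n \in \omega(\log n/n)$, I would concentrate on the boundary levels $k = n-1$ and $k = n$. To escape $k = n-1$ to $k = n$, mutation must flip the unique $0$-bit (probability at most $\frac{1}{n}(1-\frac{1}{n})^{n-1}$) and the re-evaluated comparison must accept; conversely, once $X_t = n$, re-evaluation lets the optimum $1^n$ be displaced by a neighbour of true fitness $n-1$ whenever noise pushes the parent's evaluation strictly below the offspring's, an event of probability $\Omega(p_n/n)$ per iteration. Over $T = n^{O(1)}$ iterations, the probability that no such adversarial boundary event occurs is at most $(1-\Omega(p_n))^T \leq \exp(-\Omega(p_n T))$, which is $n^{-\omega(1)}$ throughout the $\omega(\log n / n)$ regime. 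A stopping-time argument coupling the chain above $X_t = n - O(1)$ with a random walk biased downward by these noise-induced losses then shows that the optimum is re-found within any polynomial horizon only with super-polynomially small probability.

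The main obstacle will be the drift accounting in the upper-bound regime near the top fitness level $k = n-1$, where the true gap to the optimum is only $1$ and noise is most dangerous; this is precisely where the threshold $\log n / n$ arises, since the probability of a clean improvement step at the final level must remain at least inverse-polynomial over a polynomial number of attempts, and a larger $p_n$ would wipe out that margin. The transition from \emph{polynomial in expectation} to \emph{polynomial only with super-polynomially small probability} is governed by whether the geometric success probability at the last level is inverse-polynomial or inverse-super-polynomial, and it is this sharp threshold phenomenon that the two halves of the lemma jointly express.
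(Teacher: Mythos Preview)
The paper does not prove this lemma; it is quoted from Droste's work with a citation and used as a black box to obtain the PNT statement. So there is no in-paper proof to compare against, and I comment only on the soundness of your sketch.

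Your upper-bound argument has a real gap exactly where you flag the ``main obstacle''. At level $k=n-1$ the improving probability is $\Theta(1/n)$ while the harmful-acceptance probability is $\Theta(p_n)$, which at the top of the allowed range $p_n=\Theta(\log n/n)$ already dominates the gain; the loss term $kp_n/n$ is thus \emph{not} dominated by $(n-k)/n$, and your claimed drift $\Omega((n-k)/n)$ fails for every $k>n-\Theta(np_n)$. The plain additive-drift or fitness-level argument therefore does not close on its own. One needs a separate treatment of the top $O(\log n)$ levels---for instance a gambler's-ruin style computation showing that, although the per-level bias ratio against progress reaches $\Theta(np_n)=O(\log n)$, the product of these ratios over a width-$O(\log n)$ region is still only polynomial in $n$.

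Your lower-bound argument is conceptually off: once $X_t=n$ the optimum has been hit and the running time is determined, so there is no ``displacing'' or ``re-finding'' of $1^n$ under the first-hitting-time definition used here. The correct mechanism is negative drift \emph{before} reaching the optimum. When $p_n=\omega(\log n/n)$ the interval $\{k:n-k<\Theta(np_n)\}$ has width $\omega(\log n)$, and throughout it the probability of an accepted decrease (flip one $1$-bit; noise bridges the unit gap with probability $\Theta(p_n)$) exceeds the probability of an accepted increase by at least a constant factor; a negative-drift theorem then yields that this interval is crossed in polynomial time only with probability $n^{-\omega(1)}$. Your random-walk coupling idea points the right way, but it must be applied to this pre-optimal barrier rather than to excursions away from $X_t=n$.
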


\subsection{Threshold Selection}

During the process of evolutionary optimization, most of the improvements in one generation are small. When using re-evaluation, due to noisy fitness evaluation, a considerable portion of these improvements are not real, where a worse solution appears to have a ``better" fitness and then survives to replace the true better solution which has a ``worse" fitness. This may mislead the search direction of EAs, and then slow down the efficiency of EAs or make EAs get trapped in the local optimal solution, as observed in Section 4.1. To deal with this problem, a selection strategy for EAs handling noise was proposed \cite{markon2001thresholding}.
\begin{itemize}
\item \textbf{threshold selection} an offspring solution will be accepted only if its fitness is larger than the parent solution by at least a predefined threshold $\tau \geq 0$.
\end{itemize}
For example, for (1+1)-EA with threshold selection as in Algorithm \ref{(1+1)-EA-threshold}, its 4th step changes to be ``if {$f(x') \geq f(x)+\tau$}" rather than ``if {$f(x') \geq f(x)$}" in Algorithm \ref{(1+1)-EA}. Such a strategy can reduce the risk of accepting a bad solution due to noise. Although the good local performance (i.e., the progress of one step) of EAs with threshold selection has been shown on some problems \cite{markon2001thresholding,beielstein2002threshold,bartz2005new}, its usefulness for the global performance (i.e., the running time until finding the optimal solution) of EAs under noise is not yet clear.

\begin{algorithm}[(1+1)-EA with threshold selection]\label{(1+1)-EA-threshold} Given pseudo-Boolean function $f$ with solution length $n$, and a predefined threshold $\tau \geq 0$, it consists of the following steps:\\
    \begin{tabular}{ll}
    1. & $x:=$ randomly selected from $\{0,1\}^{n}$.\\
    2. & Repeat until the termination condition is met\\
    3. & \quad $x':=$ flip each bit of $x$ with probability $p$. \\
    4. &\quad if {$f(x') \geq f(x)+\tau$} \\
    5. &\quad \quad $x:=x'$.
    \end{tabular}\\
where $p \in (0,0.5)$ is the mutation probability.
\end{algorithm}

In this section, we compare the running time of (1+1)-EA with and without threshold selection solving I$_{easiest}$ problem under one-bit noise to show whether threshold selection will be useful. Note that, the analysis here assumes re-evaluation.

Algorithm \ref{random_walk} shows a random walk on a graph. Lemma \ref{theo_randwalk} gives an upper bound on the expected steps for a random walk to visit each vertex of a graph at least once, which will be used in the following analysis.

\begin{algorithm}[Random Walk]\label{random_walk}
Given an undirected connected graph $G=(V,E)$ with vertex set $V$ and edge set $E$, it consists of the following steps:\\
    \begin{tabular}{ll}
    1. & start at a vertex $v \in V$.\\
    2. & Repeat until the termination condition is met\\
    3. &\quad choose a neighbor $u$ of $v$ in $G$ uniformly at random. \\
    4. &\quad set $v:=u$.
    \end{tabular}\\
\end{algorithm}

\begin{lemma}[\cite{aleliunas1979random}]\label{theo_randwalk}
Given an undirected connected graph $G=(V,E)$, the expected cover time of a random walk on $G$ is upper bounded by $2|E|(|V|-1)$, where the cover time of a random walk on $G$ is the number of steps until each vertex $v \in V$ has been visited at least once.
\end{lemma}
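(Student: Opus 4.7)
The plan is to prove this classical bound in two steps: first an edge-level commute time bound, and second a spanning tree traversal argument that lifts the edge bound to the full cover time. Throughout, let $h(a,b)$ denote the expected number of steps for the random walk to first reach $b$ starting from $a$.

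For the edge commute time bound, I would show that for every edge $\{u,v\} \in E$ one has $h(u,v) + h(v,u) \leq 2|E|$. The natural approach uses the stationary distribution $\pi(v) = \deg(v)/(2|E|)$ of the walk, which satisfies detailed balance because transitions out of $v$ are uniform over its $\deg(v)$ neighbors. Lifting the chain to the set of $2|E|$ directed edges $(a,b)$ with $\{a,b\} \in E$, the transition from $(a,b)$ to $(b,c)$ is uniform over neighbors $c$ of $b$, and the resulting edge chain has uniform stationary distribution on its $2|E|$ states. Hence the expected return time to a fixed directed edge $(v,u)$ is exactly $2|E|$; decomposing this return time into (i) a walk from $u$ back to $v$ and (ii) the single traversal of the edge $(v,u)$ that completes the return yields the commute bound after careful accounting.

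For the cover time step, since $G$ is connected I would fix a spanning tree $T$ with $|V|-1$ edges rooted at the starting vertex $v_0$, and consider a depth-first closed tour of $T$: a sequence $v_0, v_1, \ldots, v_{2(|V|-1)} = v_0$ that visits every vertex of $G$ and traverses each tree edge exactly twice, once in each direction. Since the cover time is bounded above by the expected time for the random walk to realize this particular sequence of vertices in order, it is at most $\sum_{i=0}^{2(|V|-1)-1} h(v_i, v_{i+1})$. Grouping the two opposite traversals of each tree edge pairs the terms into $|V|-1$ commute times, each at most $2|E|$ by the previous step, giving the claimed bound $2|E|(|V|-1)$.

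The main obstacle is the edge commute time bound; once that is in hand, the spanning tree traversal argument is essentially bookkeeping enabled by the existence of an Eulerian tour of any tree. An alternative route goes through the electrical network identity $h(u,v) + h(v,u) = 2|E| \cdot R_{\mathrm{eff}}(u,v)$ combined with $R_{\mathrm{eff}}(u,v) \leq 1$ for adjacent vertices (the direct unit-resistance edge upper-bounds the effective resistance), which yields the same conclusion; however, introducing electrical network machinery seems disproportionate to the single bound we need, so the directed-edge chain argument is preferable.
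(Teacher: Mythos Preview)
Your argument is the standard Aleliunas--Karp--Lipton--Lov\'asz--Rackoff proof and is correct; the only point that deserves a line of justification is your step~(i)--(ii) decomposition, which should be read as: by the strong Markov property the return time to the directed edge $(v,u)$ from $u$ equals $h(u,v)$ plus the expected time from $v$ to next traverse $(v,u)$, and the latter is at least $h(v,u)$ since traversing $(v,u)$ in particular hits $u$.

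As for comparison with the paper: there is nothing to compare. The paper does not prove this lemma at all---it simply quotes it from \cite{aleliunas1979random} and uses it as a black box in the analyses of Lemma~\ref{theo_threshold} and Theorem~\ref{theo_threshold_smart}. So your write-up supplies strictly more than the paper does here.
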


\begin{theorem}\label{runtime_theo_threshold}
The PNT of (1+1)-EA using re-evaluation with threshold selection $\tau=1$ and mutation probability $\frac{1}{n}$ on I$_{easiest}$ problem is not less than $\frac{1}{2e}$, with respect to one-bit noise.
\end{theorem}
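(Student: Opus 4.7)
The plan is to track the level $k_t = |x_t|_1$ as a biased random walk on $\{0,1,\ldots,n\}$ and apply Lemma \ref{theo_randwalk} to conclude a polynomial expected hitting time of the optimum whenever $p_n \leq 1/(2e)$.

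First, I would establish a structural fact: because one-bit noise shifts fitness by at most $1$ and the threshold requires $f^N(x') \geq f^N(x)+1$, any offspring with $|x'|_1 \leq |x|_1 - 2$ satisfies $f^N(x') \leq |x|_1 - 1 \leq f^N(x)$ and is always rejected. Hence $k_t$ can decrease by at most one per iteration, so the dynamics of $k_t$ live on a path-like structure over $\{0,1,\ldots,n\}$.

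Next, I would estimate the probabilities $q^+_k$ of moving from level $k$ to $k+1$ and $q^-_k$ of moving to $k-1$. For a $+1$ move, it suffices that mutation flips exactly one $0$-bit and both re-evaluations are noise-free, giving
\begin{equation*}
q^+_k \;\geq\; \frac{n-k}{n}\left(1-\tfrac{1}{n}\right)^{n-1}(1-p_n)^2 \;\geq\; \frac{(n-k)(1-p_n)^2}{en}.
\end{equation*}
For a $-1$ move, the mutation must produce $|x'|_1=k-1$ (probability at most $k/n$), and because the true fitnesses differ by exactly $1$, acceptance forces the noise offsets to satisfy $d_{x'}-d_x \geq 2$, which only the combination $(d_{x'},d_x)=(+1,-1)$ achieves, so
\begin{equation*}
q^-_k \;\leq\; \frac{k}{n}\cdot\frac{p_n(n-k+1)}{n}\cdot\frac{p_n k}{n} \;=\; \frac{p_n^2\, k^2(n-k+1)}{n^3}.
\end{equation*}
Plugging in $p_n \leq 1/(2e)$ shows that the ratio $q^+_k/q^-_k$ is bounded below by a constant strictly greater than $1$ for every $k<n$, so the level chain has a positive conditional upward bias at each step.

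To conclude, I would couple $k_t$ with a simple random walk $Y_t$ on the path graph $G$ with vertex set $\{0,1,\ldots,n\}$ and edges $\{k,k+1\}$ for $0 \leq k < n$. By Lemma \ref{theo_randwalk}, the cover time of $G$ is at most $2|E|(|V|-1) = 2n^2$, so $Y_t$ reaches state $n$ in $O(n^2)$ steps on average. Because $k_t$ is biased upward at every level, a monotone coupling shows that $k_t$ reaches $n$ within at most $O(n^2)$ effective state changes. Each effective state change of $k_t$ requires at most $O(1/q^+_k)=O(n/(1-p_n)^2)=O(n)$ iterations of the EA in expectation, so the total expected running time is $O(n^3)$, which is polynomial in $n$. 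This yields PNT $\geq 1/(2e)$.

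The main obstacle will be making the coupling with the unbiased random walk fully rigorous. The level chain can take upward jumps larger than $1$ from multi-bit mutations, the conditional upward bias is smallest at the boundary $k=n-1$, and the waiting time between effective moves depends on $k$; I would address these issues by lower bounding $q^+_k$ uniformly over $k<n$ and then using a stepwise stochastic-dominance argument that matches effective moves of $k_t$ to random-walk steps of $Y_t$ rather than tying the two processes to a common time index.
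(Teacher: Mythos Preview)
Your proposal is correct and follows essentially the same route as the paper's proof of Lemma~\ref{theo_threshold}: bound level decreases to at most one, show $q^+_k \geq q^-_k$ when $p_n \leq 1/(2e)$, reduce to a random walk on the path $\{0,\ldots,n\}$ via Lemma~\ref{theo_randwalk}, and multiply the $O(n^2)$ relevant steps by the $O(n)$ expected waiting time per step to obtain $O(n^3)$. The coupling obstacle you flag is resolved in the paper simply by pessimistically setting $P'_1 = P'_{-1}$ and $P'_d=0$ for $d\geq 2$, which directly yields the symmetric random walk without needing an explicit stochastic-dominance construction.
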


The theorem can be directly derived from the following lemma.

\begin{lemma}\label{theo_threshold}
For (1+1)-EA using re-evaluation with threshold selection $\tau=1$ and mutation probability $\frac{1}{n}$ on I$_{easiest}$ problem under one-bit noise, the expected running time is $O(n^3)$ when $p_n \leq \frac{1}{2e}$.
\end{lemma}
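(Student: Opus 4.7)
My plan is to track the projected Markov chain $Y_t := |x_t|_1$ on the state space $\{0,1,\ldots,n\}$ and show that it reaches the absorbing state $n$ in $O(n^3)$ expected iterations, by reducing the hitting-time analysis to a random walk on the path graph and invoking Lemma~\ref{theo_randwalk}. The key structural ingredient specific to threshold $\tau=1$ is that $\{Y_t\}$ can decrease by at most $1$ per iteration: under one-bit noise $|f^N(y)-f(y)|\leq 1$ for every $y$, while acceptance requires $f^N(x')\geq f^N(x)+1$, so chaining the two inequalities yields $f(x')\geq f(x)-1$, i.e.\ $|x'|_1\geq |x|_1-1$, ruling out backward jumps of size $\geq 2$.

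Next I would lower-bound the forward transition probability $p_i := P(Y_{t+1}=i+1\mid Y_t=i)$ and upper-bound the backward transition probability $q_i := P(Y_{t+1}=i-1\mid Y_t=i)$. For $p_i$ it is enough that mutation flips exactly one of the $n-i$ zero-bits (probability $\geq (n-i)/(en)$) and that neither re-evaluation is perturbed by noise (probability $(1-p_n)^2$), yielding $p_i \geq (n-i)(1-p_n)^2/(en)$. For $q_i$, mutation must produce $|x'|_1=i-1$ (leading-order probability $\leq i/n$), and then acceptance, combined with $f(x')-f(x)=-1$, forces the noise pattern $\epsilon_{x'}=+1$ and $\epsilon_x=-1$, which occurs with probability at most $p_n^2\,i(n-i+1)/n^2$. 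An elementary algebraic check using $p_n\leq 1/(2e)$ then gives $q_i/p_i\leq \beta$ for some fixed constant $\beta<1$ uniformly in $i\in\{1,\ldots,n-1\}$, so the projected chain is biased toward $n$.

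Having established $q_i\leq p_i$ uniformly, I would couple the biased chain with an unbiased random walk on the path graph $G=(V,E)$ with $V=\{0,1,\ldots,n\}$ and $E=\{\{i,i+1\}:0\leq i<n\}$, arranged so that the biased chain reaches $n$ no later than the unbiased walk covers all of $V$. Lemma~\ref{theo_randwalk} then gives expected cover time at most $2|E|(|V|-1)=2n\cdot n$, and therefore the expected number of state-changing iterations of the EA is $O(n^2)$. Finally, at every state $i<n$ the probability of a state change is at least $p_i\geq (1-p_n)^2/(en)=\Omega(1/n)$, so the expected number of iterations between successive state changes is $O(n)$, and the total expected running time is $O(n^2)\cdot O(n)=O(n^3)$.

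The main obstacle I expect is the coupling step that rigorously reduces the biased birth-death chain on $\{0,\ldots,n\}$ to an unbiased random walk on $G$ so that Lemma~\ref{theo_randwalk} applies to it; one has to exhibit a step-by-step coupling in which the biased chain's position is at least as close to $n$ as the unbiased walk's most advanced vertex. A secondary but still delicate point is the case analysis showing $q_i/p_i\leq \beta<1$ uniformly under the condition $p_n\leq 1/(2e)$, including the boundary cases $i=1$ and $i=n-1$; the remainder of the argument is routine once these ingredients are in place.
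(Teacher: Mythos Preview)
Your proposal is correct and follows essentially the same route as the paper: rule out backward jumps of size $\geq 2$ via $|f^N-f|\leq 1$, establish $P'_1\geq P'_{-1}$ for all $L$ under $p_n\leq 1/(2e)$, reduce to an unbiased random walk on the path $\{0,\ldots,n\}$, and combine the $O(n^2)$ cover-time bound from Lemma~\ref{theo_randwalk} with the $\Omega(1/n)$ probability of a relevant step. The coupling you flag as the main obstacle is handled in the paper more simply by the pessimistic assumptions $P'_1=P'_{-1}$ and $P'_{d}=0$ for $d\geq 2$, which turn the process into an exact unbiased walk and make a formal coupling argument unnecessary.
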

\begin{proof}
We denote the number of one bits of the current solution $x$ by $L\;(0\leq L \leq n)$. Let $P_d$ denote the probability that the offspring solution $x'$ by bit-wise mutation on $x$ has $L+d \;(-L\leq d \leq n-L)$ number of one bits, and let $P'_d$ denote the probability that the next solution after bit-wise mutation and selection has $L+d$ number of one bits.

Then, we analyze $P'_d$. We consider $0 \leq L \leq n-1$. Note that one-bit noise can change the true fitness of a solution by at most 1, i.e., $|f^N(x)-f(x)|\leq 1$.\\
(1) When $d \leq -2$, $f^N(x') \leq L+d+1 \leq L-1 \leq f^N(x)$. Because an offspring solution will be accepted only if $f^N(x') \geq f^N(x)+1$, the offspring solution $x'$ will be discarded in this case, which implies that $\forall d \leq -2: P'_d=0$.\\
(2) When $d=-1$, the offspring solution $x'$ will be accepted only if $f^N(x')=L \wedge f^N(x)=L-1$, the probability of which is $p_n\frac{n-L+1}{n}\cdot p_n\frac{L}{n}$, since it needs to flip one 0 bit of $x'$ and flip one 1 bit of $x$. Thus, $P'_{-1}=P_{-1}\cdot(p_n\frac{L}{n}p_n\frac{n-L+1}{n})$.\\
(3) When $d=1$, if $f^N(x)=L-1$, the probability of which is $p_n\frac{L}{n}$, the offspring solution $x'$ will be accepted, since $f^N(x') \geq L+1-1=L>f^N(x)$; if $f^N(x)=L \wedge f^N(x')\geq L+1$, the probability of which is $(1-p_n)\cdot(1-p_n+p_n\frac{n-L-1}{n})$, $x'$ will be accepted; if $f^N(x)=L+1 \wedge f^N(x')=L+2$, the probability of which is $p_n\frac{n-L}{n}\cdot p_n\frac{n-L-1}{n}$, $x'$ will be accepted; otherwise, $x'$ will be discarded. Thus, $P'_{1}=P_{1}\cdot(p_n\frac{L}{n}+(1-p_n)(1-p_n+p_n\frac{n-L-1}{n})+p_n\frac{n-L}{n}p_n\frac{n-L-1}{n})$.\\
(4) When $d \geq 2$, it is easy to see that $P'_d>0$.

Because we are to get the upper bound of the expected running time for finding the optimal solution $1^n$ for the first time, we pessimistically assume that $\forall d \geq 2: P'_d=0$. Then, we compare $P'_1$ with $P'_{-1}$.
\begin{equation}
\begin{aligned}
& P'_1\geq P_1 p_n \frac{L}{n} \geq \frac{n-L}{n}(1-\frac{1}{n})^{n-1}p_n\frac{L}{n}\geq p_n\frac{L(n-L)}{en^2},
\end{aligned}
\end{equation}
where the second inequality is by $P_1 \geq \frac{n-L}{n}(1-\frac{1}{n})^{n-1}$ since it is sufficient to flip just one 0 bit, and the last inequality is by $(1-\frac{1}{n})^{n-1}\geq \frac{1}{e}$.
\begin{equation}
\begin{aligned}
& P'_{-1}=P_{-1}(p_n\frac{L}{n}p_n\frac{n-L+1}{n})\leq\frac{L}{n}(p_n\frac{L}{n}p_n\frac{n-L+1}{n}) \leq p_n\frac{L}{en^2} \cdot \frac{L(n-L+1)}{2n}\leq p_n\frac{L(n-L)}{en^2},
\end{aligned}
\end{equation}
where the first inequality is by $P_{-1}\leq \frac{L}{n}$ since it is necessary to flip at least one 1 bit, the second inequality is by $p_n\leq \frac{1}{2e}$, and the last inequality is by $\frac{L(n-L+1)}{2n}\leq n-L$.

Thus, we have for all $0\leq L\leq n-1$, $P'_{1} \geq P'_{-1}$. Because we are to get the upper bound of the expected running time for finding $1^n$, we can pessimistically assume that $P'_{1} = P'_{-1}$. Then, we can view the evolutionary process as a random walk on the path $\{0,1,2,\ldots,n\}$. We call a step that jumps to the neighbor state a relevant step. Thus, by Lemma \ref{theo_randwalk}, it needs at most $2n^2$ expected relevant steps to find $1^n$. Because the probability of a relevant step is at least $P'_{1} \geq P_1(1-p_n)^2\geq \frac{n-L}{n}(1-\frac{1}{n})^{n-1}(1-\frac{1}{2e})^2 \geq (1-\frac{1}{2e})^2/en$, the expected running time for a relevant step is $O(n)$. Thus, the expected running time of (1+1)-EA with $\tau=1$ on I$_{easiest}$ problem with $p_n \leq \frac{1}{2e}$ is upper bounded by $O(n^3)$.
\end{proof}

\begin{theorem}\label{runtime_theo_threshold2}
The PNT of (1+1)-EA using re-evaluation with threshold selection $\tau=2$ and mutation probability $\frac{1}{n}$ on I$_{easiest}$ problem is lower bounded by$1-1/\Omega(poly(n))$ and upper bounded by $1-1/O(2^npoly(n))$, where $poly(n)$ indicates any polynomial of $n$, with respect to one-bit noise.
\end{theorem}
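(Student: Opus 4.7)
The plan is to mirror the proof of Lemma \ref{runtime_single} and to establish two matching bounds on the expected running time, namely $O(n^2/(1-p_n)^2)$ from above and $\Omega(n^2/((1-p_n)2^n))$ from below, from which the claimed PNT bounds follow as they do from the analogous bounds in Theorem \ref{runtime_without}. The starting point is a monotonicity property specific to threshold $\tau=2$: since one-bit noise satisfies $|f^N(x)-f(x)|\le 1$, the acceptance rule $f^N(x')\ge f^N(x)+2$ forces $f(x')\ge f^N(x')-1\ge f^N(x)+1\ge f(x)$. Hence $L:=|x|_1$ is non-decreasing along any run, and the entire analysis reduces to a hitting-time calculation on the one-dimensional process $L\in\{0,1,\ldots,n\}$.

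For the upper bound on running time, I would lower bound the per-iteration probability of strictly increasing $L$ by exhibiting one favorable event per level. From $L\le n-2$, bit-wise mutation flips exactly two $0$-bits with probability $\Omega((n-L)^2/n^2)$, producing an offspring with $f(x')=L+2$; if additionally both $x$ and $x'$ are evaluated noise-free (probability $(1-p_n)^2$), the noisy fitness gap is exactly $2$ and $x'$ is accepted. From the bottleneck level $L=n-1$ one must mutate to $x'=1^n$ (probability $\Omega(1/n)$) and then satisfy $f^N(x')=n$ (probability $1-p_n$, since any noise event on $1^n$ flips a $1$) jointly with $f^N(x)=n-2$ (probability $p_n(n-1)/n$, since noise must flip one of the $n-1$ one-bits of $x$). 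Summing the geometric waiting times across all levels and doubling for re-evaluation yields the upper bound $O(n^2/(1-p_n)^2)$ on the expected running time, which is polynomial whenever $1-p_n\ge 1/\mathrm{poly}(n)$.

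For the lower bound on running time, I would condition on the uniform initial state having $|x_{init}|_1=n-1$, an event of probability $n/2^n$. By the monotonicity of $L$, from any such start the process stays on level $n-1$ until it jumps directly to $1^n$. That jump requires both mutating to $x'=1^n$ (probability $\le 1/n$) and having $f^N(x')=n$ together with $f^N(x)=n-2$, so the per-iteration success probability is at most $(1-p_n)/n$. The resulting expected waiting time of at least $n/(1-p_n)$ from level $n-1$, combined with the initial-state probability, gives the lower bound $\Omega(n^2/((1-p_n)2^n))$ on the overall expected running time, which translates into the stated upper bound on PNT exactly as in Theorem \ref{runtime_without}.

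The main obstacle is the tight analysis of the threshold-$2$ bottleneck at the last level $L=n-1$: the threshold forbids the move to $1^n$ unless noise cooperates by depressing $f^N(x)$ by exactly one below $f(x)$, and the monotonicity of $L$ established at the outset is critical on both sides. It closes the upper-bound analysis by restricting attention to a one-dimensional non-decreasing process, and it rules out any alternative optimum-reaching path in the lower-bound argument. Matching the two sides up to the $(1-p_n)/n$ scaling of this single transition is what pins down the PNT bounds.
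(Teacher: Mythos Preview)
Your approach is essentially the same as the paper's: establish monotonicity of $L$ from the threshold-$2$ rule, derive upper and lower bounds on the expected running time, and read off the PNT. The monotonicity argument and the identification of the $L=n-1$ bottleneck are both correct and match the paper.

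However, there is an internal inconsistency in your stated running-time bounds. At $L=n-1$ you correctly observe that acceptance of $1^n$ requires \emph{both} $f^N(1^n)=n$ (probability $1-p_n$) \emph{and} $f^N(x)=n-2$ (probability $p_n(n-1)/n$), so the per-step success probability there is $\Theta(p_n(1-p_n)/n)$. This forces a $1/p_n$ factor into the upper bound on running time, which your claimed $O(n^2/(1-p_n)^2)$ omits; the correct sum is $O(n^2/(1-p_n)^2+n/(p_n(1-p_n)))$. Likewise, in your lower-bound paragraph you write that the success probability is ``at most $(1-p_n)/n$'', dropping the $p_n$ factor you just derived; the tight bound is $p_n(1-p_n)/n$, giving $\Omega(n^2/(2^n p_n(1-p_n)))$ rather than your $\Omega(n^2/((1-p_n)2^n))$. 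Neither slip affects the PNT statement, since PNT concerns the regime $p_n\to 1$ where $p_n$ is bounded away from zero, but your running-time formulas are wrong as stated (in particular, at $p_n=0$ your upper bound is $O(n^2)$ while the true expected time is infinite). The paper's Lemma~\ref{theo_threshold2} tracks both factors and states $O(n\log n/(p_n(1-p_n)))$ and $\Omega(n^2/(2^n p_n(1-p_n)))$.

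A minor methodological difference: for levels $L\le n-2$ the paper lower-bounds the one-step increase probability via a \emph{single}-bit flip combined with noise cooperation on parent or offspring, giving $P'_1\ge \frac{1}{2e}p_n(1-p_n)\frac{n-L}{n}$ and hence a harmonic sum $O(n\log n/(p_n(1-p_n)))$. Your two-bit-flip event avoids noise cooperation at those levels but trades the harmonic sum for $\sum_L n^2/(n-L)^2=O(n^2)$. Both routes are valid; the paper's is slightly tighter.
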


The theorem can be directly derived from the following lemma.

\begin{lemma}\label{theo_threshold2}
For (1+1)-EA using re-evaluation with threshold selection $\tau=2$ and mutation probability $\frac{1}{n}$ on I$_{easiest}$ problem under one-bit noise, the expected running time is $O(n\log n/(p_n(1-p_n)))$ and \\ $\Omega(n^2/(2^np_n(1-p_n)))$.
\end{lemma}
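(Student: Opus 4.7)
The plan is to exploit a structural property of the $\tau=2$ threshold---namely that $L := |x|_1$ is monotonically non-decreasing---and then separately estimate the per-level progress probability.

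First, I would verify monotonicity of $L$. Since one-bit noise satisfies $|f^N(y)-f(y)|\le 1$ for every $y$, we can write $f^N(x')-f^N(x)=(f(x')-f(x))+\Delta_{x'}-\Delta_x$, where $\Delta_y := f^N(y)-f(y)\in\{-1,0,+1\}$ with $\Delta_y=+1$ corresponding to noise flipping a $0$-bit and $\Delta_y=-1$ to noise flipping a $1$-bit. Acceptance requires this noisy gap to be at least $2$, so if $|x'|_1<|x|_1$ the true gap is at most $-1$ and acceptance would demand $\Delta_{x'}-\Delta_x\ge 3$, which is impossible. Hence $L$ never decreases, reducing the task to bounding the hitting time of a monotone process on $\{0,1,\ldots,n\}$.

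For the upper bound, I would lower-bound the probability that $L$ strictly increases in one step. Restricting to the clean mutation that flips exactly one $0$-bit of $x$ to $1$ (probability at least $(n-L)(1-1/n)^{n-1}/n \geq (n-L)/(en)$), the offspring satisfies $|x'|_1=L+1$, so acceptance requires $\Delta_{x'}-\Delta_x\ge 1$. Enumerating the three contributing noise outcomes $(\Delta_{x'},\Delta_x)\in\{(+1,0),(+1,-1),(0,-1)\}$ and summing their probabilities gives an acceptance probability of at least $p_n(1-p_n)(n-1)/n$. Combining the mutation and acceptance bounds, the probability of $L$ increasing in one step is $\Omega((n-L)p_n(1-p_n)/n)$, so summing the geometric waiting times over $L=0,1,\ldots,n-1$ in a coupon-collector-style computation yields the $O(n\log n/(p_n(1-p_n)))$ bound.

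For the lower bound, I would condition on the event $|x_{\text{init}}|_1=n-1$, which happens under the uniform initial distribution with probability $n/2^n$. By monotonicity, from any such start the chain must jump directly to $1^n$. The required mutation flips the unique $0$-bit while leaving all other bits intact, with probability at most $1/n$; for acceptance, since $x'=1^n$ has no $0$-bits we have $\Delta_{x'}\in\{0,-1\}$, forcing $(\Delta_{x'},\Delta_x)=(0,-1)$, an event of probability $(1-p_n)\cdot p_n(n-1)/n$. So the per-step escape probability is at most $p_n(1-p_n)/n$, the conditional expected hitting time is $\Omega(n/(p_n(1-p_n)))$, and multiplying by $n/2^n$ gives $\Omega(n^2/(2^n p_n(1-p_n)))$ overall. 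The main obstacle will be the noise-case bookkeeping: making sure the three-case enumeration of $(\Delta_{x'},\Delta_x)$ is both exhaustive and disjoint for the acceptance event, and handling the boundary $L=n-1$ (where the factor $n-L-1$ kills two of the three cases) consistently between the upper and lower bound calculations so that the edge behaviour does not leak an extra polynomial factor into the bounds.
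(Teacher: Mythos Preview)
Your proposal is correct and follows essentially the same route as the paper: first establish that $L$ is non-decreasing under $\tau=2$ (the paper phrases this as $P'_d=0$ for all $d\le -1$), then lower-bound the one-step progress probability by restricting to the single-bit mutation and summing the admissible noise configurations to get the coupon-collector upper bound, and finally obtain the lower bound by conditioning on $|x_{\text{init}}|_1=n-1$ and upper-bounding the escape probability to $1^n$. Your $(\Delta_{x'},\Delta_x)$ bookkeeping reproduces exactly the paper's acceptance probability $p_n\frac{L}{n}\bigl((1-p_n)+p_n\frac{n-L-1}{n}\bigr)+(1-p_n)p_n\frac{n-L-1}{n}$, and your boundary observation that only $(0,-1)$ survives at $L=n-1$ is precisely what drives both the last term of the harmonic sum and the lower-bound computation.
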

\begin{proof}
Let $L\;(0\leq L \leq n)$ denote the number of one bits of the current solution $x$. Here, an offspring solution $x'$ will be accepted only if $f^N(x')-f^N(x) \geq 2$. As in the proof of Lemma \ref{theo_threshold}, we can derive \begin{equation}
\begin{aligned}
&\forall d \leq -1: P'_d=0; \\
&P'_1=P_{1}\big(p_n\frac{L}{n}((1-p_n)+p_n\frac{n-L-1}{n})+(1-p_n)(p_n\frac{n-L-1}{n})\big);\\
&\forall d \geq 2: P'_d>0.
\end{aligned}
\end{equation}
Thus, $L$ will never decrease in the evolution process, and it can increase in one step with probability
\begin{equation}
\begin{aligned}
P'_{d>0} &> P'_1 \geq \frac{n-L}{n}(1-\frac{1}{n})^{(n-1)}((1-p_n)p_n(1-\frac{1}{n})+p^2_n\frac{L(n-L-1)}{n^2})\\
& \geq \frac{1}{2e} (1-p_n)p_n \frac{n-L}{n}.
\end{aligned}
\end{equation}
Then, we can get that the expected steps until $L=n$ (i.e., the optimal solution is found) is at most
$$
\sum^{n-1}_{L=0} \frac{2en}{(1-p_n)p_n (n-L)},\; \text{i.e.}, O(\frac{n \log n}{p_n(1-p_n)}).
$$

Then, we are to analyze the lower bound. Assume that the initial solution $x_{init}$ has $n-1$ number of 1 bits. Before finding the optimal solution, the solution $x$ in the population will always satisfy $|x|_1=n-1$ because $\forall d \leq -1: P'_d=0$. The optimal solution (i.e., $|x|_1=n$) will be found in one step with probability $P'_1=P_1p_n(1-p_n)(1-\frac{1}{n})=\frac{1}{n}(1-\frac{1}{n})^{(n-1)}p_n(1-p_n)(1-\frac{1}{n}) \leq \frac{p_n(1-p_n)}{en}$. Thus, the expected steps for finding the optimal solution when starting from $|x_{init}|_1=n-1$ is at least $\frac{en}{p_n(1-p_n)}$. By the uniform distribution of the initial solution, the probability that $|x_{init}|_1=n-1$ is $n/2^n$. Thus, we can get the lower bound $\Omega(\frac{n^2}{2^np_n(1-p_n)})$ for the expected running time of the whole process.
\end{proof}



\subsection{Smooth Threshold Selection}

We propose the smooth threshold selection as in Definition \ref{smooth}, which modifies the original threshold selection by changing the hard threshold value to a smooth one. We are to show that, by such a small modification, the PNT of (1+1)-EA on I$_{easiest}$ problem is improved to 1, which means that the expected running time of (1+1)-EA is always polynomial disregard the one-bit noise level.

\begin{definition}[Smooth Threshold Selection]\label{smooth}
Let $\delta$ be the gap between the fitness of the offspring solution $x'$ and the parent solution $x$, i.e., $\delta=f(x')-f(x)$. Then, the selection process will behave as follows:\\
(1) if $\delta \leq 0$, $x'$ will be rejected;\\
(2) if $\delta=1$, $x'$ will be accepted with probability $\frac{1}{5n}$;\\
(3) if $\delta>1$, $x'$ will be accepted.
\end{definition}

\begin{theorem}\label{theo_threshold_smart}
The PNT of (1+1)-EA using re-evaluation with smooth threshold selection and mutation probability $\frac{1}{n}$ on I$_{easiest}$ problem is 1, with respect to one-bit noise.
\end{theorem}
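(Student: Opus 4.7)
The plan is to apply additive drift analysis (Lemma~\ref{drift}) with the potential $V(x) = n - |x|_1$, complemented by a direct hitting-probability argument at the boundary state $|x|_1 = n-1$. The first observation is that one-bit noise shifts the true fitness by at most $1$, so the observed gap $\delta = f^N(x') - f^N(x)$ is at most $d + 2$, where $d = |x'|_1 - |x|_1$. Hence whenever $d \le -2$ we have $\delta \le 0$ and smooth-threshold selection rejects, so $|x|_1$ can decrease by at most $1$ in any accepted step.

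Next, I would compute the relevant one-step transition probabilities
\begin{align*}
P'_1 &= P_1\bigl[P(\delta \ge 2 \mid d{=}1) + \tfrac{1}{5n}\,P(\delta = 1 \mid d{=}1)\bigr],\\
P'_{-1} &= P_{-1}\cdot \tfrac{1}{5n}\cdot p_n^2\,\tfrac{L(n-L+1)}{n^2},
\end{align*}
where $P_{\pm 1}$ are the mutation probabilities of a $\pm 1$ change in $L = |x|_1$. A direct optimization of
\[
P(\delta = 1 \mid d{=}1) = (1-p_n)^2 + p_n^2\,\tfrac{L(L+1) + (n-L)(n-L-1)}{n^2}
\]
over $p_n \in [0,1]$ shows the minimum is attained at $p_n = 1/(1+c)$ with $c = (L(L+1)+(n-L)(n-L-1))/n^2 \ge 1/2 - O(1/n^2)$, giving $P(\delta = 1\mid d{=}1) \ge c/(1+c) \ge 1/4$ uniformly in $p_n$ and $L$. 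The structural asymmetry driving the proof is that $P'_{-1}$ carries a suppressing factor $p_n^2\,L(n-L+1)/n^2$ absent from $P'_1$: accepting a bad step requires two coincident noise events (one on the parent evaluation, one on the offspring), while accepting a good step does not.

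Together these bounds give $P'_1 - P'_{-1} = \Omega((n-L)/\mathrm{poly}(n))$ on the bulk of the state space, so Lemma~\ref{drift} with $V = n-|x|_1$ yields polynomial expected time to first reach a state with $|x|_1 = n-1$. At that final state, even though the drift bound degrades (because $P_1/P_{-1} = \Theta(1/n)$ while the ratio of acceptance probabilities is only $\Theta(1)$), the running-time convention rescues the analysis: each iteration generates $x' = 1^n$ with probability $P_1 = \tfrac{1}{n}(1-\tfrac{1}{n})^{n-1} \ge \tfrac{1}{en}$, and the first such generation terminates the running time by definition, regardless of whether smooth threshold accepts $1^n$. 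Since regression to $|x|_1 = n-2$ has probability only $O(p_n^2/n^2)$ per iteration, from $|x|_1 = n-1$ the optimum is found within $O(n)$ expected iterations; combining with the drift bound proves polynomial expected running time for every $p_n \in [0,1]$. The main obstacle is precisely this boundary regime $|x|_1$ near $n-1$ with $p_n$ near $1$, where the additive drift on $V$ can be of the wrong sign and the proof has to pivot from drift analysis to the direct-generation argument based on the running-time measurement convention.
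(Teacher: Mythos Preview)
Your proposal has a genuine gap at the boundary $L=n-1$. You assert that ``the first such generation [of $1^n$] terminates the running time by definition, regardless of whether smooth threshold accepts $1^n$.'' That is not the convention of this paper: the Markov chain is defined on the \emph{population} state, and the first hitting time $\tau$ in Eq.~\eqref{runtime} measures when the population first contains the optimum. Every other proof here (Lemmas~\ref{runtime_single}, \ref{theo_threshold}, \ref{theo_threshold2}) consistently requires the optimum to be \emph{accepted}, not merely produced as an offspring. Under this convention your direct-generation escape does not apply, and since you yourself note that the additive drift on $V=n-|x|_1$ has the wrong sign at $L=n-1$, the argument breaks there. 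The paper handles this by bounding $P'_1\ge\Omega(1/n^2)$ at $L=n-1$ uniformly in $p_n$ (using precisely the $P(\delta{=}1\mid d{=}1)$ quantity you computed) and then arguing that $O(n^2)$ visits to $L=n-1$ suffice; this is an easy patch to your outline.

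There is a second gap in the bulk argument. Your only stated lower bound on $P'_1$ is via $P(\delta{=}1\mid d{=}1)\ge 1/4$, giving $P'_1\ge P_1/(20n)$. That is too weak to establish $P'_1-P'_{-1}=\Omega((n-L)/\mathrm{poly}(n))$: at $p_n=1$ and $L\approx n/2$ your own formula gives $P'_{-1}\approx P_{-1}/(20n)$ as well, and since $P_{-1}/P_1\approx L/(n-L)$ there, the difference is not positive from these bounds alone. The paper's remedy is to lower-bound $P'_1$ by the $\delta\ge 2$ contribution---specifically the $\delta=3$ term $p_n^2\,L(n{-}L{-}1)/n^2$, which is accepted with probability $1$ (no $1/(5n)$ penalty) and carries the \emph{same} $p_n^2 L$ prefactor as the noise factor in $P'_{-1}$. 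This cancellation yields $P'_1/P'_{-1}\ge 5n(n{-}L)(n{-}L{-}1)/\bigl(eL(n{-}L{+}1)\bigr)>1$ for all $0\le L\le n-2$, after which the paper pessimistically sets $P'_1=P'_{-1}$ and invokes the random-walk cover-time bound (Lemma~\ref{theo_randwalk}) rather than additive drift.
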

\begin{proof}
We first analyze $P'_{d}$ as that analyzed in the proof of Lemma \ref{theo_threshold}. The only difference is that when the fitness gap between the offspring and the parent solution is 1, the offspring solution will be accepted with probability $\frac{1}{5n}$ here, while it will be always accepted in the proof of Lemma \ref{theo_threshold}. Thus, for smooth threshold selection, we can similarly derive
\begin{equation}
\begin{aligned}
&\forall d \leq -2: P'_d=0; \\
&P'_{-1}=P_{-1}(p_n\frac{L}{n}p_n\frac{n-L+1}{n}) \cdot \frac{1}{5n};\\
&P'_1=P_{1}\big(p_n\frac{L}{n}(p_n\frac{L+1}{n} \cdot \frac{1}{5n}+(1-p_n)+p_n\frac{n-L-1}{n})+(1-p_n)((1-p_n)\cdot \frac{1}{5n}+p_n\frac{n-L-1}{n})\\
&\qquad +p_n\frac{n-L}{n}p_n\frac{n-L-1}{n}\cdot \frac{1}{5n}\big);\\
& \forall d \geq 2: P'_d>0.
\end{aligned}
\end{equation}

Note that $L$ ($0 \leq L \leq n$) denotes the number of one bits of the current solution $x$. Our goal is to reach $L=n$. If starting from $L=n-1$, $L$ will reach $n$ in one step with probability
\begin{equation}
\begin{aligned}
&P'_1 \geq P_1 (p_n\frac{L}{n} p_n\frac{L+1}{n} \cdot \frac{1}{5n}+(1-p_n)(1-p_n)\cdot \frac{1}{5n})\\
& \geq \frac{n-L}{n}(1-\frac{1}{n})^{n-1} (p_n\frac{L}{n} p_n\frac{L+1}{n} \cdot \frac{1}{5n}+(1-p_n)(1-p_n)\cdot \frac{1}{5n})\\
& \geq \frac{1}{5en^2}(\frac{n-1}{n}p_n^2+(1-p_n)^2) \quad (\text{by $L=n-1$ and $(1-\frac{1}{n})^{n-1} \geq \frac{1}{e}$})\\
& \geq \frac{1}{5en^2} \cdot \frac{n-1}{2n-1} \in \Omega(\frac{1}{n^2}). \quad (\text{by $0 \leq p_n \leq 1$})
\end{aligned}
\end{equation}
Thus, for reaching $L=n$, we need to reach $L=n-1$ for $O(n^2)$ times in expectation.

Then, we analyze the expected running time until $L=n-1$. In this process, we can pessimistically assume that $L=n$ will never be reached, because our final goal is to get the upper bound on the expected running time for reaching $L=n$. For $0 \leq L \leq n-2$, we have
\begin{equation}
\begin{aligned}
&\frac{P'_1}{P'_{-1}} \geq \frac{P_1 \cdot (p_n\frac{L}{n} p_n\frac{n-L-1}{n})}{P_{-1}\cdot (p_n\frac{L}{n}p_n\frac{n-L+1}{n}) \cdot \frac{1}{5n}} \geq \frac{\frac{n-L}{n}(1-\frac{1}{n})^{n-1} \cdot (p_n\frac{L}{n} p_n\frac{n-L-1}{n})}{\frac{L}{n}\cdot (p_n\frac{L}{n}p_n\frac{n-L+1}{n}) \cdot \frac{1}{5n}}\\
&\geq \frac{5n(n-L)(n-L-1)}{eL(n-L+1)}=\frac{5n(\frac{n}{L}-1)}{e(1+\frac{2}{n-L-1})} > 1.
\end{aligned}
\end{equation}
Again, we can pessimistically assume that $P'_1=P'_{-1}$ and $\forall d \geq 2, P'_{d}=0$, because we are to get the upper bound on the expected running time until $L=n-1$. Then, we can view the evolutionary process for reaching $L=n-1$ as a random walk on the path $\{0,1,2,\ldots,n-1\}$. We call a step that jumps to the neighbor state a relevant step. Thus, by Lemma \ref{theo_randwalk}, it needs at most $2(n-1)^2$ expected relevant steps to reach $L=n-1$. Because the probability of a relevant step is at least
\begin{equation}
\begin{aligned}
& P'_{1} \geq P_1((1-p_n)(1-p_n)\cdot \frac{1}{5n}+ p_n\frac{n-L}{n}p_n\frac{n-L-1}{n}\cdot \frac{1}{5n})\\
& \geq \frac{n-L}{5en^2}((1-p_n)^2 + p^2_n\frac{(n-L)(n-L-1)}{n^2})\\
& \geq \frac{2}{5en^2}((1-p_n)^2+\frac{2}{n^2}p^2_n) \geq \frac{2}{5en^2} \cdot \frac{2}{n^2+2},
\end{aligned}
\end{equation}
the expected running time for a relevant step is $O(n^4)$. Then, the expected running time for reaching $L=n-1$ is $O(n^6)$.

Thus, the expected running time of the whole optimization process is $O(n^8)$ for any $p_n \in [0,1]$, and then this theorem holds.
\end{proof}


We draw an intuitive understanding from the proof of Theorem \ref{theo_threshold_smart} that why the smooth threshold selection can be better than the original threshold selections. By changing the hard threshold to be a smooth threshold, it can not only make the probability of accepting a false better solution in one step small enough, i.e. $P'_1 \geq P'_{-1}$, but also make the probability of producing progress in one step large enough, i.e., $P'_1$ is not small.

\section{Discussions and Conclusions}

This paper studies theoretical issues of noisy optimization by evolutionary algorithms. 

First, we discover that an optimization problem may become easier instead of harder in a noisy environment. We then derive a sufficient condition under which noise makes optimization easier or harder. By filling this condition, we have shown that for (1+$\lambda$)-EA, noise makes the optimization on the hardest and the easiest case in the pseudo-Boolean function class easier and harder, respectively. We also hypothesize that we need to take care of noise only when the optimization problem is moderately or less complex. Experiments on the Jump$_{m,n}$ problem, which has an adjustable difficulty parameter, supported our hypothesis.

In problems where the noise has a negative effect, we then study the usefulness of two commonly employed noise-handling strategies, re-evaluation and threshold selection. The study takes the easiest case in the pseudo-Boolean function class as the representative problem, where the noise significantly harms the expected running time of the (1+1)-EA. We use the polynomial noise tolerance (PNT) level as the performance measure, and analyzed the PNT of each EA. 

The re-evaluation strategy seems to be a reasonable method for reducing random noise. However, we derive that the (1+1)-EA with single-evaluation has a PNT lower bound $1-1/\Omega(poly(n))$ from Theorem~5 which is close to $1$, whilst the (1+1)-EA with re-evaluation has the PNT $\Theta(\log(n)/n)$ which can be quite close to zero as $n$ is large. It is surprise to see that the re-evaluation strategy leads to a much worse noise tolerance than that without any noise handling method.

The re-evaluation with threshold selection strategy has a better PNT comparing with the re-evaluation alone. When the threshold is 1, we derive a PNT lower bound $\frac{1}{2e}$ from Theorem 7, and when the threshold is 2, we obtain $1-1/\Omega(poly(n))$ from Theorem 8. The improvement from re-evaluation alone could be explained as that the threshold selection filters out fake progresses that caused by the noise. However, it still showed no improvements from the (1+1)-EA without any noise handling method.

We then proposed the smooth threshold selection, which acts like the threshold selection with threshold 2 but accepts progresses 1 with a probability. We proved that the (1+1)-EA with the smooth threshold selection has the PNT 1 from Theorem 9, which exceeds that of (1+1)-EA without any noise handling method. Our explanation is that, like the original threshold selection, the proposed one filters out fake progresses, while it also keep some chances to accept real progresses.

Although the investigated EAs and problems in this paper are simple and specifically used for the theoretical analysis of EAs, the analysis still disclosed counter-intuitive results and, particularly, demonstrated that theoretical investigation is essential in designing better noise handling strategies. We are optimistic that our findings may be helpful for practical uses of EAs, which will be studied in the future.

\section{Acknowledgements}

to be added ...

\bibliography{ectheory}
\bibliographystyle{abbrvnat}

\section*{Appendix}

\begin{myproofd}{Lemma \ref{lemma_analysis_condition}}
We prove it by induction on $m$.

{\bf (a) Initialization} is to prove that it holds when $m=1$.
\begin{equation}
\begin{aligned}
&\sum\nolimits^{1}_{i=0}P_iE_i =\sum\nolimits^{1}_{i=0}Q_iE_i+(P_0-Q_0)E_0+(P_1-Q_1)E_1\\
&\mathop{=}^{1}\sum\nolimits^{1}_{i=0}Q_iE_i+(P_0-Q_0)E_0+(1-P_0-(1-Q_0))E_1\\
&=\sum\nolimits^{1}_{i=0}Q_iE_i+(P_0-Q_0)(E_0-E_1)\geq \sum\nolimits^{1}_{i=0}Q_iE_i,
\end{aligned}
\end{equation}
where the `$\mathop{=}\limits^{1}$' is by $P_0+P_1=Q_0+Q_1=1$, and the `$\geq$' is by $P_0 \leq Q_0$ and $E_0<E_1$.

{\bf (b) Inductive Hypothesis} assumes that this lemma holds when $1\leq m \leq k$. Then, we consider $m=k+1$. The proof idea is to combine the first two terms of $\sum^{k+1}_{i=0}P_iE_i$, and then apply inductive hypothesis.

(1) When $P_0=P_1=0$, we can get
\begin{equation}
\begin{aligned}
&\sum\nolimits^{k+1}_{i=0}P_iE_i = (P_0+P_1)E_1 +\sum\nolimits^{k+1}_{i=2}P_iE_i\\
&\mathop{=}^{1}\sum\nolimits^{k}_{i=0} P'_i E'_i \geq^1 \sum\nolimits^{k}_{i=0} Q'_i E'_i\\ &\mathop{=}^{2}(Q_0+Q_1)E_1+ \sum\nolimits^{k+1}_{i=2}Q_iE_i \geq^2 \sum\nolimits^{k+1}_{i=0}Q_iE_i,
\end{aligned}
\end{equation}
where the `$\mathop{=}\limits^{1}$' and `$\mathop{=}\limits^{2}$' is by letting $E'_i=E_{i+1}$, $P'_0=P_0+P_1$, $Q'_0=Q_0+Q_1$ and $\forall i \geq 1, P'_i=P_{i+1}, Q'_i=Q_{i+1}$; the `$\mathop{\geq}^{1}$' is by applying inductive hypothesis because for $P'_i, Q'_i, E'_i$, the three conditions of this lemma hold and $m=k$; and the `$\mathop{\geq}^{2}$' is by $E_1>E_0$ and $Q_0 \geq 0$.

(2) When $P_0+P_1>0$, we consider two cases.\\
(2.1) If $P_1> Q_1$, we have
\begin{equation}
\begin{aligned}
&\sum\nolimits^{k+1}_{i=0}P_iE_i=(P_0+P_1)\frac{P_0E_0+P_1E_1}{P_0+P_1}+\sum\nolimits^{k+1}_{i=2}P_iE_i\\
&\geq^1 (Q_0+Q_1)\frac{P_0E_0+P_1E_1}{P_0+P_1}+\sum\nolimits^{k+1}_{i=2}Q_iE_i\\
&\geq^2 (Q_0+Q_1)\frac{Q_0E_0+Q_1E_1}{Q_0+Q_1}+\sum\nolimits^{k+1}_{i=2}Q_iE_i=\sum\nolimits^{k+1}_{i=0}Q_iE_i,
\end{aligned}
\end{equation}
where the `$\mathop{\geq}\nolimits^{1}$' is by applying inductive hypothesis as the `$\mathop{\geq}^{1}$' in case (1) except $E'_0=\frac{P_0E_0+P_1E_1}{P_0+P_1}$ here, and the `$\mathop{\geq}\nolimits^{2}$' can be easily derived by $Q_0\geq P_0, P_1>Q_1, E_1>E_0$.\\
(2.2) If $P_1\leq Q_1$, we have
\begin{equation}
\begin{aligned}
&\sum\nolimits^{k+1}_{i=0}P_iE_i=(P_0+P_1)\frac{P_0E_0+P_1E_1}{P_0+P_1}+\sum\nolimits^{k+1}_{i=2}P_iE_i\\
&\geq^1 (P_0+P_1)\frac{P_0E_0+P_1E_1}{P_0+P_1}+(Q_0-P_0+Q_1-P_1+Q_2) E_2+\sum\nolimits^{k+1}_{i=3}Q_iE_i\\
&\geq^2 (P_0+P_1)\frac{P_0E_0+P_1E_1}{P_0+P_1}+(Q_0-P_0)E_0+(Q_1-P_1)E_1+\sum\nolimits^{k+1}_{i=2}Q_iE_i\\
&=\sum\nolimits^{k+1}_{i=0}Q_iE_i,
\end{aligned}
\end{equation}
where the `$\mathop{\geq}^1$' is by applying inductive hypothesis as the `$\mathop{\geq}^{1}$' in case (1) except $E'_0=\frac{P_0E_0+P_1E_1}{P_0+P_1}$, $Q'_0=P_0+P_1$, $Q'_1=Q_0-P_0+Q_1-P_1+Q_2$ here, and the `$\mathop{\geq}^2$' is by $Q_0 \geq P_0$, $Q_1 \geq P_1$ and $E_2 >E_1>E_0$.

{\bf{(c) Conclusion}} According to (a) and (b), the lemma holds.
\end{myproofd}

\begin{myproofd}{Lemma \ref{CFHT_Trap}}
First, $\mathbb{E}_1(0)< \mathbb{E}_1(1)$ trivially holds, because $\mathbb{E}_1(0)=0$ and $\mathbb{E}_1(1)>0$. Then, we prove $\forall\; 0 < j <n:\mathbb{E}_1(j)<\mathbb{E}_1(j+1)$ inductively on $j$.

{\bf (a) Initialization} is to prove $\mathbb{E}_1(n-1) < \mathbb{E}_1(n)$. For $\mathbb{E}_1(n)$, because the next solution can be only $1^n$ or $0^n$, we have $\mathbb{E}_1(n)=1+(1-(1-p^n)^{\lambda})\mathbb{E}_1(0)+(1-p^n)^{\lambda}\mathbb{E}_1(n)$, then,
$\mathbb{E}_1(n)=1/(1-(1-p^n)^{\lambda})$. For $\mathbb{E}_1(n-1)$, because the next solution can be $1^n$, $0^n$ or a solution with $n-1$ number of 0 bits, we have $
\mathbb{E}_1(n-1)=1+(1-(1-p^{n-1}(1-p))^{\lambda})\mathbb{E}_1(0)+P\cdot\mathbb{E}_1(n)+((1-p^{n-1}(1-p))^{\lambda}-P)\mathbb{E}_1(n-1)$, where $P$ denotes the probability that the next solution is $0^n$. Then,
$\mathbb{E}_1(n-1)=(1+P\mathbb{E}_1(n))/(1-(1-p^{n-1}(1-p))^{\lambda}+P)$.
Thus, we have
$$\frac{\mathbb{E}_1(n-1)}{\mathbb{E}_1(n)}=\frac{1-(1-p^n)^{\lambda}+P}{1-(1-p^{n-1}(1-p))^{\lambda}+P}< 1,$$
where the inequality is by $0<p<0.5$.

{\bf (b) Inductive Hypothesis} assumes that
$$
\forall\; K< j \leq n-1 (K\geq 1): \mathbb{E}_1(j)< \mathbb{E}_1(j+1).
$$

Then, we consider $j=K$. Let $x$ and $x'$ be a solution with $K+1$ number of 0 bits and that with $K$ number of 0 bits, respectively. Then, we have $\mathbb{E}_1(K+1)=\expect{\tau'\mid \xi'_0=x}$ and $\mathbb{E}_1(K)=\expect{\tau'\mid \xi'_0=x'}$.

For the solution $x$, we divide the mutation on $x$ into two parts: mutation on one 0 bit and mutation on the $n-1$ remaining bits. The $n-1$ remaining bits contain $K$ number of 0 bits since $|x|_0=K+1$. Let $P^j_0$ and $P^j_i$ $(1 \leq i \leq n)$ be the probability that for the $\lambda$ offspring solutions under the condition that the 0 bit in the first mutation part is flipped by $j$ ($0\leq j \leq \lambda$) times in the $\lambda$ mutations, the least number of 0 bits is $0$, and the largest number of 0 bits is $i$ while the least number of 0 bits is larger than $0$, respectively. By considering the mutation and selection behavior of the (1+$\lambda$)-EA on the I$_{hardest}$ problem, we have, assuming that $\lambda$ is even,
\begin{equation}
\begin{aligned}
\mathbb{E}_1(K+1)
&=1\\
j: 0 \rightarrow \frac{\lambda}{2}-1&\begin{cases}
+& \cdots\\
+& \binom{\lambda}{j}p^{j}(1-p)^{\lambda-j}\cdot (P^{j}_0\mathbb{E}_1(0)+\sum\nolimits^{K}_{i=1}P^{j}_{i}\mathbb{E}_1(K+1)+\sum\nolimits^{n}_{i=K+1}P^{j}_{i}\mathbb{E}_1(i))\\
+& \cdots\\
\end{cases}\\
&+\binom{\lambda}{\lambda/2}p^{\frac{\lambda}{2}}(1-p)^{\frac{\lambda}{2}}\cdot (P^{\frac{\lambda}{2}}_0\mathbb{E}_1(0)+\sum\nolimits^{K}_{i=1}P^{\frac{\lambda}{2}}_{i}\mathbb{E}_1(K+1)+\sum\nolimits^{n}_{i=K+1}P^{\frac{\lambda}{2}}_{i}\mathbb{E}_1(i))\\
j: \frac{\lambda}{2}-1 \rightarrow 0&\begin{cases}
+& \cdots\\
+& \binom{\lambda}{\lambda-j}p^{\lambda-j}(1-p)^{j}\cdot (P^{\lambda-j}_0\mathbb{E}_1(0)+\sum\nolimits^{K}_{i=1}P^{\lambda-j}_{i}\mathbb{E}_1(K+1)+\sum\nolimits^{n}_{i=K+1}P^{\lambda-j}_{i}\mathbb{E}_1(i))\\
+& \cdots,
\end{cases}
\end{aligned}
\end{equation}
where the term $\binom{\lambda}{j}p^{j}(1-p)^{\lambda-j}$ ($0\leq j \leq \lambda$) is the probability that the 0 bit in the first mutation part is flipped by $j$ times in the $\lambda$ mutations.

For the solution $x'$, we also divide the mutation on $x'$ into two parts: mutation on one 1 bit and mutation on the $n-1$ remaining bits. The $n-1$ remaining bits also contain $K$ number of 0 bits since $|x'|_0=K$. Note that, the $P^j_0$ and $P^j_i$ $(1 \leq i \leq n)$ defined above are actually also the probability that for the $\lambda$ offspring solutions under the condition that the 1 bit in the first mutation part is flipped by $\lambda-j$ ($0\leq j \leq \lambda$) times in the $\lambda$ mutations, the least number of 0 bits is $0$, and the largest number of 0 bits is $i$ while the least number of 0 bits is larger than $0$, respectively. Then, we have
\begin{aligna}
\mathbb{E}_1(K)
&=1\\
j: 0 \rightarrow \frac{\lambda}{2}-1&\begin{cases}
+& \cdots\\
+& \binom{\lambda}{j}p^{j}(1-p)^{\lambda-j}\cdot (P^{\lambda-j}_0\mathbb{E}_1(0)+\sum\nolimits^{K}_{i=1}P^{\lambda-j}_{i}\mathbb{E}_1(K)+\sum\nolimits^{n}_{i=K+1}P^{\lambda-j}_{i}\mathbb{E}_1(i))\\
+& \cdots\\
\end{cases}\\
&+\binom{\lambda}{\lambda/2}p^{\frac{\lambda}{2}}(1-p)^{\frac{\lambda}{2}}\cdot (P^{\frac{\lambda}{2}}_0\mathbb{E}_1(0)+\sum\nolimits^{K}_{i=1}P^{\frac{\lambda}{2}}_{i}\mathbb{E}_1(K)+\sum\nolimits^{n}_{i=K+1}P^{\frac{\lambda}{2}}_{i}\mathbb{E}_1(i))\\
j: \frac{\lambda}{2}-1 \rightarrow 0&\begin{cases}
+& \cdots\\
+& \binom{\lambda}{\lambda-j}p^{\lambda-j}(1-p)^{j}\cdot (P^{j}_0\mathbb{E}_1(0)+\sum\nolimits^{K}_{i=1}P^{j}_{i}\mathbb{E}_1(K)+\sum\nolimits^{n}_{i=K+1}P^{j}_{i}\mathbb{E}_1(i))\\
+& \cdots,
\end{cases}
\end{aligna}
where the term $\binom{\lambda}{j}p^{j}(1-p)^{\lambda-j}$ ($0\leq j \leq \lambda$) is the probability that the 1 bit in the first mutation part is flipped by $j$ times in the $\lambda$ mutations.

From the above two equalities, we have
\begin{aligna}
&\mathbb{E}_1(K+1)-\mathbb{E}_1(K)=\\
j: 0\rightarrow \frac{\lambda}{2}-1 &\begin{cases}
&\cdots\\
&\begin{cases}
+&\binom{\lambda}{j}p^{j}(1-p)^{\lambda-j} \cdot \big(P^j_0 \mathbb{E}_1(0)-P^{\lambda-j}_0 \mathbb{E}_1(0)+\sum\limits^{n}_{i=K+1}P^{j}_i\mathbb{E}_1(i)-\sum\limits^{n}_{i=K+1}P^{\lambda-j}_i\mathbb{E}_1(i)\\
&+\sum\limits^{K}_{i=1}P^{j}_{i}\mathbb{E}_1(K+1)-\sum\limits^{K}_{i=1}P^{\lambda-j}_{i}\mathbb{E}_1(K+1)+\sum\limits^{K}_{i=1}P^{\lambda-j}_{i}\mathbb{E}_1(K+1)-\sum\limits^{K}_{i=1}P^{\lambda-j}_{i}\mathbb{E}_1(K) \big)\\
\end{cases}\\
&+\cdots\\
\end{cases}\\
&\quad+\binom{\lambda}{\lambda/2}p^{\frac{\lambda}{2}}(1-p)^{\frac{\lambda}{2}}\cdot\big(\sum\nolimits^{K}_{i=1}P^{\frac{\lambda}{2}}_i(\mathbb{E}_1(K+1)-\mathbb{E}_1(K))\big)\\
j: \frac{\lambda}{2}-1 \rightarrow 0&\begin{cases}
&+\cdots\\
&\begin{cases}
+&\binom{\lambda}{\lambda-j}p^{\lambda-j}(1-p)^{j}\cdot \big(P^{\lambda-j}_0 \mathbb{E}_1(0)-P^{j}_0 \mathbb{E}_1(0)+\sum\limits^{n}_{i=K+1}P^{\lambda-j}_i\mathbb{E}_1(i)-\sum\limits^{n}_{i=K+1}P^{j}_i\mathbb{E}_1(i)\\
&+\sum\limits^{K}_{i=1}P^{\lambda-j}_{i}\mathbb{E}_1(K+1)-\sum\limits^{K}_{i=1}P^{j}_{i}\mathbb{E}_1(K+1)+\sum\limits^{K}_{i=1}P^{j}_{i}\mathbb{E}_1(K+1)-\sum\limits^{K}_{i=1}P^{j}_{i}\mathbb{E}_1(K) \big)\\
\end{cases}\\
&+\cdots\\
\end{cases}\\
&=(\text{by combining the $j$-th and the $(\lambda-j)$-th term})\\
j: 0\rightarrow \frac{\lambda}{2}-1 &\begin{cases}
&\cdots\\
&\begin{cases}
+&\big(\binom{\lambda}{j}p^{j}(1-p)^{\lambda-j}-\binom{\lambda}{\lambda-j}p^{\lambda-j}(1-p)^{j}\big) \cdot \big(P^j_0 \mathbb{E}_1(0)+\sum\limits^{K+1}_{i=1}P^{j}_{i}\mathbb{E}_1(K+1)\\
&+\sum\limits^{n}_{i=K+2}P^{j}_i\mathbb{E}_1(i)-P^{\lambda-j}_0 \mathbb{E}_1(0)-\sum\limits^{K+1}_{i=1}P^{\lambda-j}_{i}\mathbb{E}_1(K+1)-\sum\limits^{n}_{i=K+2}P^{\lambda-j}_i\mathbb{E}_1(i)\big)\\
+&\binom{\lambda}{j}p^{j}(1-p)^{\lambda-j}\cdot\big(\sum\limits^{K}_{i=1}P^{\lambda-j}_{i}(\mathbb{E}_1(K+1)-\mathbb{E}_1(K))\big)\\
+&\binom{\lambda}{\lambda-j}p^{\lambda-j}(1-p)^{j}\cdot\big(\sum\limits^{K}_{i=1}P^{j}_{i}(\mathbb{E}_1(K+1)-\mathbb{E}_1(K))\big)\\
\end{cases}\\
&+\cdots\\
\end{cases}\\
&\quad+\binom{\lambda}{\lambda/2}p^{\frac{\lambda}{2}}(1-p)^{\frac{\lambda}{2}}\cdot\big(\sum\nolimits^{K}_{i=1}P^{\frac{\lambda}{2}}_i(\mathbb{E}_1(K+1)-\mathbb{E}_1(K))\big).
\end{aligna}

Then, we are to investigate the relation between $\sum^k_{i=0} P^j_i$ and $\sum^k_{i=0} P^{\lambda-j}_i$ for $0\leq j\leq \frac{\lambda}{2}-1$. Let $m$ ($0 \leq m\leq n-1$) denote the number of 0 bits after bit-wise mutation on a Boolean string of length $n-1$ with $K$ number of 0 bits. For the $\lambda$ independent mutations, we use $m_1,\ldots,m_{\lambda}$, respectively. By the definition of $P^j_i$, we know that there are $j$ number of 1 bits in the first mutation part, since $j$ 0 bits are flipped in the $\lambda$ mutations. Under this condition, $\sum^k_{i=0} P^j_i$ is the probability that for the $\lambda$ offspring solutions, the least number of 0 bits is 0, or the least number of 0 bits is larger than 0 while the largest number of 0 bits is not larger than $k$. We assume that the $j$ number of 1 bits in the first mutation part correspond to $m_1,\ldots,m_j$. Thus, we have
\begin{aligna}
\sum^k_{i=0} P^j_i=&P\big( m_1=0 \vee \ldots \vee m_j=0 \\
&\vee (0 <m_1 \leq k \wedge \ldots \wedge 0<m_j \leq k \wedge m_{j+1} \leq k-1 \wedge \ldots \wedge m_{\lambda} \leq k-1)\big),
\end{aligna}
and
\begin{aligna}
\sum^k_{i=0} P^{\lambda-j}_i=&P\big(m_1=0 \vee \ldots \vee m_{\lambda-j}=0 \\
& \vee (0<m_1 \leq k \wedge \ldots \wedge 0<m_{\lambda-j} \leq k \wedge m_{\lambda-j+1} \leq k-1 \wedge \ldots \wedge m_{\lambda} \leq k-1)\big)\\
\geq &P\big( m_1=0 \vee \ldots \vee m_j=0 \vee (0 <m_1 \leq k \wedge \ldots \wedge 0<m_j \leq k\\
& \wedge m_{j+1} \leq k \wedge \ldots \wedge m_{\lambda-j} \leq k \wedge m_{\lambda-j+1} \leq k-1 \wedge \ldots \wedge m_{\lambda} \leq k-1)\big).
\end{aligna}
Then, we have
\begin{aligna}\label{trap-prob1}
&\forall 0 \leq k \leq n-1, \quad \sum^k_{i=0} P^j_i\leq \sum^k_{i=0} P^{\lambda-j}_i.
\end{aligna}

By Lemma \ref{lemma_analysis_condition}, we can get
$$
P^j_0 \mathbb{E}_1(0)+\sum\limits^{K+1}_{i=1}P^{j}_{i}\mathbb{E}_1(K+1)+\sum\limits^{n}_{i=K+2}P^{j}_i\mathbb{E}_1(i)\geq P^{\lambda-j}_0\mathbb{E}_1(0)+\sum\limits^{K+1}_{i=1}P^{\lambda-j}_{i}\mathbb{E}_1(K+1)+\sum\limits^{n}_{i=K+2}P^{\lambda-j}_i\mathbb{E}_1(i).
$$
The three conditions of Lemma \ref{lemma_analysis_condition} can be easily verified, because $\mathbb{E}_1(0)=0<\mathbb{E}_1(K+1)<\ldots<\mathbb{E}_1(n)$ by inductive hypothesis; $\sum^{n}_{i=0} P^j_i= \sum^{n}_{i=0} P^{\lambda-j}_i=1$; and Eq.\refeq{trap-prob1} holds.

By the above inequality and $p<0.5$, we have
\begin{aligna}
&\mathbb{E}_1(K+1)-\mathbb{E}_1(K)> (\sum^{\lambda}_{j=0}\binom{\lambda}{j}p^{j}(1-p)^{\lambda-j}\sum\limits^{K}_{i=1}P^{\lambda-j}_i)\cdot \big(\mathbb{E}_1(K+1)-\mathbb{E}_1(K)\big).
\end{aligna}

Because $\sum^{\lambda}_{j=0}\binom{\lambda}{j}p^{j}(1-p)^{\lambda-j}\sum\limits^{K}_{i=1}P^{\lambda-j}_i<\sum^{\lambda}_{j=0}\binom{\lambda}{j}p^{j}(1-p)^{\lambda-j}=1$, we have $\mathbb{E}_1(K+1)>\mathbb{E}_1(K)$.

For the case that $\lambda$ is odd, we can prove it similarly.

{\bf{(c) Conclusion}} According to (a) and (b), the lemma holds.
\end{myproofd}

\begin{myproofd}{Lemma \ref{CFHT_OneMax}}
We prove $\forall \;0 \leq j <n:\mathbb{E}_2(j)<\mathbb{E}_2(j+1)$ inductively on $j$.

{\bf (a) Initialization} is to prove $\mathbb{E}_2(0) < \mathbb{E}_2(1)$, which trivially holds since $\mathbb{E}_2(1)>0=\mathbb{E}_2(0)$.

{\bf (b) Inductive Hypothesis} assumes that
$$
\forall \; 0 \leq j < K (K\leq n-1): \mathbb{E}_2(j)<\mathbb{E}_2(j+1).
$$

Then, we consider $j=K$. When comparing $\mathbb{E}_2(K+1)$ with $\mathbb{E}_2(K)$, we use the similar analysis method as that in the proof of Lemma \ref{CFHT_Trap}. Let $P^j_i \; (0 \leq i \leq n)$ be the probability that the least number of 0 bits for the $\lambda$ offspring solutions is $i$ under the condition that the 0 bit in the first mutation part is flipped by $j$ ($0 \leq j \leq \lambda$) times in the $\lambda$ mutations. Then, by considering the mutation and selection behavior of the (1+$\lambda$)-EA on the I$_{easiest}$ problem, we have, assuming that $\lambda$ is even,
\begin{aligna}
\mathbb{E}_2(K+1)
&=1\\
j: 0 \rightarrow \frac{\lambda}{2}-1&\begin{cases}
+& \cdots\\
+& \binom{\lambda}{j}p^{j}(1-p)^{\lambda-j}\cdot (\sum\nolimits^{K}_{i=0}P^{j}_i\mathbb{E}_2(i)+\sum\nolimits^{n}_{i=K+1}P^{j}_{i}\mathbb{E}_2(K+1))\\
+& \cdots\\
\end{cases}\\
&+\binom{\lambda}{\lambda/2}p^{\frac{\lambda}{2}}(1-p)^{\frac{\lambda}{2}}\cdot (\sum\nolimits^{K}_{i=0}P^{\frac{\lambda}{2}}_i\mathbb{E}_2(i)+\sum\nolimits^{n}_{i=K+1}P^{\frac{\lambda}{2}}_{i}\mathbb{E}_2(K+1))\\
j: \frac{\lambda}{2}-1 \rightarrow 0&\begin{cases}
+& \cdots\\
+& \binom{\lambda}{\lambda-j}p^{\lambda-j}(1-p)^{j}\cdot (\sum\nolimits^{K}_{i=0}P^{\lambda-j}_i\mathbb{E}_2(i)+\sum\nolimits^{n}_{i=K+1}P^{\lambda-j}_{i}\mathbb{E}_2(K+1))\\
+& \cdots,
\end{cases}
\end{aligna}
and
\begin{aligna}
\mathbb{E}_2(K)&=1\\
j: 0 \rightarrow \frac{\lambda}{2}-1&\begin{cases}
+& \cdots\\
+& \binom{\lambda}{j}p^{j}(1-p)^{\lambda-j}\cdot (\sum\nolimits^{K}_{i=0}P^{\lambda-j}_i\mathbb{E}_2(i)+\sum\nolimits^{n}_{i=K+1}P^{\lambda-j}_{i}\mathbb{E}_2(K))\\
+& \cdots\\
\end{cases}\\
&+\binom{\lambda}{\lambda/2}p^{\frac{\lambda}{2}}(1-p)^{\frac{\lambda}{2}}\cdot (\sum\nolimits^{K}_{i=0}P^{\frac{\lambda}{2}}_i\mathbb{E}_2(i)+\sum\nolimits^{n}_{i=K+1}P^{\frac{\lambda}{2}}_{i}\mathbb{E}_2(K))\\
j: \frac{\lambda}{2}-1 \rightarrow 0&\begin{cases}
+& \cdots\\
+& \binom{\lambda}{\lambda-j}p^{\lambda-j}(1-p)^{j}\cdot (\sum\nolimits^{K}_{i=0}P^{j}_i\mathbb{E}_2(i)+\sum\nolimits^{n}_{i=K+1}P^{j}_{i}\mathbb{E}_2(K))\\
+& \cdots.
\end{cases}
\end{aligna}

From the above two equalities, we have
\begin{aligna}
&\mathbb{E}_2(K+1)-\mathbb{E}_2(K)=\\
j: 0 \rightarrow \frac{\lambda}{2}-1 &\begin{cases}
&\cdots\\
&\begin{cases}
+&\binom{\lambda}{j}p^{j}(1-p)^{\lambda-j} \cdot \big(\sum\limits^{K}_{i=0}P^{j}_i\mathbb{E}_2(i)-\sum\limits^{K}_{i=0}P^{\lambda-j}_i\mathbb{E}_2(i)+\sum\limits^{n}_{i=K+1}P^{j}_{i}\mathbb{E}_2(K+1)\\
&-\sum\limits^{n}_{i=K+1}P^{j}_{i}\mathbb{E}_2(K)+\sum\limits^{n}_{i=K+1}P^{j}_{i}\mathbb{E}_2(K)-\sum\limits^{n}_{i=K+1}P^{\lambda-j}_{i}\mathbb{E}_2(K) \big)\\
\end{cases}\\
&+\cdots\\
\end{cases}\\
&\quad+\binom{\lambda}{\lambda/2}p^{\frac{\lambda}{2}}(1-p)^{\frac{\lambda}{2}}\cdot\big(\sum\nolimits^{n}_{i=K+1}P^{\frac{\lambda}{2}}_i(\mathbb{E}_2(K+1)-\mathbb{E}_2(K))\big)\\
j: \frac{\lambda}{2}-1 \rightarrow 0&\begin{cases}
&+\cdots\\
&\begin{cases}
+&\binom{\lambda}{\lambda-j}p^{\lambda-j}(1-p)^{j}\cdot \big(\sum\limits^{K}_{i=0}P^{\lambda-j}_i\mathbb{E}_2(i)-\sum\limits^{K}_{i=0}P^{j}_i\mathbb{E}_2(i)+\sum\limits^{n}_{i=K+1}P^{\lambda-j}_{i}\mathbb{E}_2(K+1)\\
&-\sum\limits^{n}_{i=K+1}P^{\lambda-j}_{i}\mathbb{E}_2(K)+\sum\limits^{n}_{i=K+1}P^{\lambda-j}_{i}\mathbb{E}_2(K)-\sum\limits^{n}_{i=K+1}P^{j}_{i}\mathbb{E}_2(K) \big)\\
\end{cases}\\
&+\cdots\\
\end{cases}\\
&= (\text{by combining the $j$-th and the $(\lambda-j)$-th term})\\
j: 0\rightarrow \frac{\lambda}{2}-1 &\begin{cases}
&\cdots\\
&\begin{cases}
+&\big(\binom{\lambda}{j}p^{j}(1-p)^{\lambda-j}-\binom{\lambda}{\lambda-j}p^{\lambda-j}(1-p)^{j}\big) \cdot \big(\sum\limits^{K-1}_{i=0}P^{j}_i\mathbb{E}_2(i)+\sum\limits^{n}_{i=K}P^{j}_{i}\mathbb{E}_2(K)\\
&-\sum\limits^{K-1}_{i=0}P^{\lambda-j}_i\mathbb{E}_2(i)-\sum\limits^{n}_{i=K}P^{\lambda-j}_{i}\mathbb{E}_2(K)\big)\\
+&\binom{\lambda}{j}p^{j}(1-p)^{\lambda-j}\cdot\big(\sum\limits^{n}_{i=K+1}P^{j}_{i}(\mathbb{E}_2(K+1)-\mathbb{E}_2(K))\big)\\
+&\binom{\lambda}{\lambda-j}p^{\lambda-j}(1-p)^{j}\cdot\big(\sum\limits^{n}_{i=K+1}P^{\lambda-j}_{i}(\mathbb{E}_2(K+1)-\mathbb{E}_2(K))\big)\\
\end{cases}\\
&+\cdots\\
\end{cases}\\
&\quad+\binom{\lambda}{\lambda/2}p^{\frac{\lambda}{2}}(1-p)^{\frac{\lambda}{2}}\cdot\big(\sum\nolimits^{n}_{i=K+1}P^{\frac{\lambda}{2}}_i(\mathbb{E}_2(K+1)-\mathbb{E}_2(K))\big).
\end{aligna}

Then, we are to investigate the relation between $\sum^k_{i=0} P^j_i$ and $\sum^k_{i=0} P^{\lambda-j}_i$ for $0\leq j\leq \frac{\lambda}{2}-1$. Let $m$ ($0 \leq m\leq n-1$) denote the number of 0 bits after bit-wise mutation on a Boolean string of
length $n-1$ with $K$ number of 0 bits. For the $\lambda$ independent mutations, we use $m_2,\ldots,m_{\lambda}$, respectively. By the definition of $P^j_i$, we know that there are $j$ number of 1 bits in the first mutation part. Under this condition, $\sum^k_{i=0} P^j_i$ is the probability that the least number of 0 bits for the $\lambda$ offspring solutions is not larger than $k$. We assume that the $j$ number of 1 bits in the first mutation part correspond to $m_1,\ldots,m_j$. Thus, we have
$$\sum^k_{i=0} P^j_i=P(m_1 \leq k \vee \ldots \vee m_j \leq k \vee m_{j+1} \leq k-1 \vee \ldots \vee m_{\lambda} \leq k-1),$$
and
$$\sum^k_{i=0} P^{\lambda-j}_i=P(m_1 \leq k \vee \ldots \vee m_{\lambda-j} \leq k \vee m_{\lambda-j+1} \leq k-1 \vee \ldots \vee m_{\lambda} \leq k-1).$$
Since $0\leq j\leq \frac{\lambda}{2}-1$, we have
\begin{aligna}\label{onemax-prob1}
&\forall 0 \leq k \leq n-1, \quad \sum^k_{i=0} P^j_i\leq \sum^k_{i=0} P^{\lambda-j}_i.
\end{aligna}

By Lemma \ref{lemma_analysis_condition}, we can get
$$
\sum\limits^{K-1}_{i=0}P^{j}_i\mathbb{E}_2(i)+\sum\limits^{n}_{i=K}P^{j}_{i}\mathbb{E}_2(K) \geq \sum\limits^{K-1}_{i=0}P^{\lambda-j}_i\mathbb{E}_2(i)+\sum\limits^{n}_{i=K}P^{\lambda-j}_{i}\mathbb{E}_2(K).
$$
The three conditions of Lemma \ref{lemma_analysis_condition} can be easily verified, because $\mathbb{E}_2(0)<\mathbb{E}_2(1)<\ldots<\mathbb{E}_2(K)$ by inductive hypothesis; $\sum^{n}_{i=0} P^j_i= \sum^{n}_{i=0} P^{\lambda-j}_i=1$; and Eq.\refeq{onemax-prob1} holds.

By the above inequality and $p<0.5$, we have
\begin{aligna}
&\mathbb{E}_2(K+1)-\mathbb{E}_2(K)> (\sum^{\lambda}_{j=0}\binom{\lambda}{j}p^{j}(1-p)^{\lambda-j}\sum\limits^{n}_{i=K+1}P^j_i)\cdot \big(\mathbb{E}_2(K+1)-\mathbb{E}_2(K)\big).
\end{aligna}

Because $\sum^{\lambda}_{j=0}\binom{\lambda}{j}p^{j}(1-p)^{\lambda-j}\sum\limits^{n}_{i=K+1}P^j_i<\sum^{\lambda}_{j=0}\binom{\lambda}{j}p^{j}(1-p)^{\lambda-j}=1$, we have $\mathbb{E}_2(K+1)>\mathbb{E}_2(K)$.

For the case that $\lambda$ is odd, we can prove it similarly.

{\bf{(c) Conclusion}} According to (a) and (b), the lemma holds.
\end{myproofd}

\end{document}